\documentclass{article}
\usepackage[accepted]{icml2026}
\usepackage{amsmath,amssymb,amsthm,graphicx,booktabs}
\usepackage{algorithm}
\usepackage{algorithmic}
\usepackage{natbib}
\usepackage[T1]{fontenc}
\usepackage{bbm}
\usepackage{hyperref}
\usepackage{xurl}
\usepackage{float}
\usepackage{multicol}
\theoremstyle{plain}
\newtheorem{theorem}{Theorem}
\newtheorem{proposition}[theorem]{Proposition}
\newtheorem{lemma}[theorem]{Lemma}
\newtheorem{corollary}[theorem]{Corollary}
\newtheorem{assumption}[theorem]{Assumption}
\newtheorem{remark}[theorem]{Remark}

\numberwithin{theorem}{section}
\numberwithin{equation}{section}

\DeclareMathOperator{\Var}{Var}
\DeclareMathOperator{\Cov}{Cov}

\DeclareMathOperator{\E}{\mathbb{E}}

\begin{document}

\twocolumn[
\icmltitle{Variance Driven Exploration: A Provable and Efficient Methodology for Pure Exploration in Highly Stochastic Environments}
\begin{icmlauthorlist}
\icmlauthor{Khang Luong}{hust}
\icmlauthor{Nam Nguyen}{hust}
\icmlauthor{Hoang Ta}{hust}
\icmlauthor{Hung The Tran}{fpt}
\icmlauthor{Tuan Dam}{hust}
\end{icmlauthorlist}
\icmlaffiliation{hust}{Hanoi University of Science and Technology, Hanoi, Vietnam}
\icmlaffiliation{fpt}{Quantum AI \& Cyber Security Institute, FPT Corporation, Vietnam}
\icmlcorrespondingauthor{Tuan Dam}{tuandq@soict.hust.edu.vn}
\icmlkeywords{Pure Exploration, Best-Arm Identification, Monte Carlo Tree Search, Reinforcement Learning, Best-Policy Identification}
\vskip 0.3in
]

\printAffiliationsAndNotice{}

\begin{abstract}
We propose \textit{\textbf{Var}iance \textbf{D}riven \textbf{E}xploration} (VarDE), a principled approach for pure exploration in \emph{highly stochastic environments}, where the exploration process is dominated by stochastic variance.
VarDE is built on a fundamental principle: \emph{sampling effort should be allocated to minimize the uncertainty of the final decision}.
We formalize the uncertainty of the final decision through a smooth decision function and derive allocation rules that explicitly capture how stochastic noise in individual components affects the reliability of the final output.
We apply this methodology to three core problems of pure exploration -- Best Arm Identification (BAI), Monte Carlo Tree Search (MCTS), and Best-Policy Identification (BPI) -- with theoretical guarantees on variance decay and simple regret.
Empirically, we demonstrate consistent and significant improvements of VarDE over existing methods, with especially strong gains in highly stochastic environments.
\end{abstract}

\section{Introduction}
\label{sec:introduction}
Many learning and planning problems in Reinforcement Learning reduce to a common goal: \emph{making a reliable decision from noisy interactions}.
In \emph{pure exploration} settings, the agent is not judged by its performance during data collection but only by the final output—e.g., selecting the best arm in bandits, choosing the best root action in planning, or returning a near-optimal policy in reinforcement learning. Consequently, the central challenge is not balancing exploration and exploitation for cumulative reward but \emph{allocating samples to minimize uncertainty in the final decision}.

A large class of successful algorithms typically allocates samples using optimistic bounds. In bandits, fixed-budget best-arm identification methods allocate pulls via confidence bounds or elimination rules~\citep{audibert2010best, gabillon2012best, wang2023best}. In planning, UCT-style Monte Carlo Tree Search~\citep{kocsis2006bandit, browne2012survey} uses optimism to prioritize promising actions. In episodic reinforcement learning, best-policy identification methods often rely on optimistic value estimates and confidence sets to guide exploration~\citep{menard2021fast}. While these approaches are powerful, they share a structural limitation: they primarily regulate local estimation error of each stochastic component without explicitly quantifying how uncertainty in each component propagates to the final returned decision.

In highly stochastic environments, optimism-based rules may be dominated by stochastic fluctuations and mistake noise for evidence, overlooking the true signal. Under heteroscedastic variance, a sample is not equally informative across components: high-variance components may remain poorly estimated even after many pulls, while low-variance components quickly appear reliable. Table~\ref{tab:stylized} gives a stylized three-arm snapshot that illustrates this failure mode. Arm~1 is the true best arm but is much noisier; after a number of pulls, its empirical mean can temporarily look worse than those of the two stable suboptimal arms. Empirical ranking may stop treating Arm 1 as competitive, even with a confidence bonus: $\hat\mu_1+\frac{1}{\sqrt{n_1}} = 0.42+\frac{1}{\sqrt{40}} < 0.58 < \hat\mu_2$, and thus stop sampling from it. The decision-relevant uncertainty, however, is concentrated in Arm~1, because additional samples from this arm are most likely to change the final recommendation. More generally, pure exploration decisions (e.g., an $\arg\max$ arm, a root action, or a greedy policy) are nonlinear functions of many estimates, so regulating \emph{local} errors or bonuses does not necessarily reduce uncertainty of the \emph{final} returned decision; this mismatch is amplified in planning and RL, where uncertainty propagates through value backups and can dominate the root choice.
An effective pure exploration strategy should therefore ask a more direct question: \emph{Which component, if sampled next, most reduces uncertainty in the final decision we will output?}

We propose \textit{\textbf{Var}iance \textbf{D}riven \textbf{E}xploration} (VarDE), a general methodology that addresses this question by directly targeting decision-level uncertainty.
VarDE formalizes the final decision as a (smooth) \emph{decision function} of local empirical estimates, and uses a first-order sensitivity analysis to quantify each component's influence on the output.
This yields a simple allocation principle: \emph{sample where the (first-order) expected decrement in decision variance is largest}.
The resulting rule is explicit, modular, and naturally balances two factors:
(i) how strongly the final decision depends on a component (an \emph{influence weight}), and
(ii) how noisy that component is (its empirical variance).

We instantiate VarDE in three canonical pure exploration settings:
\textbf{Best-Arm Identification} (BAI), \textbf{Monte Carlo Tree Search} (MCTS), and \textbf{Best-Policy Identification} (BPI).
Across these settings, VarDE provides a unified lens on exploration: rather than focusing on local confidence bounds, it allocates effort to directly reduce uncertainty in the returned decision.

\begin{table}[t]
\caption{A stylized heteroscedastic BAI snapshot. Arm~1 is optimal in expectation but has larger variance. Empirical ranking can favor the stable suboptimal arms, while decision-relevant uncertainty remains concentrated in Arm~1.}
\label{tab:stylized}
\centering
\scriptsize
\setlength{\tabcolsep}{2pt}
\begin{tabular}{@{}clcccc@{}}
\toprule
Arm & Distribution & true $\mu_i$ & true $\sigma_i$ & $n$ sample & empirical $\hat\mu_i$ \\
\midrule
\textbf{1} & Pr(X=2.0)=0.3, Pr(X=0.0)=0.7 & \textbf{0.60} & 0.92 & 40 & 0.42 \\
2 & Pr(X=0.58)=0.5, Pr(X=0.60)=0.5 & 0.59 & 0.01 & 40 & \textbf{0.59} \\
3 & Pr(X=0.55)=0.5, Pr(X=0.57)=0.5 & 0.56 & 0.01 & 40 & 0.56 \\
\bottomrule
\end{tabular}
\end{table}
\vspace{-0.5em}
\paragraph{Contributions.}
Our main contributions are:
\vspace{-0.5em}
\begin{itemize}
    \item \textbf{From decision-level variance analysis to a generic, implementable allocation rule.}
    In section \ref{sec:VarDE-methodology}, we introduce VarDE, a general methodology that treats the final recommendation as a smooth decision function of local empirical estimates, and derives an explicit \emph{influence-weighted} decomposition of decision uncertainty (Lemma \ref{lem:var-decomp}). From the variance decrement analysis (Proposition \ref{prop:global-var-decrement}), we obtain a simple greedy rule that selects the component maximizing the expected reduction in the decision variance, combining influence weights with empirical variances.

    \item \textbf{Instantiations across bandits, planning, and RL.}
    In section \ref{sec:applications}, we derive concrete algorithms \textsc{VarDE--BAI}, \textsc{VarDE--MCTS}, and \textsc{VarDE--Q-learning}, showing how influence weights and variance estimates can be computed in each setting.

    \item \textbf{Theory in stochastic regimes.}
    We provide a first-order constant characterization and a decision-variance decay guarantee (Theorem \ref{thm:variance-decay}), together with correctness guarantees for the final recommendation:
    the misidentification probability decays exponentially in BAI (Theorem \ref{thm:VarDE-bai-error-prob}) and MCTS (Corollary \ref{cor:VarDE-mcts-error-prob}), and \textsc{VarDE--Q-learning} converges to an optimal policy with probability 1 (Theorem \ref{thm:bpi-convergence}).

    \item \textbf{Empirical validation in stochastic environments.}
    Section \ref{sec:experiments} demonstrates consistent empirical improvements over strong baselines across bandits, planning, and RL benchmarks, with the largest gains in highly stochastic environments.
\end{itemize}

\section{Related Work}
\label{sec:related-work}

\paragraph{Best-arm identification (BAI).}
Fixed-budget BAI methods can be broadly grouped by how they control uncertainty.
A first family allocates pulls using \emph{confidence intervals} (CI)-derived indices: classic examples include \textsc{UCB-E}~\citep{audibert2010best} and more refined gap-based rules such as \textsc{UGapE}~\citep{gabillon2012best}, which use optimistic estimates to prioritize arms that are plausibly optimal.
A second family follows a \emph{successive reject / elimination} principle, progressively discarding arms and focusing the remaining budget on a shrinking candidate set; this includes \textsc{Successive Halving}~\citep{karnin2013almost} and \textsc{Successive Rejects}~\citep{audibert2010best}.
More recently, \textsc{Continuous Rejects}~\citep{wang2023best} refines this reject-based view via continuous-time/large-deviation-inspired allocation.

While both CI-based and reject-based designs are effective, they are largely driven by mean-based evidence (gaps, confidence radii, or reject schedules).
They do not explicitly model or optimize how heteroscedastic reward variance influences the reliability of the final decision.
In contrast, \textsc{VarDE--BAI} is explicitly variance-aware: it selects the next arm by maximizing a \emph{decision-variance decrement} of the form (influence)$\times$(variance), prioritizing samples that most reduce uncertainty in the final recommendation.

\paragraph{Monte Carlo Tree Search (MCTS).}
A standard approach to exploration in tree search is \textsc{UCT}~\citep{kocsis2006bandit}, which extends bandit optimism to internal nodes via UCB-style bonuses to balance exploitation of high-value actions and exploration of uncertain ones.
In highly stochastic domains, several alternatives encourage broader search by injecting randomness or regularization into action selection, such as maximum-entropy planning (\textsc{MENTS})~\citep{xiao2019maximum}, convex-regularized variants (\textsc{RENTS}/\textsc{TENTS}/\textsc{DENTS})~\citep{dam2021convex,unifiedMCTS}, and Boltzmann exploration in trees (\textsc{BTS})~\citep{painter2023monte}. A complementary line of work by Dam and coauthors replaces conventional mean backups with generalized power-mean operators in \textsc{Power-UCT}, subsequently extending this framework to stochastic and continuous-action tree search with finite-time guarantees~\citep{dam2020generalized,pmlr-v244-dam24a,pmlr-v267-dam25b}. They also develop uncertainty-aware and robust Monte-Carlo planning through optimal-transport propagation and robustness to model ambiguity~\citep{pmlr-v267-dam25c,dam2025online}.

These methods diversify exploration through optimism or entropy mechanisms, but they do not explicitly optimize how uncertainty along different edges influence the \emph{action recommendation}.
\textsc{VarDE--MCTS} differs by explicitly weighting edges by both their influence on the action decision and their empirical return variance.

\paragraph{Best-policy identification (BPI) in RL.}
Pure-exploration RL (and BPI in particular) has been studied from minimax and adaptive perspectives, with algorithms that provide strong finite-sample guarantees by explicitly reasoning over model uncertainty~\citep{menard2021fast,domingues2021episodic,marjani2021adaptive}.
Many of the best-theory approaches are \emph{model-based}: they maintain confidence sets for rewards/transitions and repeatedly solve (optimistic) planning problems to decide where to sample next~\citep{azar2017ucbvi,al2021navigating}.
While statistically powerful, these methods are often implementation-heavy and computationally demanding due to repeated dynamic programming / optimistic MDP solving and the need to manipulate confidence sets over $(r,P)$, which can scale poorly with $|\mathcal S|$ and $|\mathcal A|$~\citep{menard2021fast,al2021navigating}.
In contrast, \emph{model-free} baselines such as \textsc{Q-learning} (and $\varepsilon$-greedy variants), \textsc{Q-UCB}~\citep{jin2018q}, \textsc{PSRL}~\citep{osband2013more}, and recent fixed-budget model-free BPI methods~\citep{russo2023model} are comparatively simple and scalable, but can be brittle in highly stochastic environments where naive exploration is inefficient.
\textsc{VarDE} is complementary: it provides a lightweight, plug-in sampling rule that targets uncertainty of the \emph{final greedy decision} via a smooth surrogate and an empirical variance signal (TD-target variance), yielding a practical decision-aware alternative without requiring full model construction or repeated optimistic planning.

\section{Variance Driven Exploration}
\label{sec:VarDE-methodology}

\subsection{Preliminaries and evaluation}
\label{sec:prelim}

Throughout this paper, $[n]=\{1,2,\dots,n\}$, $\|\cdot\|_2$ denotes the Euclidean norm, $\|\cdot\|_{\mathrm{op}}$ denotes the operator norm, and $\|\cdot\|_\infty$ denotes the $\ell^\infty$ norm: $\|x\|_\infty := \max_i |x_i|$.

We formalize the principle of \textbf{VarDE} as a general statistical methodology for decision-level uncertainty minimization.
Let $X_1,\dots,X_n$ denote local stochastic components (arms, leaf values, or state--action returns), each with unknown mean $\mu_i$ and variance $\sigma_i^2<\infty$; for each component $i\in[n]$, we can collect i.i.d.\ samples to form the empirical mean $\hat{\mu}_i$.
Let $Y=f(\hat{\mu}_1,\dots,\hat{\mu}_n)$ denote a global scalar \emph{decision variable} depending on the empirical means $\hat{\mu}$, such as the root value in a planning tree or a smooth surrogate of the best-arm decision in BAI.

Our goal in this section is to derive a simple, unified sampling rule that greedily reduces the decision-level uncertainty $\Var[Y]$ by combining (i) how strongly each component influences $Y$ and (ii) how noisy that component is.

\paragraph{Roadmap.}
To make this principle actionable, we first relate $\Var[Y]$ to the component-wise estimation errors of $\hat\mu$.
Assumption~\ref{ass:smooth-f} allows a Taylor linearization of $f$ on the bounded empirical domain, which yields \emph{influence weights} $w_i(\mu)$.
We then decompose $\Var[Y]$ into an influence-weighted sum of local variances and derive a one-sample \emph{decision-variance decrement},
leading to the greedy VarDE sampling rule. Proofs are deferred to Appendix~\ref{appendix:sectionVarDE-methodology}.

\begin{assumption}[VarDE regularity conditions]
\label{ass:smooth-f}
Assume observations for every component lie in a common bounded interval $[a,b]$; hence $\mu$ and every empirical vector $\hat\mu(t)$ lie in the compact convex domain $D:=[a,b]^n$. Assume the decision function $f:\mathbb{R}^n\to\mathbb{R}$ is twice continuously differentiable on an open convex set containing $D$, and that its Hessian is uniformly bounded and Lipschitz on $D$: there exist constants $M,L<\infty$ such that, for all $x,y\in D$,
\[
\|H_f(x)\|_{\mathrm{op}}\le M,
\qquad
\|H_f(x)-H_f(y)\|_{\mathrm{op}}\le L\|x-y\|_2 .
\]
\end{assumption}

\subsection{Influence Weights}

\begin{lemma}[Local linear approximation]\label{lem:linear-approx}
Under Assumption~\ref{ass:smooth-f}, for any $\hat{\mu}\in D=[a,b]^n$,
\[
f(\hat{\mu}) = f(\mu) + \nabla f(\mu)^\top(\hat{\mu}-\mu)
+ \mathcal{O}\!\left(M \|\hat{\mu}-\mu\|_2^2\right).
\]
\end{lemma}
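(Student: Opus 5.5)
The plan is a second-order Taylor expansion with integral remainder along the segment from $\mu$ to $\hat\mu$, using the convexity and boundedness built into Assumption~\ref{ass:smooth-f}. Fix $\hat\mu\in D$ and set $\Delta:=\hat\mu-\mu$. Since $D=[a,b]^n$ is convex and contains both $\mu$ and $\hat\mu$, the whole segment $\gamma(s)=\mu+s\Delta$, $s\in[0,1]$, lies in $D$. Define $g(s):=f(\gamma(s))$. Because $f$ is $C^2$ on an open convex set containing $D$, $g$ is $C^2$ on a neighbourhood of $[0,1]$, with
\[
g'(s)=\nabla f(\gamma(s))^\top\Delta,\qquad g''(s)=\Delta^\top H_f(\gamma(s))\Delta .
\]

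Next apply Taylor's theorem with integral remainder to $g$ on $[0,1]$:
\[
f(\hat\mu)=f(\mu)+\nabla f(\mu)^\top\Delta+\int_0^1(1-s)\,\Delta^\top H_f(\mu+s\Delta)\Delta\,ds .
\]
The mean-value (Lagrange) form would serve equally well. Denote the integral by $R(\hat\mu)$.

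To bound the remainder, use the uniform Hessian bound on $D$. Since every point $\mu+s\Delta$ lies in $D$, we have $|\Delta^\top H_f(\mu+s\Delta)\Delta|\le \|H_f(\mu+s\Delta)\|_{\mathrm{op}}\|\Delta\|_2^2\le M\|\Delta\|_2^2$. Integrating against the weight $(1-s)$, whose integral is $1/2$, gives
\[
|R(\hat\mu)|\le \tfrac{M}{2}\|\hat\mu-\mu\|_2^2 .
\]
This is exactly the claimed $\mathcal O(M\|\hat\mu-\mu\|_2^2)$, with explicit constant $1/2$ and uniformly over $D$. The Lipschitz constant $L$ is not needed here; it would only enter a third-order refinement used in later results.

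The only point needing care is ensuring the Hessian bound applies along the entire segment, not just at the endpoints. Convexity of $D$ handles this. Boundedness of the observations then guarantees that $\hat\mu$ is always in $D$, so the estimate holds deterministically for every empirical vector $\hat\mu(t)$. I would state the bound in this explicit form, since the subsequent variance decomposition (Lemma~\ref{lem:var-decomp}) will need to take expectations of the remainder.
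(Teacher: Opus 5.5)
Your proposal is correct and matches the paper's proof: both expand $f$ to second order along the segment $[\mu,\hat\mu]\subset D$ and bound the remainder by $\frac{M}{2}\|\hat\mu-\mu\|_2^2$ using the operator-norm Hessian bound. The paper uses the Lagrange form with an intermediate point $\xi$ where you use the integral form, and, as you note, the Lipschitz constant $L$ is only needed later in the proof of Lemma~\ref{lem:var-decomp}.
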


Neglecting the quadratic remainder in Lemma~\ref{lem:linear-approx} yields the first-order approximation
\[
Y = f(\hat\mu)
\approx c + \sum_{i=1}^n w_i(\mu)\,\hat\mu_i,
\]
where $w(\mu):=\nabla f(\mu)$, and $c=f(\mu) - w(\mu)^\top\,\mu$.

We call $w_i(\mu)$ the \emph{influence weights}, as they quantify how sensitive the global decision variable $Y=f(\hat{\mu})$ is to small perturbations in the local estimate $\hat{\mu}_i$.
We use the following nondegeneracy assumption.
\begin{assumption}[Nonvanishing influence weights]
\label{ass:nonvanishing-influence}
There exists $\rho>0$ such that
\begin{equation}
\label{cond:nonvanishing-influence}
|w_i(x)|\ge \rho,\qquad \forall x\in D,\;\forall i\in[n].
\end{equation}
\end{assumption}

\subsection{Variance Analysis}
Using influence weights, we can decompose the variance of the decision variable:

\begin{lemma}[First-order variance decomposition]
\label{lem:var-decomp}
Under Assumption~\ref{ass:smooth-f}, the variance floor, Assumption~\ref{ass:nonvanishing-influence}, and independence across components, let $Y_t=f(\hat\mu(t))$ denote the decision variable after $t$ total samples, then
\[
\Var[Y_t]
= \sum_{i=1}^n w_i(\mu)^2\frac{\sigma_i^2}{N_i(t)} + O(t^{-2}).
\]
\end{lemma}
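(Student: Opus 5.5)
The plan is to expand $Y_t=f(\hat\mu(t))$ around $\mu$ to second order with an explicit third-order remainder, and then compute the variance term by term. Write $\Delta:=\hat\mu(t)-\mu$. Assumption~\ref{ass:smooth-f} gives a Hessian that is bounded and Lipschitz on the convex set $D$, so Taylor's theorem yields
\[
Y_t=f(\mu)+w(\mu)^\top\Delta+\tfrac12\Delta^\top H_f(\mu)\Delta+R_3,\qquad |R_3|\le \tfrac{L}{6}\|\Delta\|_2^3 .
\]
This refines Lemma~\ref{lem:linear-approx}, whose remainder alone is too coarse to pin down the $O(t^{-2})$ term. I will treat the allocation $N_i(t)$ as deterministic, or condition on it, and assume each $N_i(t)\ge c\,t$ for some $c>0$. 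Presumably this is what the ``variance floor'' together with Assumption~\ref{ass:nonvanishing-influence} is meant to guarantee, since every component keeps a nonvanishing marginal variance reduction and so keeps being sampled. Under this assumption each $N_i(t)^{-1}=O(t^{-1})$.

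\textbf{Step 1 (leading term).} Independence across components and i.i.d.\ sampling within each component give $\Var[w^\top\Delta]=\sum_i w_i^2\sigma_i^2/N_i(t)$. This is exactly the claimed main term.

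\textbf{Step 2 (moment bounds).} The observations lie in $[a,b]$, so Hoeffding, or direct moment computations for bounded i.i.d.\ averages, give $\E\|\Delta\|_2^{2k}=O(t^{-k})$ for $k=1,2,3$. Hence the quadratic term $Q:=\tfrac12\Delta^\top H\Delta$ satisfies $\Var[Q]\le \E[Q^2]\le \tfrac{M^2}{4}\E\|\Delta\|^4=O(t^{-2})$. Similarly $\Var[R_3]\le \E R_3^2=O(t^{-3})$.

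\textbf{Step 3 (cross terms).} This step is the main obstacle. Cauchy–Schwarz only gives $\Cov(w^\top\Delta,Q)=O(t^{-1/2}\cdot t^{-1})=O(t^{-3/2})$, which is not good enough. Instead I will compute the covariance exactly. The term $\E[(w^\top\Delta)\,\Delta_j\Delta_k]$ vanishes unless $i=j=k$, because the components are independent and $\E\Delta_i=0$. When $i=j=k$ it equals $\E\Delta_i^3=\kappa_{3,i}/N_i^2=O(t^{-2})$, where $\kappa_{3,i}$ is the third central moment. Also $\E[w^\top\Delta]=0$. Together these give $\Cov(w^\top\Delta,Q)=O(t^{-2})$. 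The remaining cross terms are $\Cov(w^\top\Delta,R_3)\le \sqrt{O(t^{-1})O(t^{-3})}=O(t^{-2})$ and $\Cov(Q,R_3)=O(t^{-5/2})$.

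\textbf{Step 4 (assembly).} Summing the variance and covariance bounds from Steps 1–3 gives $\Var[Y_t]=\sum_i w_i(\mu)^2\sigma_i^2/N_i(t)+O(t^{-2})$.

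If $N_i(t)$ is random and adaptive, independence within and across components breaks. In that case I would argue conditionally, or replace the exact i.i.d.\ computations with martingale (Wald-type) identities. For example, $\E[\sum_{s\le N_i}(X_{i,s}-\mu_i)]=0$ and $\E[(\sum_{s\le N_i}(X_{i,s}-\mu_i))^2]=\sigma_i^2\E N_i$. These identities hold provided $N_i(t)$ concentrates, so that $1/N_i$ can be replaced by its deterministic equivalent at $O(t^{-2})$ cost.
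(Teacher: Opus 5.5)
Your variance computation is the same as the paper's. The paper also writes $Y_t=f(\mu)+g^\top\Delta+Q$ and bounds $\Var(Q)$ by $\tfrac{M^2}{4}\E\|\Delta\|_2^4$. It splits $Q=Q_0+Q_1$ with $Q_0=\tfrac12\Delta^\top H_f(\mu)\Delta$, reduces $\Cov(g^\top\Delta,Q_0)$ to the diagonal third moments $\E[\Delta_i^3]=O(N_i^{-2})$, and controls $Q_1$ (which is exactly your $R_3$) by Cauchy--Schwarz with the Hessian-Lipschitz bound.

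The gap is the input you take for granted: $N_i(t)\ge c\,t$ for every $i$. All of Steps 2--3 rest on it. It is also the only place where the variance floor and Assumption~\ref{ass:nonvanishing-influence} enter the lemma, so as written those hypotheses are never used and you have proved a conditional statement. Your heuristic (``every component keeps a nonvanishing marginal variance reduction and so keeps being sampled'') gives at most $N_i(t)\to\infty$, not linear growth. A sublinear count for even one component breaks the $O(t^{-2})$ remainder: for example, $N_j\asymp\sqrt t$ can make $\E\Delta_j^3$ of order $t^{-1}$.

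The paper supplies this step with a short pathwise argument (Lemma~\ref{lem:generic-min-pulls-app}). Let $\mathcal M$ be a most-pulled component at time $t$, so $N_{\mathcal M}(t)\ge t/n$, and let $t'$ be the last time it was selected. The greedy comparison at time $t'-1$, with all quantities evaluated there, gives
\[
\frac{N_i(N_i+1)}{N_{\mathcal M}(N_{\mathcal M}+1)}
\ \ge\ \frac{w_i(\hat\mu)^2\,\tilde\sigma_i^2}{w_{\mathcal M}(\hat\mu)^2\,\tilde\sigma_{\mathcal M}^2}
\ \ge\ \frac{\rho^2}{W^2}\cdot\frac{\bar\sigma^2}{\max\{\bar\sigma^2,(b-a)^2/4\}}=:c_0>0,
\]
where $W:=\max_{x\in D}\max_i|w_i(x)|<\infty$. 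Since $(x+\tfrac12)^2\ge x(x+1)$, this yields
\[
N_i(t)\ \ge\ N_i(t'-1)\ \ge\ \sqrt{c_0}\,N_{\mathcal M}(t'-1)-\tfrac12\ \ge\ \sqrt{c_0}\,(t/n-1)-\tfrac12
\]
on every sample path. This turns $N_i(t)^{-k}$ into $O(t^{-k})$ uniformly, without any concentration argument.

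Your closing remark on adaptive counts goes beyond the paper, which likewise computes as if the $N_i(t)$ were fixed. The Wald sketch would not settle it, though. Wald's identities control $S_{N_i}:=\sum_{s\le N_i}(X_{i,s}-\mu_i)$, not the self-normalized $S_{N_i}/N_i$. Moreover, adaptive selection can bias $\hat\mu_i$ and couple the components, which breaks the centering and independence you use in Step~3.
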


Decision uncertainty $\Var[Y_t]$ is not simply the sum of local estimation variances. Each local variance is scaled by the squared influence of that component on the final decision variable.

\begin{lemma}[One-Sample Variance Decrement]
\label{lem:one-sample-variance-decrement}
 For each $i \in [n]$, when one additional sample is collected for $X_i$, the change in its local variance is:
\[
\Delta \Var[\hat{\mu}_i]
  = -\frac{\sigma_i^2}{N_i(N_i+1)}.
\]
\end{lemma}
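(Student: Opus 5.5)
The argument is a direct computation from the variance of an empirical mean of i.i.d.\ samples. We fix a component $i\in[n]$ and treat its current count $N_i=N_i(t)\ge 1$ as given. The sample count is a random, adaptively chosen quantity, so we condition on it and on the allocation history. Given the count, the $N_i$ observations of $X_i$ are i.i.d.\ with variance $\sigma_i^2$, which is the same idealization already used in Lemma~\ref{lem:var-decomp}. Hence
\[
\Var[\hat\mu_i \mid N_i] = \frac{1}{N_i^2}\sum_{k=1}^{N_i}\Var[X_{i,k}] = \frac{\sigma_i^2}{N_i}.
\]

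Next we collect one additional independent sample $X_{i,N_i+1}$. The updated estimate $\hat\mu_i' = \frac{1}{N_i+1}\sum_{k=1}^{N_i+1}X_{i,k}$ then has variance $\sigma_i^2/(N_i+1)$. The change is the difference of these two expressions:
\[
\Delta\Var[\hat\mu_i] = \frac{\sigma_i^2}{N_i+1}-\frac{\sigma_i^2}{N_i} = \sigma_i^2\,\frac{N_i-(N_i+1)}{N_i(N_i+1)} = -\frac{\sigma_i^2}{N_i(N_i+1)}.
\]
This is the claimed identity. The only algebra needed is the common denominator $N_i(N_i+1)$.

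The one point requiring care is interpretive rather than computational. Because $N_i$ is chosen adaptively, the unconditional variance of $\hat\mu_i$ need not equal $\sigma_i^2/N_i$ exactly. We therefore state the result, as in Lemma~\ref{lem:var-decomp}, conditionally on the realized counts, with the samples treated as fresh i.i.d.\ draws. With that convention, independence of the new draw from the past makes the computation exact. The other components are untouched, so their variances are unchanged. Combining this with the weights $w_i(\mu)^2$ from Lemma~\ref{lem:var-decomp} then gives the first-order decision-variance decrement $-w_i(\mu)^2\sigma_i^2/(N_i(N_i+1))$ used later.
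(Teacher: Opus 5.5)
Your proposal is correct and matches the paper's proof, which is exactly the fixed-sample-size computation $\sigma_i^2/(N+1)-\sigma_i^2/N=-\sigma_i^2/(N(N+1))$ for the empirical mean of i.i.d.\ draws. One caveat on your adaptivity remark: conditioning on an adaptively chosen count does not by itself make the retained samples i.i.d. The identity is therefore best read as a deterministic-$N$ statement, as the paper phrases it, rather than as a consequence of conditioning.
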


Combining the first-order variance decomposition with the local decrement yields the first-order global variance update of VarDE.

\begin{proposition}[Global variance decrement]
\label{prop:global-var-decrement}
Fix a step $t$ in which the algorithm collects one additional sample from component $i_t\in[n]$.
Under Assumption~\ref{ass:smooth-f}, the variance floor, Assumption~\ref{ass:nonvanishing-influence}, and independence across components,
{\small\[
\Var[Y_{t+1}]-\Var[Y_t]
  = -\, w_{i_t}(\mu)^2\, \frac{\sigma_{i_t}^2}{N_{i_t}(t)(N_{i_t}(t)+1)}
  + O(t^{-2}).
\]}
\end{proposition}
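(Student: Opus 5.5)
The plan is to apply Lemma~\ref{lem:var-decomp} at the two consecutive times $t$ and $t+1$ and subtract. By that lemma,
\[
\Var[Y_t]=\sum_{i=1}^n w_i(\mu)^2\frac{\sigma_i^2}{N_i(t)}+O(t^{-2}),\qquad
\Var[Y_{t+1}]=\sum_{i=1}^n w_i(\mu)^2\frac{\sigma_i^2}{N_i(t+1)}+O((t+1)^{-2}).
\]
Since only component $i_t$ receives the new sample, $N_i(t+1)=N_i(t)$ for $i\neq i_t$ and $N_{i_t}(t+1)=N_{i_t}(t)+1$. All terms with $i\ne i_t$ therefore cancel exactly in the difference. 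The influence weights $w_i(\mu)=\nabla f(\mu)$ are evaluated at the true mean and do not depend on $t$, so no additional error arises from them. Because $(t+1)^{-2}\le t^{-2}$, the two remainders combine into a single $O(t^{-2})$.

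The surviving term is
\[
w_{i_t}(\mu)^2\sigma_{i_t}^2\Big(\tfrac{1}{N_{i_t}(t)+1}-\tfrac{1}{N_{i_t}(t)}\Big).
\]
This equals $w_{i_t}(\mu)^2$ times $\Delta\Var[\hat\mu_{i_t}]$ from Lemma~\ref{lem:one-sample-variance-decrement}, namely $-\sigma_{i_t}^2/(N_{i_t}(t)(N_{i_t}(t)+1))$. Substituting gives the claimed identity.

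The main obstacle is making sure the $O(t^{-2})$ remainders in Lemma~\ref{lem:var-decomp} are uniform. Their hidden constants must not depend on the sampling path, so that the remainders at $t$ and $t+1$ can legitimately be merged. This holds if the constants depend only on $M$, $L$, $b-a$, $n$, and the variance floor, and if every $N_i(t)$ grows linearly in $t$ (e.g.\ $N_i(t)\ge ct$), since then the Taylor remainder from Lemma~\ref{lem:linear-approx} contributes moments of order $\sum_i N_i^{-2}=O(t^{-2})$. I would first verify that the proof of Lemma~\ref{lem:var-decomp} delivers such a uniform constant under the stated assumptions; if it does not, I would adopt the linear-growth condition on $N_i(t)$ as an explicit standing hypothesis.

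The identity also holds if the $N_i$ are random (adaptive), provided Lemma~\ref{lem:var-decomp} is read conditionally on the allocation counts, and I would interpret both results that way consistently.
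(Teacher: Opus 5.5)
Your proposal is correct and matches the paper's proof: apply Lemma~\ref{lem:var-decomp} at $t$ and $t+1$, cancel every component except $i_t$, use Lemma~\ref{lem:one-sample-variance-decrement} for the surviving term, and merge $O((t+1)^{-2})-O(t^{-2})$ into a single $O(t^{-2})$. The uniformity you raise is already settled in the paper by the min-pulls guarantee (Lemma~\ref{lem:generic-min-pulls-app}), which gives $N_i(t)\ge c\,t/n-C$ with constants depending only on $n,\rho,W,\bar\sigma^2,a,b$; you therefore do not need the linear-growth condition as an extra standing hypothesis.
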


This equation expresses how a single new sample from component $i_t$ reduces the decision-level variance in proportion to its influence weight $w_{i_t}^2$ and its local noise $\sigma_{i_t}^2$.
\subsection{Optimal Sampling Rule}
\label{sampling-rule}
Since each additional sample yields the marginal variance decrement above, the optimal greedy sampling rule is to select the component that maximizes the (first-order) variance reduction:
\[
\tilde i \in \arg\max_{i\in[n]} \; w_i(\hat{\mu})^2 \, \frac{\tilde\sigma_i^2}{N_i(N_i+1)},
\]
where $\tilde\sigma_i^2 = \max(\hat\sigma_i^2, \bar\sigma^2)$ and $\bar\sigma^2>0$ is the variance floor.
This formulation highlights VarDE as a methodology that unifies stochastic decision processes--from flat bandits to hierarchical planners and value-based learners--under a single principle of minimizing decision-level uncertainty.

With this sampling rule, we can establish the following decision-level variance decay guarantee.
\begin{theorem}[Variance decay of VarDE]
\label{thm:variance-decay}
Under Assumption~\ref{ass:smooth-f}, the variance floor, Assumption~\ref{ass:nonvanishing-influence}, and independence across components, VarDE satisfies $N_i(t)=\Omega(t)$ for every component and therefore
\[
\Var[Y_t]=\mathcal{O}(t^{-1}).
\]
\end{theorem}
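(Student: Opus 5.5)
The proof has two steps. First we show the deterministic allocation property $N_i(t)=\Omega(t)$ for every $i$. Then we substitute this into the first-order decomposition of Lemma~\ref{lem:var-decomp}. Throughout we use the algorithm's index
\[
g_i(t):=w_i(\hat\mu(t))^2\,\frac{\tilde\sigma_i^2(t)}{N_i(t)(N_i(t)+1)}.
\]
Its numerator is bounded above and below uniformly in $t$ and along every sample path. For the lower bound, Assumption~\ref{ass:nonvanishing-influence} gives $w_i(\hat\mu)^2\ge\rho^2$, since $\hat\mu(t)\in D$. For the upper bound, $\nabla f$ is continuous on the compact set $D$, so $w_i^2\le G^2:=\max_{x\in D}\|\nabla f(x)\|_\infty^2$. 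The variance floor gives $\tilde\sigma_i^2\ge\bar\sigma^2$. Boundedness of observations in $[a,b]$ gives $\tilde\sigma_i^2\le \max\{(b-a)^2,\bar\sigma^2\}=:\bar S$. Hence
\[
c_-:=\rho^2\bar\sigma^2\le N_i(N_i+1)\,g_i(t)\le G^2\bar S=:c_+.
\]

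To show $N_i(t)=\Omega(t)$, fix a component $i$ and let $s\le t$ be the last time $i$ was selected, so that $N_i(t)=N_i(s)+1$. At time $s$, the argmax rule gives $g_i(s)\ge g_j(s)$ for every $j$. Therefore
\[
\frac{c_+}{N_i(s)^2}\;\ge\; g_i(s)\;\ge\; g_j(s)\;\ge\;\frac{c_-}{(N_j(s)+1)^2},
\]
which yields $N_j(s)+1\ge \kappa N_i(s)$ with $\kappa:=\sqrt{c_-/c_+}$. Now take $i$ to be the component with the largest count at time $t$, so $N_i(t)\ge t/n$. Then $N_i(s)\ge t/n-1$. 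For any other $j$, counts are nondecreasing, so $N_j(t)\ge N_j(s)\ge \kappa(t/n-1)-1$. This gives $\min_j N_j(t)\ge \kappa t/n - O(1)$, that is, $N_j(t)=\Omega(t)$, after the usual initialization of pulling each component once. The step needs care because the argmax uses the estimated weights $w(\hat\mu)$ and variances $\tilde\sigma^2$, not the true ones. The uniform two-sided bounds above are exactly what make the balance hold pathwise, with no concentration argument needed. The constant $\kappa$ depends only on $\rho,\bar\sigma,G,a,b$.

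The second step plugs this into Lemma~\ref{lem:var-decomp}:
\[
\Var[Y_t]=\sum_i w_i(\mu)^2\sigma_i^2/N_i(t)+O(t^{-2}).
\]
With $N_i(t)\ge \kappa t/(2n)$ for large $t$, each term is at most $2nG^2\sigma_i^2/(\kappa t)$. Summing gives $\Var[Y_t]\le C/t+O(t^{-2})=\mathcal O(t^{-1})$.

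The main obstacle is that $N_i(t)$ is random, because it depends on past samples. Two points need attention here. First, the decomposition in Lemma~\ref{lem:var-decomp} treats $N_i(t)$ as given. Since our lower bound on $N_i(t)$ is deterministic, we can bound the quantity uniformly over sample paths, interpreting it conditionally on the allocation as the lemma does. Second, the remainder must be controlled. The quadratic remainder from Lemma~\ref{lem:linear-approx} is bounded using $\E\|\hat\mu-\mu\|_2^2\le\sum_i(b-a)^2/\min_j N_j(t)=O(t^{-1})$. Its contribution to the variance is then of lower order, consistent with the $O(t^{-2})$ term of Lemma~\ref{lem:var-decomp}, because the Hessian bound $M$ applies. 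I would cite that lemma for this and not redo the bound.
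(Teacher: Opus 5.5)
Your proof is correct and follows the paper's argument essentially step for step. The paper's Lemma~\ref{lem:generic-min-pulls-app} also takes the most-pulled component and its last selection time, and uses $\rho\le|w_i|\le W$ together with $\bar\sigma^2\le\tilde\sigma_i^2\le(b-a)^2/4$ to turn the argmax inequality into $N_i(t)\ge c\,t/n-C$; the theorem then follows by plugging this into Lemma~\ref{lem:var-decomp}. Your upper bound $\max\{(b-a)^2,\bar\sigma^2\}$ is slightly more careful, since it also covers a floor exceeding $(b-a)^2/4$. One small caveat on your closing aside: a second-moment bound $\E\|\hat\mu-\mu\|_2^2=O(t^{-1})$ does not by itself give an $O(t^{-2})$ contribution from the quadratic remainder---that needs the fourth- and sixth-moment bounds used inside Lemma~\ref{lem:var-decomp}---but since you cite that lemma for this step, nothing is missing.
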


\begin{remark}[First-order allocation constant]
Theorem~\ref{thm:variance-decay} gives the variance-decay exponent. The leading constant is the quantity that distinguishes well-spread allocations. Appendix~\ref{app:first-order-allocation-constant} shows that if $N_i(T)=p_iT+O(1)$ with $p_i>0$ and $\sum_i p_i=1$, then
\[
\Var[Y_T]=\frac{1}{T}C(p)+O(T^{-2}),
\qquad
C(p)=\sum_{i=1}^n\frac{w_i(\mu)^2\sigma_i^2}{p_i}.
\]
The unique minimizer of this first-order constant is $p_i^\star\propto |w_i(\mu)|\sigma_i$, giving $C_\star=(\sum_i |w_i(\mu)|\sigma_i)^2$. Thus VarDE is not claiming a better exponent than all well-spread allocations; its asymptotic target is the optimal first-order constant induced by the decision-level variance decomposition.
\end{remark}

\section{Applications}
\label{sec:applications}
We now instantiate VarDE in three canonical pure-exploration settings--Best-Arm Identification (BAI), Monte Carlo Tree Search (MCTS), and Best-Policy Identification (BPI)--showing how a common decision-level uncertainty objective yields simple sampling rules and corresponding theoretical guarantees.
All proofs are deferred to Appendices~\ref{appendix:BAI},~\ref{appendix:MCTS}, and~\ref{appendix:BPI}.

\subsection{Best-Arm Identification Bandit}
\label{ssec:bai}

\subsubsection{Problem Setting}
We consider $K$ stochastic arms $\{X_i\}_{i=1}^K$ with unknown means $\mu_i = \E[X_i]$ and finite variances $\sigma_i^2=\Var[X_i]$.
At each round $t$, the learner selects an arm $\tilde i_t$ and observes a reward $r_t \sim X_{\tilde i_t}$, with $r_t \in [0,1]$.
The objective is to identify the best arm $i^* = \arg\max_{i\in[K]} \mu_i$.
Accordingly, we seek to minimize the \emph{simple regret}
\[
\mathcal{R}_T = \E[\mu_{i^*} - \mu_{\hat i_T}],
\]
where $\hat i_T$ denotes the arm recommended after $T$ samples.

The decision value in BAI is the maximum empirical mean.
Instead of the non-differentiable $\max_i \hat{\mu}_i$, we adopt $\operatorname{LogSumExp}$ (LSE) as a smooth approximation:
\[
Y_{\tau} = \operatorname{LSE}_{\tau}(\hat{\mu})
= \tau \log \sum_{i=1}^{K} e^{\hat{\mu}_i/\tau},
\]
where $\tau > 0$ is a temperature parameter.

To apply VarDE, we require a smooth surrogate of $\max_i \hat{\mu}_i$ with controlled curvature.
The following lemmas show that $\operatorname{LSE}_\tau$ (i) approximates $\max$ within $\tau\log K$, (ii) has a uniformly bounded Hessian, and (iii) has Lipschitz Hessian on bounded domains, thereby verifying the smoothness requirements of Assumption~\ref{ass:smooth-f} for the BAI decision surrogate.

\begin{lemma}[LSE bound]
\label{lem:LSE-bound}
For any vector $\hat{\mu}\in\mathbb{R}^K$ and any $\tau>0$,
\[
\max_i \hat{\mu}_i \le \operatorname{LSE}_\tau(\hat{\mu}) \le \max_i \hat{\mu}_i + \tau\log K.
\]
\end{lemma}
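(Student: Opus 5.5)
The plan is to sandwich the sum inside the logarithm between its largest term and $K$ times its largest term, then apply the monotone map $s\mapsto \tau\log s$. Let $m:=\max_i \hat\mu_i$ and let $j$ be an index attaining it. Since every summand $e^{\hat\mu_i/\tau}$ is strictly positive, and since $\hat\mu_i\le m$ for all $i$ while $\tau>0$ makes $x\mapsto e^{x/\tau}$ increasing, we obtain
\[
e^{m/\tau} \le \sum_{i=1}^K e^{\hat\mu_i/\tau} \le K\, e^{m/\tau}.
\]
The left inequality comes from keeping only the $j$-th summand. The right inequality comes from bounding each of the $K$ summands by $e^{m/\tau}$.

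Next, $\log$ is increasing on $(0,\infty)$ and multiplication by $\tau>0$ preserves order. Applying $\tau\log(\cdot)$ to all three members therefore gives
\[
\tau\cdot\frac{m}{\tau} \le \operatorname{LSE}_\tau(\hat\mu) \le \tau\left(\log K + \frac{m}{\tau}\right),
\]
which is exactly $m \le \operatorname{LSE}_\tau(\hat\mu)\le m+\tau\log K$.

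No step is a real obstacle. The only points to state explicitly are the strict positivity of the exponentials, so the logarithm is defined and the dropped terms are nonnegative, and the sign $\tau>0$, which fixes the direction of both monotonicity arguments. The bound holds for arbitrary $\hat\mu\in\mathbb{R}^K$, so no use of Assumption~\ref{ass:smooth-f} or of the bounded domain $D$ is needed. Equality on the left would require all other terms to vanish, which never happens, and equality on the right holds when all coordinates coincide; I will remark on this only in passing.
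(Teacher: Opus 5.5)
Your proof is correct and is essentially the paper's own argument: bound $\sum_i e^{\hat\mu_i/\tau}$ below by its largest term $e^{\max_i\hat\mu_i/\tau}$ and above by $K e^{\max_i\hat\mu_i/\tau}$, then apply the increasing map $s\mapsto\tau\log s$. One small aside: your passing claim that equality on the left never happens fails when $K=1$, where both bounds hold with equality, but this does not affect the lemma.
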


\begin{lemma}[LSE curvature]
\label{lem:LSE-curvature}
For any fixed $\tau>0$ and any $\hat{\mu}\in\mathbb{R}^K$, the Hessian of $\operatorname{LSE}_\tau$ satisfies
\[
\big\|\nabla^2 \operatorname{LSE}_\tau(\hat{\mu})\big\|_{\mathrm{op}} \le \frac{1}{2\tau}.
\]
\end{lemma}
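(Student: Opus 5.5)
We compute the Hessian of $\operatorname{LSE}_\tau$ in closed form and bound its operator norm by a variance argument. Write $p_i(\hat\mu)=e^{\hat\mu_i/\tau}/\sum_j e^{\hat\mu_j/\tau}$ for the softmax weights. Differentiating once gives $\nabla \operatorname{LSE}_\tau(\hat\mu)=p(\hat\mu)$, the influence weights for BAI. Differentiating again, using $\partial p_i/\partial \hat\mu_j=\tau^{-1}p_i(\mathbbm{1}[i=j]-p_j)$, gives
\[
\nabla^2 \operatorname{LSE}_\tau(\hat\mu)=\frac{1}{\tau}\bigl(\operatorname{diag}(p)-pp^\top\bigr).
\]

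This matrix is symmetric positive semidefinite. For any unit vector $v$, $v^\top(\operatorname{diag}(p)-pp^\top)v=\sum_i p_i v_i^2-(\sum_i p_iv_i)^2$, which is the variance of a random variable $V$ taking value $v_i$ with probability $p_i$. Hence the operator norm equals $\tau^{-1}\sup_{\|v\|_2=1}\Var_p(V)$, and it remains to show $\Var_p(V)\le 1/2$ whenever $\|v\|_2=1$.

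This variance bound is the main step. For an independent copy $V'$, $\Var_p(V)=\tfrac12\E[(V-V')^2]=\tfrac12\sum_{i,j}p_ip_j(v_i-v_j)^2$. Each term satisfies $(v_i-v_j)^2\le 2(v_i^2+v_j^2)$, and for $i\neq j$ we have $v_i^2+v_j^2\le 1$, so $(v_i-v_j)^2\le 2$. Diagonal terms vanish, so
\[
\Var_p(V)\le \sum_{i\ne j}p_ip_j=1-\sum_i p_i^2\le 1 .
\]
This only yields the operator-norm bound $1/\tau$, so we need a sharper argument. Let $m=\sum_i p_iv_i$. Then $\Var_p(V)=\sum_i p_i(v_i-m)^2$. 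Since $\Var_p(V)\le\sum_i p_i(v_i-c)^2$ for every constant $c$, we may choose $c=m$ and use the max–min range bound $\Var\le (\max v-\min v)^2/4$ (Popoviciu). For a unit vector, $(\max v-\min v)^2\le 2(v_{\max}^2+v_{\min}^2)\le 2$ when the max and min are attained at distinct coordinates. If they coincide, $v$ is constant and the variance is zero. Therefore $\Var_p(V)\le 2/4=1/2$.

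Combining the steps, $\|\nabla^2\operatorname{LSE}_\tau(\hat\mu)\|_{\mathrm{op}}\le \tfrac{1}{2\tau}$ uniformly in $\hat\mu$ and $K$. This supplies the constant $M=1/(2\tau)$ required in Assumption~\ref{ass:smooth-f}. The bound is attained by taking $v=(e_1-e_2)/\sqrt2$ with $p_1=p_2=1/2$, which confirms it is tight.
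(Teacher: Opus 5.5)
Your proof is correct and follows essentially the paper's argument. The Hessian $\tau^{-1}(\operatorname{diag}(p)-pp^\top)$ is PSD, its quadratic form is $\tau^{-1}\Var_p(V)$, and Popoviciu's bound followed by $(\max v-\min v)^2\le 2\bigl((\max v)^2+(\min v)^2\bigr)\le 2\|v\|_2^2$ gives $1/(2\tau)$; your separate treatment of constant $v$ is slightly more careful than the paper's chain. Minor polish: the first attempt yielding the bound $1$ can be deleted, and the phrase ``choose $c=m$'' should read $c=(\max v+\min v)/2$ if you mean to derive Popoviciu rather than cite it. Also, your tightness example is exact only for $K=2$: for $K\ge 3$ the softmax weights have full support, so $1/(2\tau)$ is approached (as $\hat\mu_j\to-\infty$ for $j\ge 3$) but not attained.
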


\begin{lemma}[LSE Hessian Lipschitzness]
\label{lem:LSE-hessian-lipschitz}
For any fixed $\tau>0$, $\nabla^2\operatorname{LSE}_\tau$ is Lipschitz on any compact convex domain $D\subset\mathbb{R}^K$. More explicitly, there exists $L_{\tau,K}<\infty$ such that, for all $x,y\in D$,
\[
\big\|\nabla^2\operatorname{LSE}_\tau(x)-\nabla^2\operatorname{LSE}_\tau(y)\big\|_{\mathrm{op}}
\le L_{\tau,K}\|x-y\|_2.
\]
One may take $L_{\tau,K}=3K^{3/2}/\tau^2$.
\end{lemma}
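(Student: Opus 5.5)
The plan is to compute the third derivatives of $\operatorname{LSE}_\tau$ explicitly, bound them uniformly, and then apply the mean value inequality along segments of the convex domain $D$.

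\textbf{Hessian.} Write $p(x)=\operatorname{softmax}(x/\tau)$, so $p_i=e^{x_i/\tau}/\sum_j e^{x_j/\tau}$. The gradient is $\nabla \operatorname{LSE}_\tau(x)=p(x)$, and the Hessian is
\[
H(x)=\tfrac{1}{\tau}\bigl(\mathrm{diag}(p)-pp^\top\bigr),
\]
the same expression that underlies Lemma~\ref{lem:LSE-curvature}.

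\textbf{Third derivatives.} Using $\partial_k p_i=\tfrac{1}{\tau}p_i(\delta_{ik}-p_k)$, differentiate each entry $H_{ij}=\tfrac1\tau(p_i\delta_{ij}-p_ip_j)$:
\[
\partial_k H_{ij}=\tfrac{1}{\tau^2}\Bigl(\delta_{ij}p_i(\delta_{ik}-p_k)-p_ip_j(\delta_{ik}-p_k)-p_ip_j(\delta_{jk}-p_k)\Bigr).
\]
Every $p_i\in[0,1]$ and every factor $|\delta-p_k|\le 1$, so each of the three terms has absolute value at most $1$. Hence
\[
|\partial_kH_{ij}|\le 3/\tau^2
\]
for all $i,j,k$ and all $x\in\mathbb{R}^K$. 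This bound is uniform on all of $\mathbb{R}^K$, so compactness of $D$ is only needed for convexity of the segments below.

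\textbf{Mean value step.} For $x,y\in D$, set $x_s=y+s(x-y)$ with $s\in[0,1]$; convexity of $D$ keeps $x_s$ in $D$. The fundamental theorem of calculus gives
\[
H(x)-H(y)=\int_0^1 \sum_k \partial_k H(x_s)\,(x_k-y_k)\,ds .
\]
Bound the operator norm by the Frobenius norm. For each pair $(i,j)$, Cauchy--Schwarz gives
\[
\Bigl|\sum_k \partial_kH_{ij}(x_k-y_k)\Bigr|\le \sqrt{K}\,\tfrac{3}{\tau^2}\,\|x-y\|_2 .
\]
Summing the squares over the $K^2$ pairs $(i,j)$ and taking the square root yields
\[
\|H(x)-H(y)\|_{\mathrm{op}}\le \|H(x)-H(y)\|_F\le K\cdot\sqrt{K}\cdot\tfrac{3}{\tau^2}\|x-y\|_2=\tfrac{3K^{3/2}}{\tau^2}\|x-y\|_2,
\]
which is exactly $L_{\tau,K}$.

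\textbf{Main obstacle.} The only delicate step is the entrywise bound on the third-derivative tensor. It requires writing the product-rule expansion carefully so that each term is visibly a product of quantities lying in $[0,1]$ or $[-1,1]$. A cruder route, such as bounding operator norms of tensor contractions, would give a sharper constant but is not needed for the stated $L_{\tau,K}$.
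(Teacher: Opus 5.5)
Your proof is correct and follows the paper's argument essentially step for step. You use the same softmax third-derivative formula with the entrywise bound $3/\tau^2$, then a one-dimensional mean-value/FTC step along the segment, which gives $3\sqrt{K}\,\|x-y\|_2/\tau^2$ per Hessian entry. The Frobenius-norm passage to the operator norm then yields $3K^{3/2}/\tau^2$.

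One small aside: because your entrywise bound holds on all of $\mathbb{R}^K$, neither compactness nor convexity of $D$ is actually needed. Also, your closing remark should say that a finer route, not a ``cruder'' one, would give a sharper constant.
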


The temperature $\tau$ therefore controls a bias--smoothness tradeoff. By Lemma~\ref{lem:LSE-bound}, smaller $\tau$ makes $\operatorname{LSE}_\tau$ a tighter approximation to the hard maximum, with bias at most $\tau\log K$. At the same time, Lemmas~\ref{lem:LSE-curvature} and~\ref{lem:LSE-hessian-lipschitz} show that the curvature and Hessian-Lipschitz constants scale as $O(\tau^{-1})$ and $O(\tau^{-2})$, respectively. Thus very small $\tau$ gives a more faithful decision surrogate but amplifies the higher-order Taylor remainder in the variance approximation, whereas larger $\tau$ smooths the decision variable and stabilizes the first-order variance surrogate at the cost of additional approximation bias.

\subsubsection{\textsc{VarDE--BAI} Algorithm}
We now instantiate VarDE in the fixed-budget best-arm identification setting.
With the smooth decision variable $Y_\tau=\operatorname{LSE}_\tau(\hat{\mu})$, VarDE reduces to allocating pulls across arms to greedily decrease the decision-level variance $\Var[Y_\tau]$.

Under the $\operatorname{LSE}_\tau$ decision variable, the \textbf{influence weight} of each arm is
\vspace{-0.5em}
\[
w_i(\hat{\mu})
= \frac{\partial Y_{\tau}}{\partial \hat{\mu}_i}
= \frac{e^{\hat{\mu}_i/\tau}}{\sum_{j=1}^K e^{\hat{\mu}_j/\tau}}.
\]

\begin{lemma}[LSE nonvanishing influence weights]
\label{lem:LSE-nonvanishing}
Fix $\tau>0$ and a compact box $D=[a,b]^K$. For every $x\in D$ and every $i\in\{1,\dots,K\}$,
\[
\frac{1}{K}e^{(a-b)/\tau}\le w_i(x)\le 1.
\]
In particular, the LSE influence weights verify Assumption~\ref{ass:nonvanishing-influence} on $D$ with $\rho=K^{-1}e^{(a-b)/\tau}>0$.
\end{lemma}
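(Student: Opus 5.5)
The plan is to bound the softmax weight directly, since $w_i(x)=e^{x_i/\tau}/\sum_{j=1}^K e^{x_j/\tau}$ is an explicit ratio of positive quantities. For the upper bound, observe that the numerator $e^{x_i/\tau}$ is one of the terms of the denominator. All terms are strictly positive, so the denominator is at least the numerator, and hence $w_i(x)\le 1$. This holds for every $x\in\mathbb{R}^K$, not only on $D$.

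For the lower bound, I would use the box constraint $x\in D=[a,b]^K$. Every coordinate satisfies $a\le x_j\le b$, and $t\mapsto e^{t/\tau}$ is increasing because $\tau>0$. This gives $e^{x_i/\tau}\ge e^{a/\tau}$ for the numerator and $e^{x_j/\tau}\le e^{b/\tau}$ for each of the $K$ summands of the denominator, so $\sum_j e^{x_j/\tau}\le K e^{b/\tau}$. Dividing these two bounds yields
\[
w_i(x)\ \ge\ \frac{e^{a/\tau}}{K e^{b/\tau}}=\frac{1}{K}e^{(a-b)/\tau}.
\]

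Finally, I would set $\rho:=K^{-1}e^{(a-b)/\tau}$, which is strictly positive because the exponential is positive and $K$ is finite. Since $w_i(x)>0$, we have $|w_i(x)|=w_i(x)\ge\rho$ for all $x\in D$ and $i\in[K]$, which is exactly condition \eqref{cond:nonvanishing-influence} of Assumption~\ref{ass:nonvanishing-influence}.

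There is no real obstacle here. The only point needing care is that the lower bound is uniform over $D$, which relies on compactness of the box: without bounded coordinates the softmax weights could approach $0$.
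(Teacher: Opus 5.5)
Your proof is correct and follows the paper's argument: bound the numerator below by $e^{a/\tau}$ and the denominator above by $Ke^{b/\tau}$, with $w_i(x)\le 1$ immediate from the softmax form. Your explicit note that $|w_i(x)|=w_i(x)$ (positivity) to match Assumption~\ref{ass:nonvanishing-influence} is a small addition the paper leaves implicit.
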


\textsc{VarDE--BAI} greedily selects the arm that can contribute the largest expected variance decrement, applying the optimal sampling rule in Subsection \ref{sampling-rule} and updating $\hat\mu, \hat\sigma^2, N$ online.
The complete algorithm is deferred to Appendix~\ref{app:algorithms} (Algorithm~\ref{alg:VarDE-bai}).

\subsubsection{Theoretical Guarantees}
We next state finite-sample guarantees for \textsc{VarDE--BAI} in the fixed-budget setting.
Under standard bounded-reward assumptions and a unique best arm, we show that \textsc{VarDE--BAI} identifies the optimal arm with exponentially small error probability, which in turn implies an exponential decay of the simple regret.

\begin{theorem}[\textsc{VarDE--BAI} error probability]
\label{thm:VarDE-bai-error-prob}
Assume rewards are supported on $[0,1]$ and that the best arm $i^{*}$ is unique.
\textsc{VarDE--BAI} (Alg.~\ref{alg:VarDE-bai}) with temperature $\tau>0$ and warm start $\eta>0$ satisfies
\[
\Pr\{\hat i_T \neq i^*\} \le C\exp(-cT),
\]
for some constants $C,c>0$ that may depend on $K$, $\tau$, $\bar\sigma$, and the gaps $\Delta_i = \mu_{i^*} - \mu_i$.
\end{theorem}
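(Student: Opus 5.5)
The strategy is to show that the sampling rule of Subsection~\ref{sampling-rule} forces every arm to receive a linear share of the budget deterministically. A standard Hoeffding argument then bounds the misidentification probability. We take the recommendation to be $\hat i_T=\arg\max_i\hat\mu_i(T)$.

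\textbf{Step 1: deterministic linear allocation.} Rewards lie in $[0,1]$, so every empirical mean lies in $D=[0,1]^K$. By Lemma~\ref{lem:LSE-nonvanishing} with $a=0,b=1$, every weight satisfies $\rho:=K^{-1}e^{-1/\tau}\le w_i(\hat\mu)\le 1$ at every round, whatever the realization. The floored variance obeys $\bar\sigma^2\le\tilde\sigma_i^2\le \max(1/4,\bar\sigma^2)=:\sigma_{\max}^2$, since a $[0,1]$ variable has variance at most $1/4$. Hence the index $I_i=w_i^2\tilde\sigma_i^2/(N_i(N_i+1))$ is sandwiched:
\[
\frac{\rho^2\bar\sigma^2}{N_i(N_i+1)}\le I_i\le\frac{\sigma_{\max}^2}{N_i(N_i+1)}.
\]
Suppose arm $i$ is pulled at round $t$ while some arm $j$ has $N_j$ pulls. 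Then $I_i\ge I_j$, which gives $N_i(N_i+1)\le (\sigma_{\max}^2/(\rho^2\bar\sigma^2))\,N_j(N_j+1)$, so $N_i\le \kappa N_j+1$ with $\kappa=\sigma_{\max}/(\rho\bar\sigma)$. Apply this to the last time each arm was pulled; the warm start $\eta$ guarantees every $N_j\ge\eta\ge1$, so the indices are well defined from the outset. It follows that $\max_i N_i(T)\le\kappa\min_j N_j(T)+1$. Combining this with $\sum_i N_i(T)=T$ yields $N_i(T)\ge \alpha T$ for all $i$ once $T\ge T_0$, with $\alpha\approx 1/(K\kappa)$. This is essentially the $N_i(t)=\Omega(t)$ part of Theorem~\ref{thm:variance-decay}, but the bound holds pathwise, which is what we need.

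\textbf{Step 2: concentration with random sample sizes.} The counts $N_i$ are random and adaptive, so a Hoeffding bound cannot be applied directly at $N_i(T)$. The fix is to use the standard i.i.d.\ reward-table (stack) model: the $s$-th pull of arm $i$ returns $X_{i,s}$, drawn i.i.d.\ in advance. Then $\hat\mu_i(T)=\bar X_{i,N_i(T)}$. Since $N_i(T)\in[\alpha T,T]$ deterministically, a union bound over $s\in[\alpha T,T]$ gives
\[
\Pr\{|\hat\mu_i(T)-\mu_i|\ge\Delta/2\}\le\sum_{s\ge\alpha T}2e^{-s\Delta^2/2}\le C' e^{-\alpha T\Delta^2/2},
\]
where $\Delta=\min_{i\ne i^*}\Delta_i$. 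Uniqueness of the best arm ensures $\Delta>0$.

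\textbf{Step 3: conclude.} The event $\{\hat i_T\ne i^*\}$ requires some arm $i\ne i^*$ with $\hat\mu_i\ge\hat\mu_{i^*}$. That in turn requires either $\hat\mu_{i^*}\le\mu_{i^*}-\Delta_i/2$ or $\hat\mu_i\ge\mu_i+\Delta_i/2$. A union bound over the $K$ arms gives $\Pr\{\hat i_T\ne i^*\}\le 2KC'e^{-\alpha\Delta^2T/2}$ for $T\ge T_0$. For $T<T_0$ the bound is made trivially true by inflating $C$. This gives $c=\alpha\Delta^2/2$, which depends on $K,\tau,\bar\sigma$ and the gaps, as the statement allows. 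The simple-regret consequence follows since $\mathcal R_T\le\Pr\{\hat i_T\ne i^*\}$.

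The main obstacle is Step 1: we need the linear allocation to hold deterministically, not merely in expectation. This requires the two-sided bounds on $w_i$ and $\tilde\sigma_i^2$ to hold uniformly over all realizations, and a careful handling of ties, rounding, and the warm-start phase so that the comparison at the last pull of each arm is valid.
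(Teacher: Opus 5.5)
Your proposal is correct and follows essentially the paper's route: a pathwise bound $N_i(T)=\Omega(T)$ from comparing the VarDE indices at the last pull of the most-pulled arm, then Hoeffding and a union bound over suboptimal arms. The paper bounds the weight ratio $w_i^2/w_M^2=e^{2(\hat\mu_i-\hat\mu_M)/\tau}\ge e^{-2/\tau}$ directly, which avoids the factor $K$ your $\rho$ loses; also, your additive constant $+1$ should be $(\kappa+1)/2$, which is harmless.

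Your Step 2 is more careful than the paper. The reward-table model with a union bound over $s\in[\alpha T,T]$ is needed because the counts $N_i(T)$ are random and adaptively chosen, and the paper applies Hoeffding at them directly. Likewise, your cap $\max(1/4,\bar\sigma^2)$ covers the case $\bar\sigma^2>1/4$, which the paper's bound $\tilde\sigma_M^2\le 1/4$ overlooks.
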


\begin{corollary}[\textsc{VarDE--BAI} simple regret]
\label{cor:VarDE-bai-simple-regret}
Under the assumptions of Theorem~\ref{thm:VarDE-bai-error-prob}, the simple regret of \textsc{VarDE--BAI} satisfies
\[
\mathcal{R}_T
\ \le\
\Delta_{\max}\, \Pr\{\hat i_T\neq i^*\}
\ = \exp(-\Omega(T)),
\]
where $\Delta_{\max}:=\max_{i\ne i^*}(\mu_{i^*}-\mu_i)$.
\end{corollary}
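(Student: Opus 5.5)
The corollary follows by a one-line reduction of the simple regret to the misidentification probability, after which Theorem~\ref{thm:VarDE-bai-error-prob} finishes the argument. The random variable $\mu_{i^*}-\mu_{\hat i_T}$ is nonnegative. It vanishes on the event $\{\hat i_T=i^*\}$. On the complementary event $\{\hat i_T\neq i^*\}$, we have $\hat i_T=j$ for some $j\neq i^*$, so it equals $\Delta_j\le \Delta_{\max}$. This gives the pointwise bound
\[
\mu_{i^*}-\mu_{\hat i_T}\;\le\;\Delta_{\max}\,\mathbbm{1}\{\hat i_T\neq i^*\}.
\]
Taking expectations yields $\mathcal{R}_T\le \Delta_{\max}\Pr\{\hat i_T\neq i^*\}$, which is the first inequality.

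For the second part, we invoke Theorem~\ref{thm:VarDE-bai-error-prob}. Its hypotheses are identical to those assumed here: rewards in $[0,1]$, a unique best arm, and fixed $\tau,\eta>0$. It gives $\Pr\{\hat i_T\neq i^*\}\le C\exp(-cT)$ with $C,c>0$ independent of $T$. Since rewards lie in $[0,1]$, we have $\Delta_{\max}\le 1$. Hence
\[
\mathcal{R}_T\le C\,\Delta_{\max}\,e^{-cT}.
\]
For any $c'\in(0,c)$, the right-hand side is at most $e^{-c'T}$ for all $T$ large enough. This is precisely $\exp(-\Omega(T))$. If $K=1$ or $\Delta_{\max}=0$ the bound is trivial, although uniqueness of the best arm with $K\ge 2$ already forces $\Delta_{\max}>0$.

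There is no real obstacle here: all the difficulty sits inside Theorem~\ref{thm:VarDE-bai-error-prob}. The only care needed is in reading the "$=$" in the statement as "$\le$ a quantity that is $\exp(-\Omega(T))$". We must also ensure that $\hat i_T$ is always a well-defined arm, with ties broken arbitrarily, so that the pointwise bound holds on every outcome.
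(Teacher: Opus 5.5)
Your proposal is correct and matches the paper's argument, which appears only inline in the corollary statement (there is no separate proof in the appendix): bound $\mu_{i^*}-\mu_{\hat i_T}$ pointwise by $\Delta_{\max}$ times the indicator of $\{\hat i_T\neq i^*\}$, take expectations, and then apply the exponential bound of Theorem~\ref{thm:VarDE-bai-error-prob}. As you say, all the substantive content lives in that theorem and its min-pulls lemma, so the corollary is exactly as strong as that result.
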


This exponential decay of the simple regret is order-optimal and matches the minimax lower bound in fixed-budget BAI up to constant factors \citep{audibert2010best}.

\paragraph{Other Decision Functions.}
While we focus on the $\operatorname{LSE}_\tau$ decision function for simplicity, VarDE can be adapted to other smooth approximations of the maximum function with analogous guarantees under similar regularity conditions.

\subsection{Monte Carlo Tree Search}
\label{ssec:mcts}
\subsubsection{Problem Setting}
\label{sssec:mcts-problem}
We plan in an episodic Markov Decision Process (MDP) with horizon $H$, discount factor $\gamma\in(0,1]$, finite state set $\mathcal{S}$ and action set $\mathcal{A}$.
We assume access to a forward model that can simulate transitions and rewards along a trajectory.
Given a state $s$ and action $a \in \mathcal{A}(s)$ (the set of feasible actions at $s$), a call to the model returns a next state $s'\sim P(\cdot\mid s,a)$ and a reward sample $\hat r\sim \nu(\cdot\mid s,a)$ supported on $[0,1]$, with mean $r(s,a)$; the next call proceeds from $s'$.

The optimal action-value function is
\[
Q^*(s,a) = \sup_{\pi} \E^\pi\!\left[\sum_{t=0}^{H-1} \gamma^t\, r(s_t,a_t)\,\middle|\, s_0=s,\, a_0=a\right],
\]
where the supremum is over deterministic policies $\pi$, and the expectation is over trajectories
$s_0,a_0,s_1,\ldots,s_H$ with $a_t=\pi(s_t)$ and $s_{t+1}\sim P(\cdot\mid s_t,a_t)$.

Our objective is to identify the best root action $a^*=\arg\max_{a\in\mathcal{A}(s_0)} Q^*(s_0,a)$ using $T$ simulated trajectories.
Accordingly, we minimize the simple regret
\[
\mathcal{R}_T = \E\!\left[Q^*(s_0,a^*) - Q^*(s_0,\hat{a}_T)\right],
\]
where $\hat{a}_T$ is the action recommended after $T$ simulations.

\subsubsection{\textsc{VarDE--MCTS} Algorithm}
To apply VarDE in MCTS, we view each state--action pair $(s,a)$ as a local stochastic component whose return is estimated by the empirical $\hat{Q}(s,a)$.
The global decision at a state $s$ is the recommended action $\hat a$, which depends on comparing action values through the maximization $V^*(s)=\max_{a\in\mathcal{A}(s)} Q^*(s,a)$.
Since $\max$ is non-differentiable, we replace it with a smooth surrogate so that influence weights and variance decrements can be defined in closed form.
We therefore use a smooth surrogate of $\max_{a\in\mathcal{A}(s)} \hat Q(s,a)$ via $\operatorname{LSE}_\tau$:
\[
Y_{\tau}(s) = \operatorname{LSE}_{\tau}(\hat{Q}(s,\cdot))
= \tau \log \sum_{a\in\mathcal{A}(s)} e^{\hat{Q}(s,a)/\tau},
\]
where $\hat{Q}(s,a)$ is the empirical value of action $a$ at state $s$.
The influence weight of action $a$ at state $s$ is
\[
w(s,a)
= \frac{\partial Y_{\tau}(s)}{\partial \hat{Q}(s,a)}
= \frac{e^{\hat{Q}(s,a)/\tau}}{\sum_{b\in\mathcal{A}(s)} e^{\hat{Q}(s,b)/\tau}}.
\]

The uncertainty of action $a$ at state $s$ is evaluated by the empirical variance $\hat \sigma(s,a)$ of cumulative returns $\hat G(s,a)$ when taking $a$ at $s$.
\[
\hat G(s,a)\;:=\;\sum_{t=0}^{H-1}\gamma^t\,\hat r(s_t,a_t),
\]
\vspace{-1em}
\[
\quad (s_0,a_0)=(s,a),\ s_{t+1}\sim P(\cdot\mid s_t,a_t).
\]
During selection phase, \textsc{VarDE--MCTS} greedily chooses the action that yields the largest estimated (first-order) variance decrement, applying the optimal sampling rule in Subsection \ref{sampling-rule}. The simulation phase and backpropagation phase follow standard MCTS procedures with max back-up.
The complete algorithm is deferred to Appendix~\ref{app:algorithms} (Algorithm~\ref{alg:VarDE-mcts}).

\subsubsection{Theoretical Guarantees}

We next provide concentration guarantees for \textsc{VarDE--MCTS}: value estimates concentrate exponentially fast in the number of simulations, which implies an exponentially small probability of recommending a suboptimal root action.
\begin{theorem}[\textsc{VarDE--MCTS} value estimates]
\label{thm:VarDE-mcts-value-estimates}
Consider a \textsc{VarDE--MCTS} process.
For any depth $t\in\{0,\dots,H\}$, any node $s_t$, and any fixed tolerance $\varepsilon>0$, there exist constants $C_{s_t,\varepsilon}>0$ and $k_{s_t,\varepsilon}>0$ such that, conditionally on $N(s_t)=n\ge1$,
\[
\Pr\!\Big(
\big|\hat V(s_t)-V^*(s_t)\big|>\varepsilon
\Big)
\le
C_{s_t,\varepsilon}\exp\!\big(-k_{s_t,\varepsilon}\,\varepsilon^2 n\big).
\]
Moreover, at the root node $s_0$, after $T$ simulations, there exist constants $C_\varepsilon>0$ and $k_\varepsilon>0$ such that
\[
\Pr\!\Big(
\big|\hat V(s_0)-V^*(s_0)\big|>\varepsilon
\Big)
\le
C_\varepsilon\exp\!\big(-k_\varepsilon\,\varepsilon^2 T\big).
\]
\end{theorem}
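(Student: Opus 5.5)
We argue by backward induction on depth, from the leaves ($t=H$) up to the root, in the style of the standard concentration analyses for UCT and Power-UCT.

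\textbf{Base case.} At depth $H$ (or at a terminal node) the value is deterministic, so the bound holds trivially. For a node $s_{H-1}$, each action value is an empirical mean of bounded rewards in $[0,1]$. Hoeffding's inequality gives an exponential bound in the number of pulls $N(s_{H-1},a)$.

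\textbf{Inductive step, allocation.} Fix a node $s_t$ and assume the claim at depth $t+1$. First we show that under the VarDE selection rule every action at $s_t$ receives a linear share of visits. This is the tree analogue of Theorem~\ref{thm:variance-decay}, and we argue it directly. By Lemma~\ref{lem:LSE-nonvanishing}, the weights satisfy $w(s_t,a)\ge \rho=|\mathcal A|^{-1}e^{-H/\tau}$, since returns lie in $[0,H]$. The variance floor gives $\tilde\sigma^2\in[\bar\sigma^2,H^2]$. Now suppose $N(s_t,a)\le c\,n$ for some action $a$. Its index is at least $\rho^2\bar\sigma^2/(N(N+1))$, while any action with $N(s_t,b)\ge n/|\mathcal A|$ has index at most $H^2|\mathcal A|^2/n^2$. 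Comparing the two shows that $a$ is selected before $b$ is pulled again. This yields deterministically $N(s_t,a)\ge p\,n - O(1)$, with $p=\rho\bar\sigma/(H|\mathcal A|\sqrt{\cdot})>0$ depending only on $\tau,\bar\sigma,H,|\mathcal A|$.

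\textbf{Inductive step, concentration.} Write
\[
\hat Q(s_t,a)-Q^*(s_t,a)=(\bar r-r)+\gamma\sum_{s'}\big(\hat P(s')\hat V(s')-P(s')V^*(s')\big).
\]
We split this into three terms:
\begin{itemize}
\item the reward deviation, handled by Hoeffding;
\item the transition-frequency deviation $\sum_{s'}(\hat P-P)V^*$, handled by Hoeffding on bounded values;
\item the child-value deviations $\sum \hat P(s')(\hat V(s')-V^*(s'))$.
\end{itemize}
For the third term we use the induction hypothesis conditional on $N(s')$. Children with $P(s'|s_t,a)\ge \varepsilon'$ get $N(s')\ge P(s')N(s_t,a)/2$ visits except with exponentially small probability, by a Chernoff bound on the multinomial counts. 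Children with small probability contribute at most $\varepsilon' H$. Allocating $\varepsilon/3$ to each term and taking a union bound over actions and children (finitely many) gives
\[
\Pr(|\hat Q(s_t,a)-Q^*(s_t,a)|>\varepsilon)\le C\exp(-k\varepsilon^2 n).
\]
The constant $C$ is absorbed across depths. With max backup, $|\hat V(s_t)-V^*(s_t)|\le\max_a|\hat Q-Q^*|$, so the node-level claim follows. The root statement is the special case $n=N(s_0)=T$.

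\textbf{Main obstacle.} The main difficulty is the dependence between adaptive allocation and samples. Visit counts at children are random and correlated with the estimates, so the induction hypothesis, which is stated conditionally on $N(s')=m$, cannot be applied naively. We plan to handle this by summing over possible counts $m\ge p\,\varepsilon' n/2$:
\[
\sum_{m}C e^{-k\varepsilon^2 m}\le C' e^{-k'\varepsilon^2 n},
\]
with a union over $m$ costing at most a polynomial factor absorbed into the exponent. A supporting point is that the deterministic linear-visit lower bound, which is where the variance floor and Lemma~\ref{lem:LSE-nonvanishing} are essential, holds regardless of the sample path.
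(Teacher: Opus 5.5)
Your proposal is correct and follows essentially the same route as the paper. Both argue by backward induction over depth, get a deterministic per-node linear allocation bound from comparing VarDE indices with the nonvanishing LSE weights and the variance floor (the paper reduces this to the \textsc{VarDE--BAI} min-pulls lemma), combine concentration of the empirical reward, the transition frequencies and the child values by a union bound, use the $1$-Lipschitz max backup, and read off the root case from $N(s_0)=T$.

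The differences are minor and mostly in your favor. The paper uses DKW and a product expansion where you use an additive split with Hoeffding on $\sum_{s'}(\hat P-P)V^*$. The paper also never explicitly converts child counts $N(s_{t+1})$ into the parent count $N(s_t,a_t)$, and never addresses adaptivity. Your multinomial Chernoff step handles the first of these. Your union over $m$ handles the second, provided you make it rigorous with the standard stack-of-samples construction and state the induction hypothesis for the estimate after $m$ visits.
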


\begin{corollary}[\textsc{VarDE--MCTS} error probability]
\label{cor:VarDE-mcts-error-prob}
Under the assumptions of Theorem~\ref{thm:VarDE-mcts-value-estimates}, the error probability of \textsc{VarDE--MCTS} satisfies
\[
\Pr\{\hat a_T \neq a^*\} \le \exp\!\big(-\Omega(T)\big).
\]
\end{corollary}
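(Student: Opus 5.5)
The plan is to reduce the root-action error event to a value-concentration event and then apply Theorem~\ref{thm:VarDE-mcts-value-estimates}.

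\textbf{Step 1: set the tolerance from the root gap.} Let
\[
\Delta:=Q^*(s_0,a^*)-\max_{a\neq a^*}Q^*(s_0,a)>0,
\]
which is positive because $a^*$ is assumed unique. Choose $\varepsilon:=\Delta/2$. The recommendation $\hat a_T=\arg\max_a \hat Q(s_0,a)$ can differ from $a^*$ only if some root edge estimate is off by more than $\varepsilon$. Indeed, if $|\hat Q(s_0,a)-Q^*(s_0,a)|\le\varepsilon$ for all $a\in\mathcal{A}(s_0)$, then $\hat Q(s_0,a^*)\ge Q^*(s_0,a^*)-\varepsilon > Q^*(s_0,a)+\varepsilon\ge \hat Q(s_0,a)$ for every $a\ne a^*$. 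A union bound over the $|\mathcal{A}(s_0)|$ root actions then gives
\[
\Pr\{\hat a_T\neq a^*\}\le\sum_{a\in\mathcal{A}(s_0)}\Pr\bigl(|\hat Q(s_0,a)-Q^*(s_0,a)|>\varepsilon\bigr).
\]

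\textbf{Step 2: bound each root edge.} With max backup, $\hat Q(s_0,a)$ is the average of $\hat r(s_0,a)$ plus $\gamma\hat V(s_1)$ over the children $s_1$ reached via $(s_0,a)$. I will control each such child with the node-level bound of Theorem~\ref{thm:VarDE-mcts-value-estimates}, conditioned on its visit count. I will control the immediate-reward and transition empirical averages with Hoeffding's inequality. Together these give a bound of the form $C'\exp(-k'\varepsilon^2 N(s_0,a))$.

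\textbf{Step 3 (main obstacle): show every root edge is visited linearly often.} I need $N(s_0,a)\ge pT$ for some $p>0$, with high probability or deterministically. The plan is to reuse the argument of Theorem~\ref{thm:variance-decay} with the ingredients specific to this setting:
\begin{itemize}
\item Lemma~\ref{lem:LSE-nonvanishing} gives $w(s_0,a)\ge\rho$, with $Q$-values in $[0,H]$ as the bounding box.
\item The variance floor gives $\tilde\sigma^2\ge\bar\sigma^2$.
\item The empirical variances are bounded above by $H^2$.
\end{itemize}
Now suppose action $a$ has $N_a$ visits while another action $b$ with $N_b>\kappa N_a$ is selected. The selection rule would then require
\[
\frac{H^2}{N_b^2}\;\ge\;\rho^2\,\frac{\bar\sigma^2}{N_a(N_a+1)}.
\]
This fails once $\kappa$ exceeds roughly $H/(\rho\bar\sigma)$. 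So the visit ratios stay bounded, and $N(s_0,a)\ge T/(1+(|\mathcal{A}|-1)\kappa)$ holds deterministically after warm start.

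\textbf{Step 4: conclude.} Plugging the linear visit count into Step 2 and summing via the union bound of Step 1 gives
\[
\Pr\{\hat a_T\ne a^*\}\le |\mathcal{A}|\,C'\exp\!\bigl(-k'\Delta^2 pT/4\bigr)=\exp(-\Omega(T)),
\]
with constants absorbed for large $T$.

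If the edge-level reduction in Step 2 proves messy, the fallback is to invoke the root statement of Theorem~\ref{thm:VarDE-mcts-value-estimates} directly. Its proof presumably controls the root edges simultaneously, so the same constants would transfer.
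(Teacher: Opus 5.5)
Your proposal is correct and follows essentially the same route as the paper's proof. Both arguments fix $\varepsilon=\Delta_{\min}/2$, union-bound over root actions, obtain per-edge concentration of $\hat Q(s_0,a)$ at rate $N(s_0,a)$ from the reward/transition/child-value decomposition of Lemma~\ref{lem:VarDE-mcts-induction-VQ}, and convert $N(s_0,a)$ into $T$ via the per-node min-pulls bound of Lemma~\ref{lem:VarDE-mcts-minpulls}. Your rederivation of that min-pulls bound on the box $[0,H]$ (weights at least $|\mathcal{A}(s_0)|^{-1}e^{-H/\tau}$, floored variances at most $\max\{H^2,\bar\sigma^2\}$) is more careful than the paper's direct reuse of the $[0,1]$ BAI constants. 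In Step~2 you should make explicit that $N(s_1)\gtrsim P(s_1\mid s_0,a)\,N(s_0,a)$ comes from the transition-concentration event, since that is what lets the child bounds (stated conditionally on $N(s_1)$) be expressed in terms of $N(s_0,a)$.
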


This exponential decay of the error probability matches the standard exponential concentration rates established for Monte Carlo Tree Search in stochastic environments~\citep{browne2012survey,dam2021convex,painter2023monte}.

\subsection{Best Policy Identification}
\label{ssec:bpi}

\subsubsection{Problem Setting}
We consider the non-episodic infinite-horizon case under the same MDP and forward model assumptions as in Section~\ref{sssec:mcts-problem}.
When learning under the infinite-horizon MDP, we also need the following assumption.
\begin{assumption}[Communicating MDP]
\label{ass:communicating-mdp}
For any states
$s,s'\in\mathcal S$, there exists a stationary policy $\pi$ such that the expected hitting time of $s'$ starting from $s$ is finite:
\[
\mathbb{E}^{\pi}_{s}\!\left[T_{s'}\right] < \infty,
\qquad
T_{s'} := \inf\{t\ge 0: s_t=s'\}.
\]
\end{assumption}
Given a sampling budget of $T$ steps, the goal of \emph{best-policy identification} (BPI) is to output a policy $\hat\pi$ with small \emph{simple regret}
\[
\mathcal{R}_T \;=\; V^*(s_0)\;-\;V^{\hat\pi}(s_0),
\]
where
\[
V^{\hat\pi}(s) = \E^{\hat\pi}\!\left[\sum_{t=0}^\infty \gamma^t r(s_t,a_t) \,\middle|\, s_0=s, a_t=\hat\pi(s_t)\right]
\]
is the value of policy $\hat\pi$ at state $s$.

\begin{figure*}[t]
    \centering
    \includegraphics[width=0.495\linewidth]{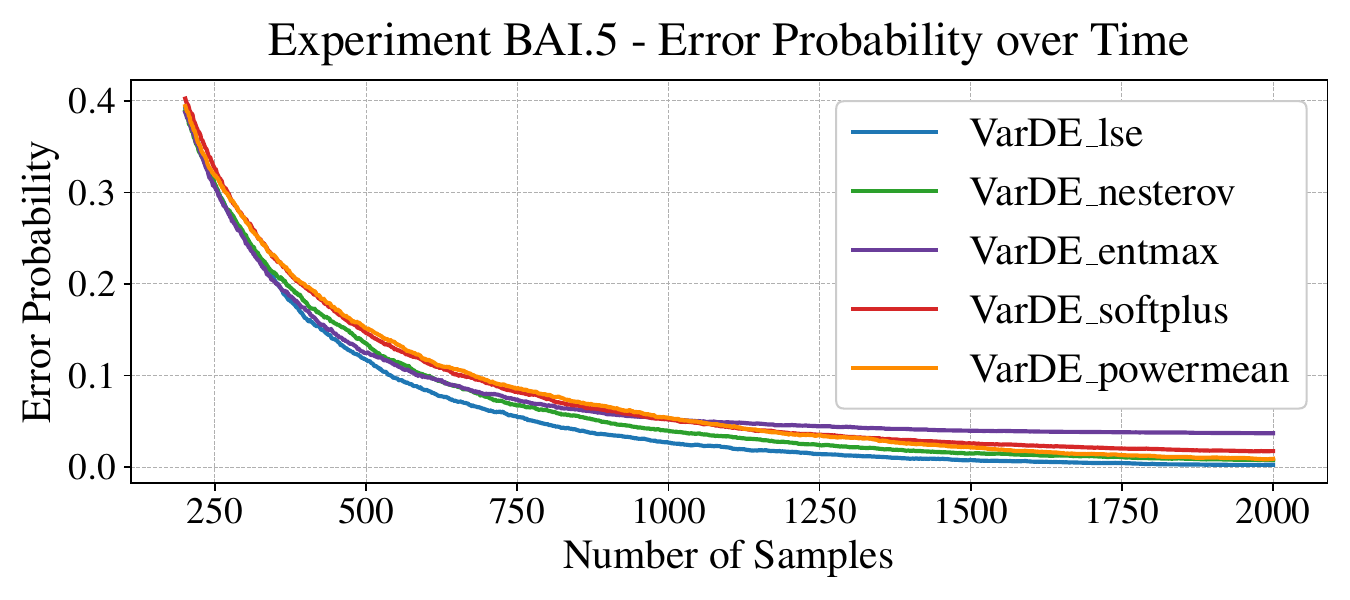}
    \includegraphics[width=0.495\linewidth]{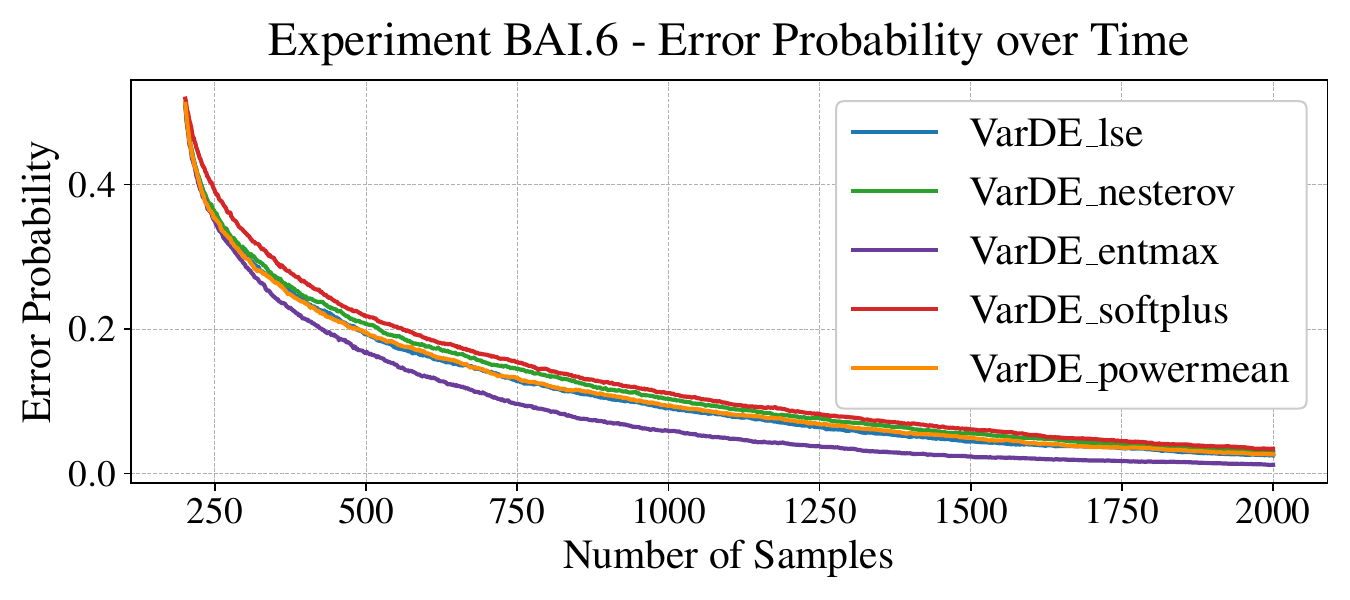}
    \caption{Error probability over time of \textsc{VarDE--BAI} with different smooth decision functions (Appendix~\ref{app:bai} gives function definitions and hyperparameters).}
    \label{fig:bai2}
\end{figure*}

\subsubsection{\textsc{VarDE--Q-learning} Algorithm}
We apply the VarDE sampling rule to Q-learning, using a decision function and influence weights defined analogously to \textsc{VarDE--MCTS}.
However, instead of targeting the variance of the cumulative return, which cannot be computed in non-episodic infinite-horizon MDP, we use the variance of the one-step TD target
\[
y = \hat r + \gamma\max_{a \in \mathcal{A}} \hat Q(s',a),
\]
which captures \emph{environmental stochasticity} (reward and transition randomness) as well as bootstrapping noise.
The complete \textsc{VarDE--Q-learning} algorithm is deferred to Appendix~\ref{app:algorithms} (Algorithm~\ref{alg:VarDE-bpi}).

\subsubsection{Theoretical Guarantees}
We state guarantees in the tabular discounted setting.
At a high level, we separate two ingredients: (i) a \emph{coverage} condition ensuring sufficient exploration of all state--action pairs, under which tabular Q-learning converges; and (ii) a standard performance-loss bound translating $Q$-function error into policy suboptimality.

\begin{lemma}[Coverage]
\label{lem:bpi-coverage}
Assume rewards satisfy $r(s,a)\in[0,1]$, $|\mathcal{S}|,|\mathcal{A}|<\infty$, and $\gamma\in(0,1)$.
Let $N_T(s,a)$ be the number of visits to $(s,a)$ up to time $T$.
Then every $(s,a)$ is visited infinitely often by \textsc{VarDE--Q-learning}, i.e.,
\[
N_T(s,a)\to\infty \quad \text{as } T\to\infty,\ \ \forall (s,a)\in\mathcal S\times\mathcal A.
\]
\end{lemma}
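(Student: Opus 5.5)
The plan is a two-level argument: a \emph{local} step showing that at any state visited infinitely often, the VarDE selection rule picks every action infinitely often; and a \emph{global} step that uses Assumption~\ref{ass:communicating-mdp} to spread infinite visitation from one state to all states.

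\textbf{Local step.} At a state $s$, \textsc{VarDE--Q-learning} selects
\[
\arg\max_a w(s,a)^2\,\tilde\sigma^2(s,a)/\bigl(N(s,a)(N(s,a)+1)\bigr),
\]
where $w$ is the LSE softmax weight of $\hat Q(s,\cdot)$ and $\tilde\sigma^2\ge\bar\sigma^2$ is the floored TD-target variance.

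I first bound the inputs. Rewards lie in $[0,1]$, $\gamma<1$, and learning rates lie in $[0,1]$, so by induction every $\hat Q(s,a)$ stays in $[0,1/(1-\gamma)]$ given a bounded initialization. Hence the TD targets lie in $[0,1/(1-\gamma)]$ as well, and $\tilde\sigma^2(s,a)\in[\bar\sigma^2,\,(1-\gamma)^{-2}/4]$. Lemma~\ref{lem:LSE-nonvanishing}, applied on $D=[0,1/(1-\gamma)]^{|\mathcal A|}$, gives $w(s,a)\ge\rho:=|\mathcal A|^{-1}e^{-1/(\tau(1-\gamma))}$, and trivially $w(s,a)\le 1$.

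Now suppose $s$ is visited infinitely often but some action $a$ satisfies $N(s,a)\le m<\infty$. Then the score of $a$ stays at least $\rho^2\bar\sigma^2/(m(m+1))>0$; with the warm start, $N\ge\eta$, so the score is well defined. Meanwhile some other action $b$ is chosen infinitely often, so $N(s,b)\to\infty$ and its score is at most $(1-\gamma)^{-2}/(4N(s,b)^2)\to 0$. Eventually $b$'s score falls below $a$'s, so $b$ can no longer be selected. Applying this to every action with unbounded count, only actions with bounded counts remain selectable from some point on. But $s$ is still visited infinitely often, so some action must absorb infinitely many of those visits, which is a contradiction. Therefore every action at an infinitely-often visited state is taken infinitely often.

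\textbf{Global step.} This is the main obstacle. Since $\mathcal S$ is finite, at least one state $s$ is visited infinitely often. Let $\mathcal I$ be the set of infinitely visited states, and suppose $\mathcal I\neq\mathcal S$.

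By Assumption~\ref{ass:communicating-mdp}, some policy reaches a state outside $\mathcal I$ from $\mathcal I$ with positive probability. So there exist $s\in\mathcal I$, an action $a$, and $s'\notin\mathcal I$ with $P(s'\mid s,a)>0$; otherwise $\mathcal I$ would be closed under all actions and no policy could leave it.

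By the local step, $(s,a)$ is taken infinitely often. Each time, the transition to $s'$ is an independent draw with probability $P(s'\mid s,a)>0$. By the conditional (L\'evy) Borel--Cantelli lemma applied along the stopping times at which $(s,a)$ is played, $s'$ is then visited infinitely often almost surely, contradicting $s'\notin\mathcal I$. Hence $\mathcal I=\mathcal S$ almost surely, and the local step then gives $N_T(s,a)\to\infty$ for every pair.

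The delicate points are two. First, the argument should be phrased almost surely: the set $\mathcal I$ is random, but there are finitely many candidate sets, so a union bound over them suffices. Second, the local step needs the learning rate to keep $\hat Q$ in the bounded box; I would use the convex-combination form of the update to guarantee that.
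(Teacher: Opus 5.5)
Your proof is correct and has the same architecture as the paper's: per-state coverage from the VarDE score (with $\hat Q\in[0,1/(1-\gamma)]$ giving nonvanishing softmax weights, plus the variance floor), then closure of the set of infinitely visited states under every action via a Borel--Cantelli argument, and finally the communicating assumption to show this set is all of $\mathcal S$. The only difference is the local step. The paper rescales $\hat Q$ to $[0,1]$ with temperature $\tau(1-\gamma)$ and invokes the quantitative bound of Lemma~\ref{lem:VarDE-min-pulls} to get $N_T(s,a)=\Omega(N_T(s))$. Your direct contradiction argument yields only $N_T(s,a)\to\infty$, which is all this lemma needs. It also sidesteps the paper's implicit reuse of the BAI variance cap $1/4$, since the TD-target variances can be as large as $(1-\gamma)^{-2}/4$; that mismatch only affects the paper's constant.
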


Under diminishing step sizes satisfying the standard Robbins--Monro conditions (as used in Alg.~\ref{alg:VarDE-bpi}) and infinite visitation, tabular Q-learning converges to the optimal action-value function. According to \citet{watkins1992qlearning}, we have the following convergence guarantees.

\begin{theorem}[\textsc{VarDE--Q-learning} convergence]
\label{thm:bpi-convergence}
Under the same assumptions as Lemma~\ref{lem:bpi-coverage}, Algorithm~\ref{alg:VarDE-bpi} satisfies
\[
\|Q_T - Q^*\|_\infty \xrightarrow[T\to\infty]{} 0
\qquad \text{with probability 1.}
\]
Consequently, the greedy policies $\hat\pi_T$ are optimal for all sufficiently large $T$ (up to tie-breaking).
\end{theorem}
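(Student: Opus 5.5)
The plan is to reduce the statement to the classical almost-sure convergence theorem for tabular Q-learning of \citet{watkins1992qlearning}, in the stochastic-approximation form of Jaakkola--Jordan--Singh / Tsitsiklis. That theorem has two hypotheses: every pair $(s,a)$ is updated infinitely often, and the step sizes $\alpha_t(s,a)$ applied at visits to $(s,a)$ satisfy $\sum_t \alpha_t(s,a)=\infty$ and $\sum_t \alpha_t(s,a)^2<\infty$ almost surely.

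The first hypothesis is exactly Lemma~\ref{lem:bpi-coverage}: under bounded rewards, finite $\mathcal S,\mathcal A$ and $\gamma<1$, we have $N_T(s,a)\to\infty$ for every pair. The second comes from the schedule in Alg.~\ref{alg:VarDE-bpi}. I will take the step size to be a count-based choice such as $\alpha=1/N(s,a)^{\omega}$ with $\omega\in(1/2,1]$. Combined with $N_T(s,a)\to\infty$, this gives the Robbins--Monro sums along the visit subsequence of each pair.

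It remains to check the noise conditions of the stochastic-approximation theorem. I will write the update as
\[
Q_{t+1}(s,a)=(1-\alpha_t)Q_t(s,a)+\alpha_t\big[(\mathcal T Q_t)(s,a)+\xi_t\big],
\]
where $\mathcal T$ is the Bellman optimality operator, a $\gamma$-contraction in $\|\cdot\|_\infty$, and $\xi_t=\hat r+\gamma\max_{a'}Q_t(s',a')-(\mathcal T Q_t)(s,a)$. Conditioned on the past $\mathcal F_t$, the noise $\xi_t$ has zero mean. This holds because, although VarDE chooses which $(s,a)$ to sample using $\mathcal F_t$-measurable quantities (influence weights and empirical TD-target variances), the next state $s'$ and the reward $\hat r$ are drawn fresh from $P(\cdot\mid s,a)$ and $\nu(\cdot\mid s,a)$. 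Its conditional variance is bounded by $C(1+\|Q_t\|_\infty^2)$ since rewards lie in $[0,1]$.

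I also need boundedness of the iterates, and I will get it by induction. If $Q_0\in[0,1/(1-\gamma)]$, then each update is a convex combination of the old value and a target lying in $[0,1/(1-\gamma)]$, so $\|Q_t\|_\infty\le 1/(1-\gamma)$ for all $t$. The asynchronous stochastic-approximation theorem then yields $\|Q_T-Q^*\|_\infty\to0$ almost surely.

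For the policy consequence, let $\delta=\min_s\big(Q^*(s,\pi^*(s))-\max_{a\notin\arg\max Q^*(s,\cdot)}Q^*(s,a)\big)>0$, which is well defined by finiteness. Once $\|Q_T-Q^*\|_\infty<\delta/2$, every greedy action of $Q_T$ lies in $\arg\max_a Q^*(s,a)$, so $\hat\pi_T$ is optimal; up to tie-breaking, the simple regret is then zero. Since the almost-sure convergence guarantees such a $T$, this holds for all sufficiently large $T$.

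The main obstacle is making sure the adaptive, data-dependent VarDE sampling rule does not break the martingale-difference structure. Selection is adaptive, but the behaviour is still an off-policy stream of transitions: the choice of $(s,a)$ depends only on $\mathcal F_t$, which is exactly the setting the asynchronous theorem permits. The nontrivial input, infinite visitation, is delegated entirely to Lemma~\ref{lem:bpi-coverage}.
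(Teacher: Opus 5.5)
Your proposal is correct and follows the paper's route. The paper likewise combines infinite visitation from Lemma~\ref{lem:bpi-coverage} with Robbins--Monro step sizes and cites the Watkins--Dayan convergence theorem; you additionally spell out the martingale-difference noise, the bound $\|Q_t\|_\infty\le 1/(1-\gamma)$ from the initialization $Q_0\equiv 1/(1-\gamma)$, and the gap argument for greedy optimality, none of which the paper writes out. One correction: Algorithm~\ref{alg:VarDE-bpi} fixes $\alpha=(H_\gamma+1)/(H_\gamma+N(s,a))$ rather than $1/N(s,a)^{\omega}$, but this schedule also lies in $(0,1]$ and satisfies $\sum_n\alpha_n=\infty$ and $\sum_n\alpha_n^2<\infty$, so your argument goes through unchanged.
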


This result is an asymptotic consistency guarantee. Unlike the BAI and MCTS guarantees above, it is not a non-asymptotic fixed-budget BPI bound; deriving such a bound for the adaptive variance-driven Q-learning rule remains an important theoretical direction.

\begin{lemma}[Greedy policy suboptimality from $Q$-error]
\label{lem:bpi-q-to-policy}
Let $\hat\pi$ be greedy w.r.t.\ $\hat Q$. If $\|\hat Q-Q^*\|_\infty\le \varepsilon$, then
\[
\|V^{\hat\pi}-V^*\|_\infty \le \frac{2\varepsilon}{1-\gamma}.
\]
In particular, $\mathcal{R}_T \le \frac{2}{1-\gamma}\|Q_T-Q^*\|_\infty$.
\end{lemma}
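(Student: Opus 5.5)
We prove Lemma~\ref{lem:bpi-q-to-policy} with the standard performance-difference argument, using only Bellman operators and their $\gamma$-contraction in $\|\cdot\|_\infty$. Write $T^{\hat\pi}$ for the policy Bellman operator, $(T^{\hat\pi}V)(s)=r(s,\hat\pi(s))+\gamma\sum_{s'}P(s'\mid s,\hat\pi(s))V(s')$, and $T^*$ for the optimality operator. We use three facts: $V^{\hat\pi}$ is the fixed point of $T^{\hat\pi}$; $V^*(s)=\max_a Q^*(s,a)$; and $Q^*(s,a)=r(s,a)+\gamma\sum_{s'}P(s'\mid s,a)V^*(s')$. In particular, $(T^{\hat\pi}V^*)(s)=Q^*(s,\hat\pi(s))$.

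\textbf{Step 1: the greedy action is nearly optimal for $Q^*$.} Fix $s$, let $a^*\in\arg\max_a Q^*(s,a)$, and let $\hat a=\hat\pi(s)$. Since $\hat a$ is greedy for $\hat Q$, we have $\hat Q(s,\hat a)\ge \hat Q(s,a^*)$. Combining this with the $\varepsilon$-closeness twice gives
\[
Q^*(s,\hat a)\ge \hat Q(s,\hat a)-\varepsilon\ge \hat Q(s,a^*)-\varepsilon\ge Q^*(s,a^*)-2\varepsilon=V^*(s)-2\varepsilon .
\]
Equivalently, $\|V^*-T^{\hat\pi}V^*\|_\infty\le 2\varepsilon$, and the difference $V^*-T^{\hat\pi}V^*$ is nonnegative.

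\textbf{Step 2: propagate through the fixed point.} Decompose
\[
V^*-V^{\hat\pi}=(V^*-T^{\hat\pi}V^*)+(T^{\hat\pi}V^*-T^{\hat\pi}V^{\hat\pi}).
\]
The first term has sup-norm at most $2\varepsilon$ by Step~1. The second term has sup-norm at most $\gamma\|V^*-V^{\hat\pi}\|_\infty$ by contraction, since $P$ is a stochastic kernel. Taking norms and rearranging, which is valid because $\gamma<1$ and all values are bounded by $1/(1-\gamma)$ so the norms are finite, yields
\[
\|V^*-V^{\hat\pi}\|_\infty\le \frac{2\varepsilon}{1-\gamma}.
\]

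\textbf{Step 3: the regret corollary.} Apply the bound with $\hat Q=Q_T$ and $\varepsilon=\|Q_T-Q^*\|_\infty$. Then
\[
\mathcal R_T=V^*(s_0)-V^{\hat\pi_T}(s_0)\le\|V^*-V^{\hat\pi_T}\|_\infty\le \frac{2}{1-\gamma}\|Q_T-Q^*\|_\infty .
\]

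There is no real obstacle. The one step that needs care is Step~1, which must use greedy tie-breaking correctly: any maximizer of $\hat Q(s,\cdot)$ works. It must also use the $Q^*$ Bellman identity to recognize $T^{\hat\pi}V^*$ as $Q^*(\cdot,\hat\pi(\cdot))$; this relies on the infinite-horizon discounted, stationary setting of Section~\ref{ssec:bpi}, with $\gamma\in(0,1)$.
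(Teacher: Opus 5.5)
Your proof is correct. The paper gives no proof of this lemma to compare against: Appendix~\ref{appendix:BPI} proves only the coverage lemma, and the main text invokes this result as ``a standard performance-loss bound.'' Your argument is the standard one, and it fills that omission. It first shows that the greedy action is $2\varepsilon$-optimal for $Q^*$. It then takes one $\gamma$-contraction step through the fixed point $V^{\hat\pi}=T^{\hat\pi}V^{\hat\pi}$ and rearranges.

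The point you flagged deserves to stay explicit. The displayed definition of $Q^*$ in Section~\ref{sssec:mcts-problem} is finite-horizon, so $Q^*$ and $V^*$ must be reread in the discounted infinite-horizon, stationary sense of Section~\ref{ssec:bpi}. Your Step~1 needs the identities $V^*(s)=\max_a Q^*(s,a)$ and $T^{\hat\pi}V^*=Q^*(\cdot,\hat\pi(\cdot))$, and your Step~2 needs $\gamma<1$.
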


\section{Experiments}
\label{sec:experiments}

\paragraph{Common protocol.}
Unless otherwise stated, curves report the mean over independent runs and shaded regions indicate 95\% confidence intervals.
Hyperparameters for each baseline are tuned by grid search on a separate validation split (same budget as evaluation), and we report the best validation configuration.
Implementation details and additional results are provided in Appendix~\ref{app:exp}. Code is available at \url{https://github.com/luongkhang04/VarDE}.

\begin{table}[t]
\caption{Final error probability (in \%) for Best-Arm Identification under four benchmark settings.}
\label{tab:bai1}
\centering
\small
\setlength{\tabcolsep}{5pt}
\begin{sc}
\begin{tabular}{@{}lcccc@{}}
\toprule
Setting & BAI.1 & BAI.2 & BAI.3 & BAI.4 \\
No. of Pulls & 1200 & 1000 & 200 & 150 \\
\midrule
Uniform      & 33.23 & 38.69 & 28.53 & 32.98 \\
SH           & 29.05 & 29.08 & 15.75 & 19.56 \\
SR           & 16.06 & 20.70 & 12.39 & 15.41 \\
CR-A         & 17.00 & 21.07 & 9.83  & 11.93 \\
CR-C         & 16.71 & 20.84 & 11.80 & 12.71 \\
UCBE$_2$     & 15.67 & 21.59 & 12.93 & 19.32 \\
UCBE$_4$     & 19.38 & 25.92 & 16.56 & 22.99 \\
UCBE$_8$     & 23.54 & 28.59 & 19.25 & 26.09 \\
UGapE$_2$    & 15.64 & 21.75 & 13.48 & 20.55 \\
UGapE$_4$    & 20.43 & 26.43 & 17.50 & 23.48 \\
UGapE$_8$    & 24.96 & 30.40 & 19.55 & 26.01 \\
\midrule
VarDE$_{0.05}$ & \textbf{12.87} & \textbf{17.27} & 11.56 & \textbf{11.83} \\
VarDE$_{0.1}$  & 14.04 & 19.31 & \textbf{7.34} & 13.81 \\
VarDE$_{0.15}$ & 16.56 & 21.26 & 8.21 & 18.52 \\
\bottomrule
\end{tabular}
\end{sc}
\end{table}

\begin{figure*}[t]
    \centering
    \includegraphics[width=0.43\linewidth]{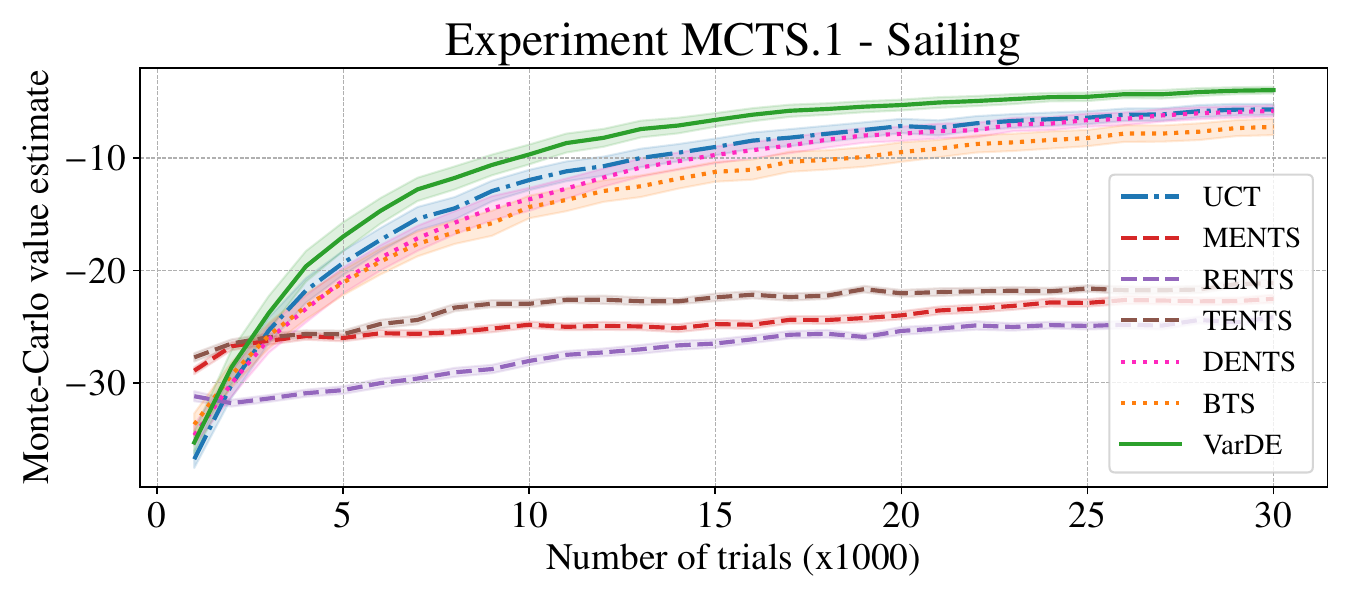}
    \includegraphics[width=0.43\linewidth]{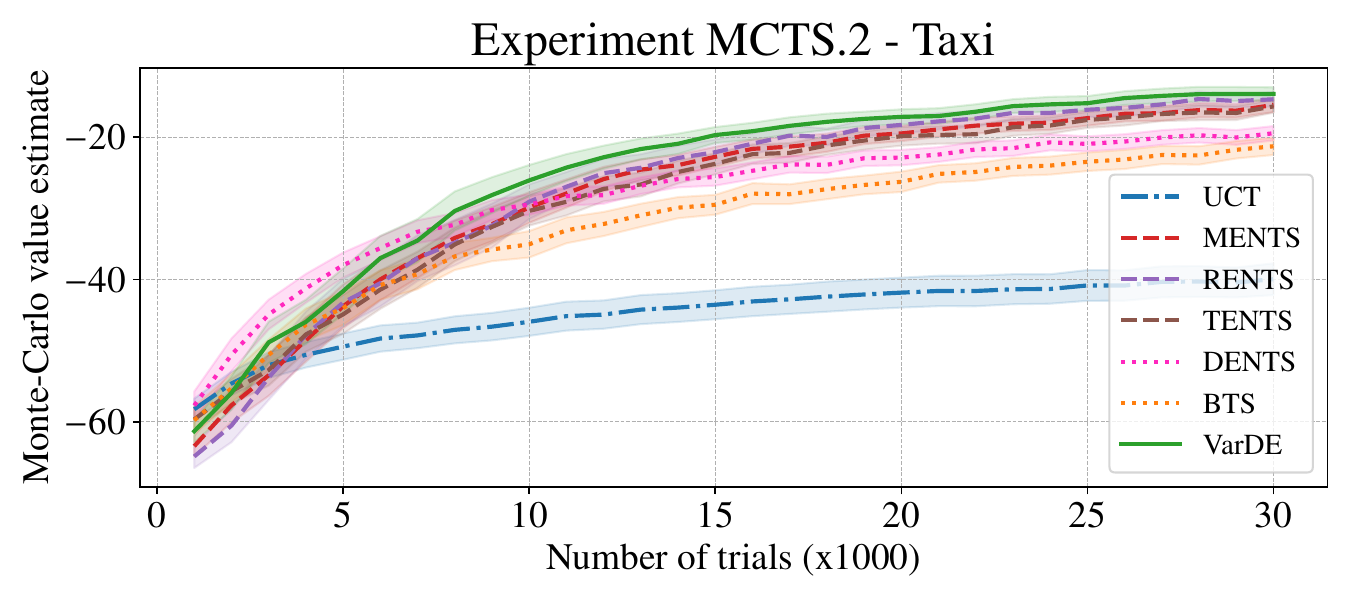}
    \includegraphics[width=0.43\linewidth]{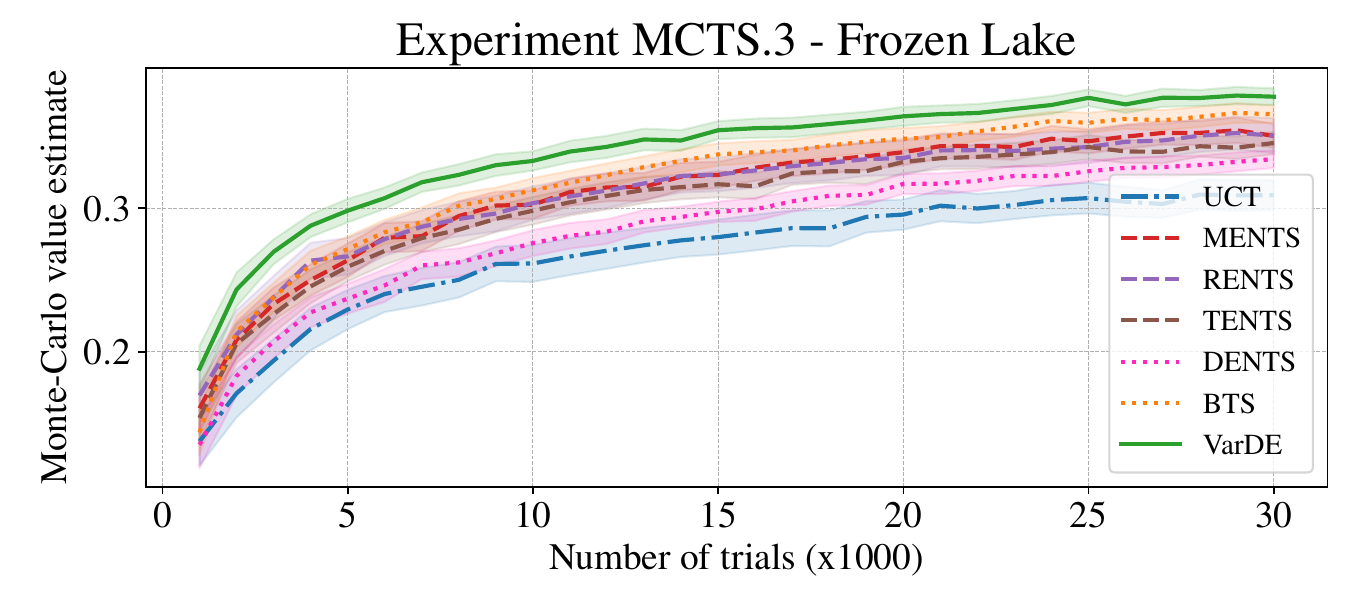}
    \vspace{-0.5em}
    \caption{Monte Carlo value estimates of the recommended root action/policy during planning (mean $\pm$ 95\% CI over runs).}
    \label{fig:mcts}

    \vspace{1em}
    \includegraphics[width=1.0\linewidth]{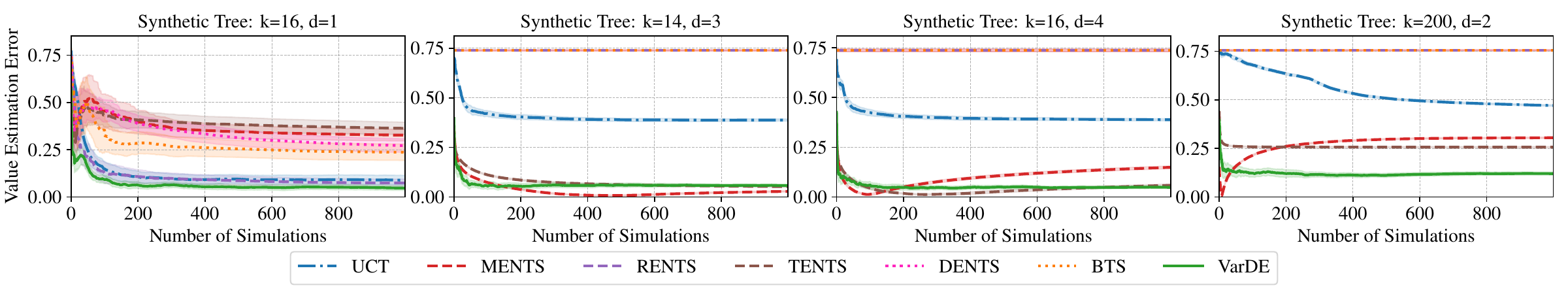}
    \vspace{-1em}
    \caption{Value estimation error of \textsc{VarDE--MCTS} and other algorithms on synthetic tree (Lower is better).}
    \label{fig:mcts2}
\end{figure*}

\subsection{Best-Arm Identification}
\label{ssec:bai-experiments}

We evaluate \textsc{VarDE--BAI} on four standard Best-Arm Identification (BAI) benchmarks and compare against \textsc{Uniform}, \textsc{UCB-E}~\cite{audibert2010best} ($\alpha\in\{2,4,8\}$), \textsc{UGapE}~\cite{gabillon2012best} ($\alpha\in\{2,4,8\}$), \textsc{Successive Halving}~\cite{karnin2013almost} (\textsc{SH}), \textsc{Successive Rejects}~\cite{audibert2010best}  (\textsc{SR}), and \textsc{Continuous Rejects}~\cite{wang2023best} (\textsc{CR}). All results report the error probability under a fixed sampling budget, averaged over $20{,}000$ independent runs. Full experimental details are deferred to Appendix~\ref{app:bai}.

Table~\ref{tab:bai1} summarizes the results. \textsc{VarDE--BAI} achieves the lowest error probability in all four experiments. These results empirically validate the advantage of our method.

We also provide empirical comparisons of \textsc{VarDE--BAI} with different decision functions. Details about the implemented decision functions are provided in Appendix~\ref{app:bai}. The results, presented in Figure~\ref{fig:bai2}, show that \textsc{VarDE--BAI} with various smooth decision functions performs competitively, demonstrating the flexibility of the VarDE methodology.

Appendix~\ref{app:bai-ablation} reports additional ablations and parameter sensitivity studies. The ablations show that influence weights and empirical variance are both necessary: combining the two consistently outperforms using either component alone. The sensitivity results show that the method is more sensitive to $\tau$ than to $\bar\sigma$. The variance floor mainly stabilizes early estimates, performance near the best $\tau$ is comparatively stable across a broad range of $\bar\sigma$.
Appendix~\ref{app:approximation-study} directly quantifies this approximation error by comparing the first-order variance surrogate with the full nonlinear variance of the $\operatorname{LSE}$ decision function. The discrepancy is largest in the sharp small-temperature regime and becomes substantially smaller for moderate $\tau$ as sampling progresses.

\subsection{Monte Carlo Tree Search}
\label{ssec:mcts-experiments}

\begin{figure*}[t]
    \centering
    \includegraphics[width=0.95\linewidth]{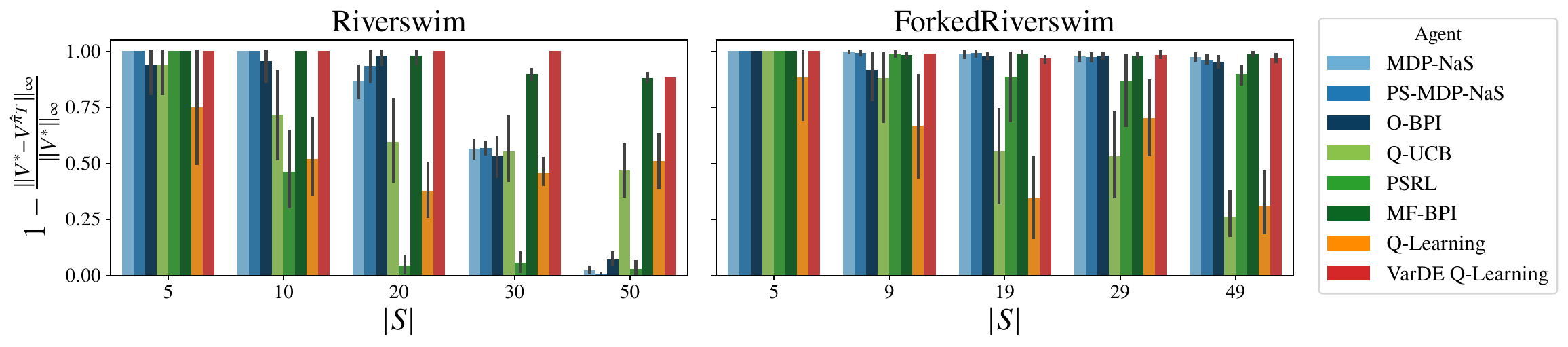}
    \vspace{-0.5em}
    \caption{Normalized value proximity of \textsc{VarDE--Q-learning} against model-free and model-based methods after a fixed interaction budget $T$ on \textsc{RiverSwim} and \textsc{ForkedRiverSwim} (mean $\pm$ 95\% CI).}
    \label{fig:bpi}
\end{figure*}

We evaluate \textsc{VarDE--MCTS} on three grid-world environments (\textsc{Sailing}, \textsc{Taxi}, \textsc{FrozenLake}) and a synthetic tree benchmark, and compare against \textsc{UCT}~\citep{kocsis2006bandit}, \textsc{MENTS}~\cite{xiao2019maximum}, \textsc{RENTS}, \textsc{TENTS}~\cite{dam2021convex}, \textsc{DENTS}, and \textsc{BTS}~\cite{painter2023monte}. For the grid worlds we report Monte Carlo value estimates of the recommended root action/policy after every 1000 trials (simulations), averaged over 100 independent runs (Fig.~\ref{fig:mcts}). For the synthetic tree we report root value estimation error over simulations (Fig.~\ref{fig:mcts2}). Full experimental details are deferred to Appendix~\ref{app:mcts}.

Across all environments with sparse rewards (\textsc{FrozenLake}) or dense stochastic rewards (\textsc{Sailing}), the policies recommended by \textsc{VarDE--MCTS} achieve the highest Monte Carlo value estimates throughout planning. On the synthetic tree benchmark, \textsc{VarDE--MCTS} attains the lowest root value estimation error across a range of branching factors and depths. Overall, these results highlight the advantage of decision-variance minimization in MCTS, particularly in highly stochastic settings (\textsc{Sailing} and \textsc{Synthetic Tree} with $k=200, d=2$), where several baselines exhibit unstable search and fail to reliably reduce the root value estimation error.

\subsection{Best Policy Identification}
\label{ssec:bpi-experiments}
We evaluate \textsc{VarDE--Q-learning} on \textsc{RiverSwim} and \textsc{ForkedRiverSwim} across a range of state sizes $|\mathcal{S}|$.
Each method interacts with the environment for a fixed budget of $T$ steps and returns the greedy policy $\hat\pi_T$ induced by its final $Q$-estimate.
We evaluate $\hat\pi_T$ \emph{exactly} on the true MDP (iterative policy evaluation with tolerance $10^{-6}$) and compute $V^*$ by policy iteration.
In the main plots we report the normalized value proximity $1-\|V^*-V^{\hat\pi_T}\|_\infty/\|V^*\|_\infty$.
Baselines include model-free \textsc{Q-UCB}~\citep{jin2018q}, \textsc{PSRL}~\citep{osband2013more}, \textsc{MF-BPI}~\citep{russo2023model}, and $\varepsilon$-\textsc{Q-learning}, as well as model-based \textsc{MDP-NaS}, its posterior-sampling variant \textsc{PS-MDP-NaS}~\citep{al2021navigating}, and \textsc{O-BPI}~\citep{russo2023model}.
Full experimental details and applied hyperparameters are provided in Appendix~\ref{app:bpi}.

Fig.~\ref{fig:bpi} demonstrates that \textsc{VarDE--Q-learning} is substantially more robust than existing baselines as the environments scale.
On \textsc{RiverSwim}, multiple baselines deteriorate rapidly with increasing $|S|$ and show large run-to-run variability, consistent with the difficulty of learning from rare and noisy returns.
In contrast, \textsc{VarDE--Q-learning} stays near-optimal across all tested sizes and has tight confidence intervals, indicating both higher final policy quality and greater stability under stochasticity.
A similar pattern holds on \textsc{ForkedRiverSwim}: \textsc{VarDE--Q-learning} matches or exceeds the best-performing baselines across sizes and remains consistently stable.
These results support the main premise of VarDE—minimizing decision-level uncertainty yields superior fixed-budget performance in highly stochastic environments.

\section{Discussion and Future Work}
VarDE casts pure exploration as \emph{decision-variance minimization}: allocate samples to the component that is both (i) influential to the final recommendation and (ii) empirically noisy. Across BAI, MCTS, and best-policy identification, this view is most beneficial in highly stochastic and heteroscedastic settings, where common optimism/entropy heuristics may misallocate budget by reacting to noise rather than to decision-relevant uncertainty.

\paragraph{Limitations.}
Our rule is derived from a first-order variance approximation, which is most accurate once estimates concentrate and the surrogate decision has controlled curvature; early in learning, variance estimates may be unstable, and the choice of surrogate/temperature introduces a bias--smoothness trade-off. In RL and planning, trajectory coupling and bootstrapping create dependencies that are only approximately captured by the diagonal variance view, suggesting room for sharper analysis. The RL guarantee in this paper is asymptotic rather than a finite-sample fixed-budget BPI bound.

\paragraph{Future work.}
Promising directions include decision-variance control with correlated uncertainty (full covariance); adaptive temperature schedules that automatically balance bias and variance; higher-order allocation rules using Hessian and moment information when the additional estimation cost is justified; non-asymptotic theory for variance-driven RL; and scalable influence estimation under function approximation.

\section*{Impact Statement}
This paper presents work whose goal is to advance the field of Machine Learning. There are many potential societal consequences of our work, none which we feel must be specifically highlighted here.

\section*{Acknowledgements}
This research is funded by Vietnam National Foundation for Science and Technology Development (NAFOSTED) under grant number 102.01-2025.47. The author Hung The Tran is funded by the Quantum AI \& Cyber Security Institute, FPT Corporation.

\bibliographystyle{icml2026}
\bibliography{references}

\begin{thebibliography}{25}
\providecommand{\natexlab}[1]{#1}
\providecommand{\url}[1]{\texttt{#1}}
\expandafter\ifx\csname urlstyle\endcsname\relax
  \providecommand{\doi}[1]{doi: #1}\else
  \providecommand{\doi}{doi: \begingroup \urlstyle{rm}\Url}\fi

\bibitem[Al~Marjani et~al.(2021)Al~Marjani, Garivier, and Proutiere]{al2021navigating}
Al~Marjani, A., Garivier, A., and Proutiere, A.
\newblock Navigating to the best policy in {Markov} decision processes.
\newblock \emph{Advances in Neural Information Processing Systems}, 34:\penalty0 25852--25864, 2021.

\bibitem[Audibert et~al.(2010)Audibert, Bubeck, and Munos]{audibert2010best}
Audibert, J.-Y., Bubeck, S., and Munos, R.
\newblock Best arm identification in multi-armed bandits.
\newblock In \emph{Proceedings of the 23rd Annual Conference on Learning Theory (COLT)}, pp.\  41--53, 2010.

\bibitem[Azar et~al.(2017)Azar, Osband, and Munos]{azar2017ucbvi}
Azar, M.~G., Osband, I., and Munos, R.
\newblock Minimax regret bounds for reinforcement learning.
\newblock In Precup, D. and Teh, Y.~W. (eds.), \emph{Proceedings of the 34th International Conference on Machine Learning}, volume~70 of \emph{Proceedings of Machine Learning Research}, pp.\  263--272. PMLR, 06--11 Aug 2017.
\newblock URL \url{https://proceedings.mlr.press/v70/azar17a.html}.

\bibitem[Browne et~al.(2012)Browne, Powley, Whitehouse, Lucas, Cowling, Rohlfshagen, Tavener, Perez, Samothrakis, and Colton]{browne2012survey}
Browne, C.~B., Powley, E., Whitehouse, D., Lucas, S.~M., Cowling, P.~I., Rohlfshagen, P., Tavener, S., Perez, D., Samothrakis, S., and Colton, S.
\newblock A survey of {Monte Carlo} tree search methods.
\newblock \emph{IEEE Transactions on Computational Intelligence and {AI} in Games}, 4\penalty0 (1):\penalty0 1--43, 2012.
\newblock \doi{10.1109/TCIAIG.2012.2186810}.

\bibitem[Dam et~al.(2020)Dam, Klink, D'Eramo, Peters, and Pajarinen]{dam2020generalized}
Dam, T., Klink, P., D'Eramo, C., Peters, J., and Pajarinen, J.
\newblock Generalized mean estimation in monte-carlo tree search.
\newblock In \emph{IJCAI}, 2020.

\bibitem[Dam et~al.(2024)Dam, Maillard, and Kaufmann]{pmlr-v244-dam24a}
Dam, T., Maillard, O.-A., and Kaufmann, E.
\newblock Power mean estimation in stochastic monte-carlo tree search.
\newblock In Kiyavash, N. and Mooij, J.~M. (eds.), \emph{Proceedings of the Fortieth Conference on Uncertainty in Artificial Intelligence}, volume 244 of \emph{Proceedings of Machine Learning Research}, pp.\  894--918. PMLR, 15--19 Jul 2024.
\newblock URL \url{https://proceedings.mlr.press/v244/dam24a.html}.

\bibitem[Dam et~al.(2025{\natexlab{a}})Dam, D'Eramo, Peters, and Pajarinen]{unifiedMCTS}
Dam, T., D'Eramo, C., Peters, J., and Pajarinen, J.
\newblock A unified perspective on value backup and exploration in monte-carlo tree search.
\newblock \emph{J. Artif. Int. Res.}, 81, January 2025{\natexlab{a}}.
\newblock ISSN 1076-9757.
\newblock \doi{10.1613/jair.1.16019}.
\newblock URL \url{https://doi.org/10.1613/jair.1.16019}.

\bibitem[Dam(2025)]{pmlr-v267-dam25b}
Dam, T.~Q.
\newblock Power mean estimation in stochastic continuous {M}onte-{C}arlo tree search.
\newblock In Singh, A., Fazel, M., Hsu, D., Lacoste-Julien, S., Berkenkamp, F., Maharaj, T., Wagstaff, K., and Zhu, J. (eds.), \emph{Proceedings of the 42nd International Conference on Machine Learning}, volume 267 of \emph{Proceedings of Machine Learning Research}, pp.\  12344--12376. PMLR, 13--19 Jul 2025.
\newblock URL \url{https://proceedings.mlr.press/v267/dam25b.html}.

\bibitem[Dam et~al.(2021)Dam, D'Eramo, Peters, and Pajarinen]{dam2021convex}
Dam, T.~Q., D'Eramo, C., Peters, J., and Pajarinen, J.
\newblock Convex regularization in {Monte-Carlo} tree search.
\newblock In Meila, M. and Zhang, T. (eds.), \emph{Proceedings of the 38th International Conference on Machine Learning}, volume 139 of \emph{Proceedings of Machine Learning Research}, pp.\  2365--2375. PMLR, 18--24 Jul 2021.
\newblock URL \url{https://proceedings.mlr.press/v139/dam21a.html}.

\bibitem[Dam et~al.(2025{\natexlab{b}})Dam, Panaganti, Driss, and Wierman]{dam2025online}
Dam, T.~Q., Panaganti, K., Driss, B., and Wierman, A.
\newblock Online robust reinforcement learning through monte-carlo planning.
\newblock In \emph{International Conference on Machine Learning}, pp.\  12314--12343. PMLR, 2025{\natexlab{b}}.

\bibitem[Dam et~al.(2025{\natexlab{c}})Dam, Stenger, Schneider, Pajarinen, D'Eramo, and Maillard]{pmlr-v267-dam25c}
Dam, T.~Q., Stenger, P., Schneider, L., Pajarinen, J., D'Eramo, C., and Maillard, O.-A.
\newblock {M}onte-{C}arlo tree search with uncertainty propagation via optimal transport.
\newblock In Singh, A., Fazel, M., Hsu, D., Lacoste-Julien, S., Berkenkamp, F., Maharaj, T., Wagstaff, K., and Zhu, J. (eds.), \emph{Proceedings of the 42nd International Conference on Machine Learning}, volume 267 of \emph{Proceedings of Machine Learning Research}, pp.\  12377--12401. PMLR, 13--19 Jul 2025{\natexlab{c}}.
\newblock URL \url{https://proceedings.mlr.press/v267/dam25c.html}.

\bibitem[Domingues et~al.(2021)Domingues, M{\'e}nard, Kaufmann, and Valko]{domingues2021episodic}
Domingues, O.~D., M{\'e}nard, P., Kaufmann, E., and Valko, M.
\newblock Episodic reinforcement learning in finite {MDP}s: Minimax lower bounds revisited.
\newblock In Feldman, V., Ligett, K., and Sabato, S. (eds.), \emph{Proceedings of the 32nd International Conference on Algorithmic Learning Theory}, volume 132 of \emph{Proceedings of Machine Learning Research}, pp.\  578--598. PMLR, 16--19 Mar 2021.
\newblock URL \url{https://proceedings.mlr.press/v132/domingues21a.html}.

\bibitem[Dvoretzky et~al.(1956)Dvoretzky, Kiefer, and Wolfowitz]{DvoretzkyKieferWolfowitz1956}
Dvoretzky, A., Kiefer, J., and Wolfowitz, J.
\newblock Asymptotic minimax character of the sample distribution function and of the classical multinomial estimator.
\newblock \emph{The Annals of Mathematical Statistics}, 27\penalty0 (3):\penalty0 642--669, 1956.
\newblock \doi{10.1214/aoms/1177728174}.

\bibitem[Gabillon et~al.(2012)Gabillon, Ghavamzadeh, and Lazaric]{gabillon2012best}
Gabillon, V., Ghavamzadeh, M., and Lazaric, A.
\newblock Best arm identification: A unified approach to fixed budget and fixed confidence.
\newblock In \emph{Advances in Neural Information Processing Systems}, pp.\  3212--3220. Curran Associates, 2012.

\bibitem[Jin et~al.(2018)Jin, Allen-Zhu, Bubeck, and Jordan]{jin2018q}
Jin, C., Allen-Zhu, Z., Bubeck, S., and Jordan, M.~I.
\newblock Is {Q}-learning provably efficient?
\newblock In \emph{Advances in Neural Information Processing Systems}, pp.\  4868--4878, 2018.

\bibitem[Karnin et~al.(2013)Karnin, Koren, and Somekh]{karnin2013almost}
Karnin, Z., Koren, T., and Somekh, O.
\newblock Almost optimal exploration in multi-armed bandits.
\newblock In Dasgupta, S. and McAllester, D. (eds.), \emph{Proceedings of the 30th International Conference on Machine Learning}, number~3 in Proceedings of Machine Learning Research, pp.\  1238--1246, Atlanta, Georgia, USA, 17--19 Jun 2013. PMLR.
\newblock URL \url{https://proceedings.mlr.press/v28/karnin13.html}.

\bibitem[Kocsis \& Szepesv{\'a}ri(2006)Kocsis and Szepesv{\'a}ri]{kocsis2006bandit}
Kocsis, L. and Szepesv{\'a}ri, C.
\newblock Bandit based {Monte-Carlo} planning.
\newblock In F{\"u}rnkranz, J., Scheffer, T., and Spiliopoulou, M. (eds.), \emph{Machine Learning: {ECML} 2006}, volume 4212 of \emph{Lecture Notes in Computer Science}, pp.\  282--293. Springer, 2006.
\newblock \doi{10.1007/11871842_29}.

\bibitem[Marjani \& Proutiere(2021)Marjani and Proutiere]{marjani2021adaptive}
Marjani, A.~A. and Proutiere, A.
\newblock Adaptive sampling for best policy identification in {Markov} decision processes.
\newblock In Meila, M. and Zhang, T. (eds.), \emph{Proceedings of the 38th International Conference on Machine Learning}, volume 139 of \emph{Proceedings of Machine Learning Research}, pp.\  7459--7468. PMLR, 18--24 Jul 2021.
\newblock URL \url{https://proceedings.mlr.press/v139/marjani21a.html}.

\bibitem[M{\'e}nard et~al.(2021)M{\'e}nard, Domingues, Jonsson, Kaufmann, Leurent, and Valko]{menard2021fast}
M{\'e}nard, P., Domingues, O.~D., Jonsson, A., Kaufmann, E., Leurent, E., and Valko, M.
\newblock Fast active learning for pure exploration in reinforcement learning.
\newblock In Meila, M. and Zhang, T. (eds.), \emph{Proceedings of the 38th International Conference on Machine Learning}, volume 139 of \emph{Proceedings of Machine Learning Research}, pp.\  7599--7608. PMLR, 18--24 Jul 2021.
\newblock URL \url{https://proceedings.mlr.press/v139/menard21a.html}.

\bibitem[Osband et~al.(2013)Osband, Russo, and Van~Roy]{osband2013more}
Osband, I., Russo, D., and Van~Roy, B.
\newblock (more) efficient reinforcement learning via posterior sampling.
\newblock In \emph{Advances in Neural Information Processing Systems}, pp.\  3003--3011, 2013.

\bibitem[Painter et~al.(2023)Painter, Baioumy, Hawes, and Lacerda]{painter2023monte}
Painter, M., Baioumy, M., Hawes, N., and Lacerda, B.
\newblock {Monte Carlo} tree search with {Boltzmann} exploration.
\newblock In Oh, A., Naumann, T., Globerson, A., Saenko, K., Hardt, M., and Levine, S. (eds.), \emph{Advances in Neural Information Processing Systems}, volume~36, pp.\  78181--78192. Curran Associates, Inc., 2023.
\newblock URL \url{https://proceedings.neurips.cc/paper_files/paper/2023/file/f670ef96387d9a5a8a51e2ed80cb148d-Paper-Conference.pdf}.

\bibitem[Russo \& Proutiere(2023)Russo and Proutiere]{russo2023model}
Russo, A. and Proutiere, A.
\newblock Model-free active exploration in reinforcement learning.
\newblock In \emph{Advances in Neural Information Processing Systems}, pp.\  54740--54753, 2023.

\bibitem[Wang et~al.(2023)Wang, Tzeng, and Proutiere]{wang2023best}
Wang, P.-A., Tzeng, R.-C., and Proutiere, A.
\newblock Best arm identification with fixed budget: A large deviation perspective.
\newblock In \emph{Advances in Neural Information Processing Systems}, pp.\  16804--16815, 2023.

\bibitem[Watkins \& Dayan(1992)Watkins and Dayan]{watkins1992qlearning}
Watkins, C. J. C.~H. and Dayan, P.
\newblock {Q}-learning.
\newblock \emph{Machine Learning}, 8\penalty0 (3):\penalty0 279--292, 1992.
\newblock \doi{10.1007/BF00992698}.

\bibitem[Xiao et~al.(2019)Xiao, Huang, Mei, Schuurmans, and M{\"u}ller]{xiao2019maximum}
Xiao, C., Huang, R., Mei, J., Schuurmans, D., and M{\"u}ller, M.
\newblock Maximum entropy {Monte-Carlo} planning.
\newblock In \emph{Advances in Neural Information Processing Systems}, pp.\  9516--9524, 2019.

\end{thebibliography}

\newpage
\onecolumn
\appendix
\section*{Appendix}
\phantomsection
\label{app:appendix}
This appendix provides additional proofs, full algorithm pseudo-codes, and experimental details referenced in the main text.
Unless stated otherwise, notation matches the main text.

\subsection*{Contents}
\phantomsection
\label{app:appendix-contents}
\begin{center}
\begin{tabular}{@{}p{0.85\linewidth}r@{}}
\toprule
\textbf{Item} & \textbf{Page} \\
\midrule
\hyperref[appendix:sectionVarDE-methodology]{Appendix~\ref*{appendix:sectionVarDE-methodology}: Proofs for \textsc{VarDE} Methodology} & \pageref*{appendix:sectionVarDE-methodology} \\
\hyperref[appendix:BAI]{Appendix~\ref*{appendix:BAI}: Proofs for \textsc{VarDE--BAI}} & \pageref*{appendix:BAI} \\
\hyperref[appendix:MCTS]{Appendix~\ref*{appendix:MCTS}: Proofs for \textsc{VarDE--MCTS}} & \pageref*{appendix:MCTS} \\
\hyperref[appendix:BPI]{Appendix~\ref*{appendix:BPI}: Proofs for \textsc{VarDE--Q-learning}} & \pageref*{appendix:BPI} \\
\hyperref[app:algorithms]{Appendix~\ref*{app:algorithms}: Full Pseudo-codes of \textsc{VarDE-BAI}, \textsc{VarDE-MCTS}, and \textsc{VarDE-Q-learning}} & \pageref*{app:algorithms} \\
\hyperref[app:exp]{Appendix~\ref*{app:exp}: Experiment Details} & \pageref*{app:exp} \\
\bottomrule
\end{tabular}
\end{center}
\bigskip

\section{Proofs for Section~\ref{sec:VarDE-methodology} (VarDE Methodology)}
\label{appendix:sectionVarDE-methodology}

\subsection{Proof of Lemma~\ref{lem:linear-approx} (Local Linear Approximation)}
\begin{proof}
Let $\Delta := \hat{\mu}-\mu$. By Taylor's theorem with remainder on the segment between $\mu$ and $\hat\mu$, which lies in $D=[a,b]^n$ under Assumption~\ref{ass:smooth-f},
\[
f(\hat{\mu}) = f(\mu) + \nabla f(\mu)^\top \Delta + \frac12\Delta^\top H_f(\xi)\Delta,
\]
for some $\xi$ on the segment $[\mu,\hat{\mu}]$. Since $\|H_f(\xi)\|_{\mathrm{op}}\le M$,
\[
\left|\frac12\Delta^\top H_f(\xi)\Delta\right|
\le \frac12\|H_f(\xi)\|_{\mathrm{op}}\|\Delta\|_2^2
\le \frac{M}{2}\|\Delta\|_2^2.
\]
Thus the remainder is $\mathcal{O}(M\|\hat\mu-\mu\|_2^2)$.
Writing $w_i(\mu)=\partial_i f(\mu)$ and $c=f(\mu)-\nabla f(\mu)^\top\mu$ gives the stated first-order form.
\end{proof}

\subsection{Auxiliary min-pulls guarantee}
Throughout the proof, write
\[
\Delta(t):=\hat\mu(t)-\mu,
\qquad
\Delta_i(t):=\hat\mu_i(t)-\mu_i,
\qquad
Y_t:=f(\hat\mu(t)),
\]
and let $g:=\nabla f(\mu)$, so that $g_i=w_i(\mu)$.
Under Assumption~\ref{ass:smooth-f}, bounded observations imply $\mu,\hat\mu(t)\in D=[a,b]^n$ for all $t$, and the segment between them is contained in $D$.
Moreover, because each $w_i$ is continuous on the compact set $D$,
\[
W:=\max_{x\in D}\max_{i\in[n]} |w_i(x)|<\infty.
\]

\begin{lemma}[General min-pulls guarantee]
\label{lem:generic-min-pulls-app}
Under Assumption~\ref{ass:smooth-f}, the variance floor in Section~\ref{sampling-rule}, and Assumption~\ref{ass:nonvanishing-influence}, there exist constants $c>0$ and $C>0$, depending only on $n,\rho,W,\bar\sigma^2,a,b$, such that for all $t\ge n$ and all $i\in[n]$,
\[
N_i(t)\ge c\frac{t}{n}-C.
\]
In particular, $N_i(t)=\Omega(t)$ for every component.
\end{lemma}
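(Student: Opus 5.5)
We argue deterministically about the greedy index: the claim holds pathwise, for every realization, so no concentration is needed. At any step, the score of component $i$ under the sampling rule of Section~\ref{sampling-rule} is
\[
S_i(t)=w_i(\hat\mu(t))^2\,\frac{\tilde\sigma_i^2(t)}{N_i(t)(N_i(t)+1)} .
\]
This score is sandwiched between two explicit functions of $N_i(t)$ alone. Since $\hat\mu(t)\in D$, Assumption~\ref{ass:nonvanishing-influence} gives $w_i(\hat\mu(t))^2\ge\rho^2$, and the definition of $W$ gives $w_i(\hat\mu(t))^2\le W^2$. For the variances, $\tilde\sigma_i^2\ge\bar\sigma^2$ by the floor. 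Because observations lie in $[a,b]$, the empirical variance satisfies $\hat\sigma_i^2\le (b-a)^2/4$, so $\tilde\sigma_i^2\le \bar s^2:=\max\{\bar\sigma^2,(b-a)^2/4\}$. Hence
\[
\frac{\rho^2\bar\sigma^2}{N_i(N_i+1)}\;\le\;S_i(t)\;\le\;\frac{W^2\bar s^2}{N_i(N_i+1)} .
\]
We assume each component is initialized with one sample, so $N_i\ge1$ after the first $n$ steps; this is where $t\ge n$ enters.

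\textbf{Step 1: a ratio bound on pull counts.} Set $\kappa:=W^2\bar s^2/(\rho^2\bar\sigma^2)\ge1$. Suppose component $j$ is selected at step $s$. Then for every $i$ we have $S_j(s)\ge S_i(s)$, and the sandwich gives
\[
N_j(s)(N_j(s)+1)\le \kappa\,N_i(s)(N_i(s)+1).
\]
Since $x(x+1)$ lies between $x^2$ and $(x+1)^2$, this yields $N_j(s)\le\sqrt{\kappa}\,(N_i(s)+1)$. So a component is only ever pulled while its count is at most $\sqrt\kappa(\min_i N_i+1)$.

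\textbf{Step 2: propagate the ratio bound to all times.} Fix $t$ and $j$. If $j$ was never pulled after initialization, then $N_j(t)=1$. Otherwise, let $s<t$ be the last time $j$ was pulled. Then
\[
N_j(t)=N_j(s)+1\le \sqrt\kappa\,\bigl(N_i(s)+1\bigr)+1\le \sqrt\kappa\,\bigl(N_i(t)+1\bigr)+1
\]
for every $i$, using that counts are nondecreasing. Taking $i$ to be the argmin of $N_i(t)$ and summing over $j$ gives
\[
t=\sum_j N_j(t)\le n\bigl(\sqrt\kappa(\min_i N_i(t)+1)+1\bigr).
\]
Rearranging yields $\min_i N_i(t)\ge \frac{1}{\sqrt\kappa}\frac{t}{n}-1-\frac{1}{\sqrt\kappa}$. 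This is the claim with $c=\kappa^{-1/2}=\rho\bar\sigma/(W\bar s)$ and $C=2$. Both constants depend only on $n,\rho,W,\bar\sigma^2,a,b$, as required.

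\textbf{Main obstacle.} The delicate step is the uniform two-sided control of the index. The lower bound relies entirely on Assumption~\ref{ass:nonvanishing-influence} holding on all of $D$, since the weights are evaluated at the random point $\hat\mu(t)$ rather than at $\mu$, together with the variance floor. The upper bound relies on boundedness of observations to cap the plug-in variance. Tie-breaking causes no difficulty, because the comparison $S_j\ge S_i$ holds for any selected argmax. One convention must also be handled: if the algorithm's empirical variance uses an $N_i-1$ denominator, the cap becomes $(b-a)^2/2$, which only changes $\bar s$.
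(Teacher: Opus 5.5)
Your proof is correct and is essentially the paper's argument: compare the VarDE scores at the last time a component is pulled (using $\rho$, $W$, the variance floor and the bounded-variance cap), invert $x(x+1)$, and use monotonicity of the counts. The differences are minor. You bound every count by the minimum count and then sum over components, whereas the paper applies the comparison only to a maximal-count component with $N_{\mathcal M}(t)\ge t/n$. Your cap $\bar s^2=\max\{\bar\sigma^2,(b-a)^2/4\}$ is also slightly more careful than the paper's bound $\tilde\sigma^2\le (b-a)^2/4$, which implicitly assumes $\bar\sigma^2\le (b-a)^2/4$.
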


\begin{proof}
Let $\mathcal M$ be a component with maximal pull count at time $t$, so $N_{\mathcal M}(t)\ge t/n$.
Let $t'\le t$ be the last time at which $\mathcal M$ is selected. Immediately before that pull, the VarDE rule implies that for every $i\neq \mathcal M$,
\[
\frac{w_{\mathcal M}(\hat\mu(t'-1))^2\tilde\sigma_{\mathcal M}^2(t'-1)}
     {N_{\mathcal M}(t'-1)(N_{\mathcal M}(t'-1)+1)}
\ge
\frac{w_i(\hat\mu(t'-1))^2\tilde\sigma_i^2(t'-1)}
     {N_i(t'-1)(N_i(t'-1)+1)}.
\]
Rearranging gives
\[
\frac{N_i(t'-1)(N_i(t'-1)+1)}
     {N_{\mathcal M}(t'-1)(N_{\mathcal M}(t'-1)+1)}
\ge
\frac{w_i(\hat\mu(t'-1))^2}{w_{\mathcal M}(\hat\mu(t'-1))^2}
\cdot
\frac{\tilde\sigma_i^2(t'-1)}{\tilde\sigma_{\mathcal M}^2(t'-1)}.
\]
By Assumption~\ref{ass:nonvanishing-influence} and by the definition of $W$,
\[
\frac{w_i(\hat\mu(t'-1))^2}{w_{\mathcal M}(\hat\mu(t'-1))^2}
\ge \frac{\rho^2}{W^2}.
\]
Also, the variance floor and bounded observations give
\[
\tilde\sigma_i^2(t'-1)\ge\bar\sigma^2,
\qquad
\tilde\sigma_{\mathcal M}^2(t'-1)\le \frac{(b-a)^2}{4}.
\]
Hence
\[
\frac{N_i(t'-1)(N_i(t'-1)+1)}
     {N_{\mathcal M}(t'-1)(N_{\mathcal M}(t'-1)+1)}
\ge
\frac{\rho^2}{W^2}\cdot\frac{4\bar\sigma^2}{(b-a)^2}
=:c_0>0.
\]
Since $x\mapsto x(x+1)$ is increasing on $[0,\infty)$, there exist constants $c_1,C_1>0$ depending only on $c_0$ such that
\[
x(x+1)\ge c_0 y(y+1)\quad\Longrightarrow\quad x\ge c_1 y-C_1.
\]
Therefore
\[
N_i(t'-1)\ge c_1N_{\mathcal M}(t'-1)-C_1.
\]
Counts are nondecreasing and $t'$ is the last time at which $\mathcal M$ is selected, so
\[
N_i(t)\ge N_i(t'-1),
\qquad
N_{\mathcal M}(t'-1)=N_{\mathcal M}(t)-1\ge \frac{t}{n}-1.
\]
Combining these inequalities yields
\[
N_i(t)\ge c_1\left(\frac{t}{n}-1\right)-C_1
= c\frac{t}{n}-C
\]
for suitable constants $c,C>0$.
\end{proof}

\begin{corollary}
\label{cor:generic-rate-app}
Under the assumptions of Lemma~\ref{lem:generic-min-pulls-app}, for every fixed integer $k\in\{1,2,3\}$,
\[
\sum_{i=1}^n N_i(t)^{-k}=O(t^{-k}),
\qquad
\sum_{i=1}^n \frac{\sigma_i^2}{N_i(t)}=O(t^{-1}).
\]
\end{corollary}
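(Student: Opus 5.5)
The plan is to apply Lemma~\ref{lem:generic-min-pulls-app} directly. It gives constants $c,C>0$ with $N_i(t)\ge c\,t/n - C$ for all $t\ge n$ and all $i\in[n]$. The first step is to turn this affine lower bound into a purely multiplicative one valid for all large $t$. Choose $t_0 := 2nC/c$. For $t\ge \max(t_0,n)$ we then have $c\,t/n - C \ge c\,t/(2n)$, so $N_i(t)\ge c' t$ with $c' := c/(2n)$. For the finitely many $t<t_0$ we only need $N_i(t)\ge 1$, which holds after the warm start or initial round-robin that makes every $N_i$ positive (the sampling rule divides by $N_i(N_i+1)$, so each count is implicitly at least one). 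On this finite range, $N_i(t)^{-k}\le 1\le (t_0)^k t^{-k}$. Combining the two ranges gives a uniform constant $c''>0$ with $N_i(t)\ge c'' t$ for all $t\ge 1$ and all $i$.

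With this in hand, each claim reduces to summing over the $n$ components. For fixed $k\in\{1,2,3\}$,
\[
\sum_{i=1}^n N_i(t)^{-k}\le n\,(c'')^{-k}\,t^{-k}=O(t^{-k}),
\]
where the constant depends only on $n,k,\rho,W,\bar\sigma^2,a,b$. For the second claim, bounded observations in $[a,b]$ give $\sigma_i^2\le (b-a)^2/4$. Therefore
\[
\sum_{i=1}^n \frac{\sigma_i^2}{N_i(t)}\le \frac{(b-a)^2}{4}\sum_{i=1}^n N_i(t)^{-1}=O(t^{-1}).
\]

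There is no real obstacle here. The only point needing care is the small-$t$ regime, where the bound $c\,t/n-C$ may be nonpositive. This is handled by positivity of the counts together with the finite threshold $t_0$, so the $O(\cdot)$ constants remain deterministic and uniform over sample paths. Uniformity matters because the corollary will later be used inside expectations (in Lemma~\ref{lem:var-decomp} and Theorem~\ref{thm:variance-decay}). I would state this explicitly: the lemma's constants do not depend on the realization, so the bounds hold almost surely with a deterministic constant.
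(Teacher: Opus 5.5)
Your proposal is correct and follows the same route as the paper: invoke Lemma~\ref{lem:generic-min-pulls-app} to get $N_i(t)\ge c' t$ for large $t$, sum over the fixed number $n$ of components, and control $\sigma_i^2$ by boundedness (your $(b-a)^2/4$ is the explicit version of the paper's ``variances are finite''). Your threshold $t_0=2nC/c$ and your remark that the constants are deterministic make explicit what the paper leaves implicit. The only slip is that the uniform bound $N_i(t)\ge c''t$ should be claimed for $t\ge n$ rather than $t\ge 1$, since some counts are still zero before every component has been sampled; this is irrelevant for an asymptotic $O(\cdot)$ statement.
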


\begin{proof}
Lemma~\ref{lem:generic-min-pulls-app} gives $N_i(t)\ge c t/n-C$ for all $i$. Since $n$ is fixed, each $N_i(t)^{-k}=O(t^{-k})$, and summing over $i$ proves the first claim. The second follows immediately because the variances are finite under bounded observations.
\end{proof}

\subsection{Proof of Lemma~\ref{lem:var-decomp} (First-Order Variance Decomposition)}
\begin{proof}
We prove the stated expansion along a VarDE trajectory under Assumption~\ref{ass:smooth-f}, the variance floor in Section~\ref{sampling-rule}, and Assumption~\ref{ass:nonvanishing-influence}.

Fix $t$ and write $\Delta:=\Delta(t)$, $\hat\mu:=\hat\mu(t)$, and $Y:=Y_t$. By Taylor's theorem,
\[
Y=f(\mu)+g^\top\Delta+Q,
\qquad
Q:=\frac12\Delta^\top H_f(\xi)\Delta,
\]
for some random point $\xi$ on the segment $[\mu,\hat\mu]$. Assumption~\ref{ass:smooth-f} applies to the whole segment. Therefore
\[
\Var(Y)=\Var(g^\top\Delta)+\Var(Q)+2\Cov(g^\top\Delta,Q).
\]

\paragraph{Leading term.}
Since the components are independent,
\[
\Var(g^\top\Delta)=\sum_{i=1}^n g_i^2\frac{\sigma_i^2}{N_i(t)}
=\sum_{i=1}^n w_i(\mu)^2\frac{\sigma_i^2}{N_i(t)}.
\]

\paragraph{Quadratic remainder.}
By the bounded Hessian condition,
\[
|Q|\le \frac{M}{2}\|\Delta\|_2^2,
\]
where $M:=\sup_{z\in[a,b]^n}\|H_f(z)\|_{\mathrm{op}}$. Hence
\[
\Var(Q)\le \E[Q^2]\le \frac{M^2}{4}\E\|\Delta\|_2^4.
\]
Using $(\sum_i a_i)^2\le n\sum_i a_i^2$ with $a_i=\Delta_i^2$,
\[
\|\Delta\|_2^4\le n\sum_{i=1}^n \Delta_i^4.
\]
For bounded observations, there are constants $C_{4,i}<\infty$ such that $\E[\Delta_i^4]\le C_{4,i}N_i(t)^{-2}$. By Corollary~\ref{cor:generic-rate-app},
\[
\E\|\Delta\|_2^4=O(t^{-2}),
\qquad
\Var(Q)=O(t^{-2}).
\]

\paragraph{Covariance term.}
Write
\[
Q_0:=\frac12\Delta^\top H_f(\mu)\Delta,
\qquad
Q_1:=Q-Q_0.
\]
Then
\[
\Cov(g^\top\Delta,Q)=\Cov(g^\top\Delta,Q_0)+\Cov(g^\top\Delta,Q_1).
\]
For $Q_0$, since $H_f(\mu)$ is deterministic and $\E[g^\top\Delta]=0$,
\[
\Cov(g^\top\Delta,Q_0)
=\E[g^\top\Delta\,Q_0]
=\frac12\sum_{i,j,k}g_k(H_f(\mu))_{ij}\E[\Delta_k\Delta_i\Delta_j].
\]
By independence and centering, every mixed term vanishes unless $i=j=k$. Hence
\[
\Cov(g^\top\Delta,Q_0)
=\frac12\sum_{i=1}^n g_i(H_f(\mu))_{ii}\E[\Delta_i^3].
\]
Bounded observations imply $|\E[\Delta_i^3]|\le C_{3,i}N_i(t)^{-2}$, and therefore
\[
|\Cov(g^\top\Delta,Q_0)|=O(t^{-2}).
\]
For $Q_1$, Lipschitz continuity of the Hessian gives
\[
\|H_f(\xi)-H_f(\mu)\|_{\mathrm{op}}
\le L\|\xi-\mu\|_2
\le L\|\Delta\|_2,
\]
so
\[
|Q_1|\le \frac{L}{2}\|\Delta\|_2^3.
\]
By Cauchy--Schwarz,
\[
|\Cov(g^\top\Delta,Q_1)|
\le \sqrt{\Var(g^\top\Delta)\E[Q_1^2]}.
\]
Now $\Var(g^\top\Delta)=O(t^{-1})$ by Corollary~\ref{cor:generic-rate-app}. Also,
\[
\E[Q_1^2]\le \frac{L^2}{4}\E\|\Delta\|_2^6.
\]
Using $(\sum_i a_i)^3\le n^2\sum_i a_i^3$ with $a_i=\Delta_i^2$ and the bounded-observation moment bound $\E|\Delta_i|^6\le C_{6,i}N_i(t)^{-3}$,
\[
\E\|\Delta\|_2^6=O(t^{-3}).
\]
Thus
\[
|\Cov(g^\top\Delta,Q_1)|=O(t^{-2}).
\]
Combining the leading term, $\Var(Q)=O(t^{-2})$, and both covariance bounds gives
\[
\Var(Y_t)=\sum_{i=1}^n w_i(\mu)^2\frac{\sigma_i^2}{N_i(t)}+O(t^{-2}).
\]
\end{proof}

\subsection{Proof of Lemma~\ref{lem:one-sample-variance-decrement} (One-Sample Variance Decrement)}
\begin{proof}
Let $\hat\mu_i^{(N)}=N^{-1}\sum_{s=1}^N X_{i,s}$ with $\Var[X_{i,s}]=\sigma_i^2$. Then
\[
\Var[\hat\mu_i^{(N)}]=\frac{\sigma_i^2}{N},
\qquad
\Var[\hat\mu_i^{(N+1)}]=\frac{\sigma_i^2}{N+1}.
\]
Therefore
\[
\Delta\Var[\hat\mu_i]
=\Var[\hat\mu_i^{(N+1)}]-\Var[\hat\mu_i^{(N)}]
=-\frac{\sigma_i^2}{N(N+1)}.
\]
\end{proof}

\subsection{Proof of Proposition~\ref{prop:global-var-decrement} (Influence-Weighted Global Variance Decrement)}
\begin{proof}
By Lemma~\ref{lem:var-decomp},
\[
V_t:=\Var[Y_t]
=\sum_{j=1}^n w_j(\mu)^2\frac{\sigma_j^2}{N_j(t)}+O(t^{-2}),
\]
and similarly
\[
V_{t+1}
=\sum_{j=1}^n w_j(\mu)^2\frac{\sigma_j^2}{N_j(t+1)}+O((t+1)^{-2}).
\]
If component $i_t$ is sampled, then $N_{i_t}(t+1)=N_{i_t}(t)+1$ and all other counts are unchanged. Hence
\[
\sum_{j=1}^n w_j(\mu)^2
\left(\frac{\sigma_j^2}{N_j(t+1)}-\frac{\sigma_j^2}{N_j(t)}\right)
=-\frac{w_{i_t}(\mu)^2\sigma_{i_t}^2}{N_{i_t}(t)(N_{i_t}(t)+1)}.
\]
Since $O((t+1)^{-2})-O(t^{-2})=O(t^{-2})$, subtracting the two expansions gives
\[
V_{t+1}-V_t
=-\frac{w_{i_t}(\mu)^2\sigma_{i_t}^2}{N_{i_t}(t)(N_{i_t}(t)+1)}+O(t^{-2}).
\]
\end{proof}

\subsection{Proof of Theorem~\ref{thm:variance-decay} (Variance Decay of VarDE)}
\begin{proof}
Lemma~\ref{lem:generic-min-pulls-app} gives $N_i(T)=\Omega(T)$ for every $i\in[n]$. By Lemma~\ref{lem:var-decomp},
\[
\Var[Y_T]
=\sum_{i=1}^n w_i(\mu)^2\frac{\sigma_i^2}{N_i(T)}+O(T^{-2}).
\]
The leading term is $O(T^{-1})$ because every $N_i(T)$ is linear in $T$, while the curvature correction is $O(T^{-2})$. Thus
\[
\Var[Y_T]=O(T^{-1})+O(T^{-2})=O(T^{-1}).
\]
\end{proof}

\subsection{First-order allocation constant}
\label{app:first-order-allocation-constant}
The same decomposition also clarifies what VarDE can improve asymptotically. Suppose an allocation has limiting proportions
\[
N_i(T)=p_iT+r_i(T),
\qquad
r_i(T)=O(1),
\qquad
p_i>0,
\qquad
\sum_{i=1}^n p_i=1.
\]
For each fixed $i$,
\[
\frac{1}{N_i(T)}
=\frac{1}{p_iT+r_i(T)}
=\frac{1}{p_iT}+O(T^{-2}),
\]
so Lemma~\ref{lem:var-decomp} gives the sharp first-order expansion
\[
\Var[Y_T]
=\sum_{i=1}^n w_i(\mu)^2\frac{\sigma_i^2}{N_i(T)}+O(T^{-2})
=\frac{1}{T}C(p)+O(T^{-2}),
\]
where
\[
C(p):=\sum_{i=1}^n\frac{w_i(\mu)^2\sigma_i^2}{p_i}.
\]
Thus every allocation with nonzero limiting proportions has the same $T^{-1}$ exponent; the meaningful asymptotic comparison is the leading constant $C(p)$.

To minimize $C(p)$ over the probability simplex, apply Cauchy--Schwarz:
\[
\left(\sum_i |w_i(\mu)|\sigma_i\right)^2
=
\left(\sum_i \frac{|w_i(\mu)|\sigma_i}{\sqrt{p_i}}\sqrt{p_i}\right)^2
\le
\left(\sum_i \frac{w_i(\mu)^2\sigma_i^2}{p_i}\right)
\left(\sum_i p_i\right)
=C(p).
\]
Equality holds if and only if
\[
\frac{|w_i(\mu)|\sigma_i}{\sqrt{p_i}}=c\sqrt{p_i}
\quad\text{for all }i,
\]
for some constant $c>0$, equivalently $p_i\propto |w_i(\mu)|\sigma_i$. Hence the optimal first-order allocation and constant are
\[
p_i^\star=\frac{|w_i(\mu)|\sigma_i}{\sum_j |w_j(\mu)|\sigma_j},
\qquad
C_\star=\left(\sum_i |w_i(\mu)|\sigma_i\right)^2.
\]
For uniform allocation $p_i=1/n$, the constant is
\[
C_{\mathrm{unif}}=n\sum_i w_i(\mu)^2\sigma_i^2,
\]
and $C_\star\le C_{\mathrm{unif}}$, with equality only in the degenerate case where $|w_i(\mu)|\sigma_i$ is constant across $i$.

This is the sense in which VarDE is designed to improve the asymptotics. Its empirical one-step score is
\[
S_i(t)=w_i(\hat\mu(t))^2\frac{\tilde\sigma_i^2(t)}{N_i(t)(N_i(t)+1)}.
\]
In the stabilized regime, $w_i(\hat\mu(t))\approx w_i(\mu)$, $\tilde\sigma_i^2(t)\approx\sigma_i^2$, and $N_i(t)(N_i(t)+1)\approx N_i(t)^2$, so
\[
S_i(t)\approx \frac{w_i(\mu)^2\sigma_i^2}{N_i(t)^2}.
\]
A greedy allocation that repeatedly samples the largest marginal decrement tends to equalize these leading marginal scores among persistently sampled components. Equalization gives
\[
\frac{w_i(\mu)^2\sigma_i^2}{N_i(T)^2}
\approx
\frac{w_j(\mu)^2\sigma_j^2}{N_j(T)^2},
\]
and therefore
\[
\frac{N_i(T)}{N_j(T)}
\approx
\frac{|w_i(\mu)|\sigma_i}{|w_j(\mu)|\sigma_j}.
\]
After normalization by $T$, this is exactly the optimal first-order proportion $p^\star$. Hence VarDE's intended advantage is not a better variance-decay exponent than all well-spread allocations, but a smaller first-order constant by targeting the allocation that minimizes the decision-level variance expansion.
\newpage
\section{Proofs for Subsection~\ref{ssec:bai} (\textsc{VarDE--BAI})}
\label{appendix:BAI}
\subsection{Proof of Lemma~\ref{lem:LSE-bound} (LSE bound)}
\begin{proof}
\emph{Lower bound.} Since $\sum_{i=1}^K e^{\hat{\mu}_i/\tau}\ge e^{\max_i \hat{\mu}_i/\tau}$,
\[
\operatorname{LSE}_\tau(\hat{\mu})
=\tau\log\sum_{i=1}^K e^{\hat{\mu}_i/\tau}
\;\ge\; \tau\log e^{\max_i \hat{\mu}_i/\tau}
\;=\; \max_i \hat{\mu}_i.
\]

\emph{Upper bound.} Also $\sum_{i=1}^K e^{\hat{\mu}_i/\tau}\le
K\cdot e^{\max_i \hat{\mu}_i/\tau}$, hence
\[
\operatorname{LSE}_\tau(\hat{\mu})
=\tau\log\sum_{i=1}^K e^{\hat{\mu}_i/\tau}
\;\le\; \tau\log\!\Big(K\,e^{\max_i \hat{\mu}_i/\tau}\Big)
\;=\; \max_i \hat{\mu}_i + \tau\log K.
\]
This proves the claim.
\end{proof}

\subsection{Proof of Lemma~\ref{lem:LSE-curvature} (LSE curvature)}
\begin{proof}
Let $f(x)=\operatorname{LSE}_\tau(x)=\tau\log\sum_{i=1}^K e^{x_i/\tau}$. Then
\[
\frac{\partial f}{\partial x_i}(x)
=\frac{e^{x_i/\tau}}{\sum_{j=1}^K e^{x_j/\tau}}
=p_i(x),
\qquad i=1,\dots,K,
\]
Differentiating once more,
\[
\nabla^2 f(x)
=\frac{1}{\tau}\big(\mathrm{Diag}(p(x)) - p(x)p(x)^\top\big).
\]
For any $v\in\mathbb{R}^K$ satisfying $\|v\|_2=1$, we have
\[
\begin{aligned}
v^\top \nabla^2 f(x)\, v
&=\frac{1}{\tau}\!\left(\sum_{i=1}^K p_i(x)\,v_i^2 - \Big(\sum_{i=1}^K p_i(x)\,v_i\Big)^2\right)\\
&=\frac{1}{\tau}\,\Var_{i\sim p(x)}[v_i]\\
&\le\frac{1}{\tau}\,\frac{(\max_i v_i-\min_i v_i)^2}{4}\\
&\le \frac{1}{\tau}\,\frac{(\max_i v_i)^2+(\min_i v_i)^2}{2}\\
&\le \frac{1}{\tau}\,\frac{\sum_{i=1}^K v_i^2}{2}\\
&= \frac{1}{2\tau}.
\end{aligned}
\]

Taking the supremum over all $\|v\|_2=1$ gives
\[\|\nabla^2 f(x)\|_{\mathrm{op}} = \sup_{\|v\|_2=1} v^\top \nabla^2 f(x)\, v \le \frac{1}{2\tau} \quad \forall x.\]
\end{proof}

\subsection{Proof of Lemma~\ref{lem:LSE-hessian-lipschitz} (LSE Hessian Lipschitzness)}
\begin{proof}
Let
\[
p_i(x)=\frac{e^{x_i/\tau}}{\sum_{\ell=1}^K e^{x_\ell/\tau}}.
\]
As in the proof of Lemma~\ref{lem:LSE-curvature}, the Hessian entries are
\[
H_{ij}(x)=\frac{1}{\tau}\left(\delta_{ij}p_i(x)-p_i(x)p_j(x)\right).
\]
The softmax derivative is
\[
\partial_k p_i(x)=\frac{1}{\tau}p_i(x)(\delta_{ik}-p_k(x)).
\]
Therefore
\[
\partial_k H_{ij}(x)
=\frac{1}{\tau^2}
\left[
\delta_{ij}p_i(\delta_{ik}-p_k)
-p_i(\delta_{ik}-p_k)p_j
-p_i p_j(\delta_{jk}-p_k)
\right].
\]
Since $p_i\in[0,1]$ for all $i$, each bracketed term has absolute value at most one, and hence
\[
|\partial_k H_{ij}(x)|\le \frac{3}{\tau^2}
\]
uniformly over $x\in\mathbb{R}^K$ and all $i,j,k$. For any $x,y\in D$, the scalar mean-value theorem applied entrywise gives
\[
|H_{ij}(x)-H_{ij}(y)|
\le \frac{3}{\tau^2}\sum_{k=1}^K |x_k-y_k|
\le \frac{3\sqrt K}{\tau^2}\|x-y\|_2.
\]
Thus
\[
\begin{aligned}
\|H(x)-H(y)\|_{\mathrm{op}}
&\le \|H(x)-H(y)\|_{\mathrm{F}}\\
&\le K\max_{i,j}|H_{ij}(x)-H_{ij}(y)|\\
&\le \frac{3K^{3/2}}{\tau^2}\|x-y\|_2.
\end{aligned}
\]
This proves the claim with $L_{\tau,K}=3K^{3/2}/\tau^2$.
\end{proof}

\subsection{Proof of Lemma~\ref{lem:LSE-nonvanishing} (LSE nonvanishing influence weights)}
\begin{proof}
For any $x\in D=[a,b]^K$, each coordinate satisfies $x_i\ge a$, so $e^{x_i/\tau}\ge e^{a/\tau}$.
Likewise, $x_j\le b$ for every $j$, so
\[
\sum_{j=1}^K e^{x_j/\tau}\le K e^{b/\tau}.
\]
Therefore
\[
w_i(x)=\frac{e^{x_i/\tau}}{\sum_{j=1}^K e^{x_j/\tau}}
\ge \frac{e^{a/\tau}}{K e^{b/\tau}}
=\frac{1}{K}e^{(a-b)/\tau}
=\rho_{\tau,D}>0.
\]
The upper bound $w_i(x)\le 1$ is immediate from the definition of the softmax weights. This proves the claim.
\end{proof}

\subsection{Proof of Theorem~\ref{thm:VarDE-bai-error-prob} (\textsc{VarDE--BAI} error probability)}

\begin{lemma}[Confidence interval for the empirical variance]
\label{lem:empirical-variance-confidence}
Let $X_1,\dots,X_n$ be i.i.d.\ random variables supported on $[a,b]$, with
$\mu=\mathbb{E}[X]$ and $\sigma^2=\mathrm{Var}(X)$. Define
\[
\hat \mu = \frac{1}{n}\sum_{i=1}^n X_i,
\qquad
\hat \sigma^2 = \frac{1}{n}\sum_{i=1}^n (X_i - \hat \mu)^2.
\]
Then for any $\delta \in (0,1)$, with probability at least $1-\delta$,
\[
\bigl|\,\hat \sigma^2 - \sigma^2\,\bigr|
\;\le\;
(b-a)^2\!\left(
\sqrt{\frac{\log(2/\delta)}{n}}
+\frac{\log(2/\delta)}{n}
\right).
\]
Equivalently, for any $k>0$, with probability at least $1 - 2e^{-k}$,
\[
\bigl|\,\hat \sigma^2 - \sigma^2\,\bigr|
\;\le\;
(b-a)^2\!\left(
\sqrt{\frac{k}{n}}
+\frac{k}{n}
\right).
\]
\end{lemma}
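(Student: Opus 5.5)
We will reduce the empirical variance to two averages and control each with Hoeffding's inequality. Write the biased empirical variance as
\[
\hat\sigma^2=\frac1n\sum_{i=1}^n (X_i-\mu)^2-(\hat\mu-\mu)^2 ,
\]
so that
\[
\hat\sigma^2-\sigma^2=\Big(\frac1n\sum_{i=1}^n Z_i-\sigma^2\Big)-(\hat\mu-\mu)^2,
\qquad Z_i:=(X_i-\mu)^2 .
\]
The first term is a centered average of i.i.d.\ variables with $\E[Z_i]=\sigma^2$. The second is a nonnegative quadratic bias term.

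\textbf{Step 1 (the average of the $Z_i$).} Since $X_i,\mu\in[a,b]$, we have $Z_i\in[0,(b-a)^2]$. Hoeffding's inequality then gives, with probability at least $1-\delta/2$,
\[
\Big|\frac1n\sum_i Z_i-\sigma^2\Big|\le (b-a)^2\sqrt{\frac{\log(4/\delta)}{2n}} .
\]

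\textbf{Step 2 (the bias term).} Hoeffding for $\hat\mu$ gives, with probability at least $1-\delta/2$,
\[
|\hat\mu-\mu|\le (b-a)\sqrt{\frac{\log(4/\delta)}{2n}},
\qquad\text{hence}\qquad
(\hat\mu-\mu)^2\le (b-a)^2\,\frac{\log(4/\delta)}{2n}.
\]

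\textbf{Step 3 (combine).} A union bound makes both events hold together with probability at least $1-\delta$. The triangle inequality then gives
\[
|\hat\sigma^2-\sigma^2|\le (b-a)^2\Big(\sqrt{\tfrac{\log(4/\delta)}{2n}}+\tfrac{\log(4/\delta)}{2n}\Big).
\]

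\textbf{Main obstacle: matching the constants.} The bound from Step 3 involves $\log(4/\delta)$, while the lemma states $\log(2/\delta)$, so it must still be compared with the target. For $\delta\in(0,1)$ we have $\log(4/\delta)\le 2\log(2/\delta)$, because $4/\delta\le 4/\delta^2$. Therefore
\[
\frac{\log(4/\delta)}{2n}\le \frac{\log(2/\delta)}{n},
\qquad
\sqrt{\frac{\log(4/\delta)}{2n}}\le \sqrt{\frac{\log(2/\delta)}{n}},
\]
which is exactly the stated bound. This step is where the slack in the factor $2$ gets absorbed, so it is worth double-checking.

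\textbf{Equivalent form.} Setting $\delta=2e^{-k}$ gives $\log(2/\delta)=k$, which yields the second display. This substitution needs $\delta<1$, i.e.\ $k>\log 2$. For $k\le\log2$ the probability claim $1-2e^{-k}\le 0$ is vacuous, so that case is trivial.
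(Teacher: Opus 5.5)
Your proof is correct and follows the paper's argument: the same decomposition $\hat\sigma^2-\sigma^2=A-B$ with $A=\frac1n\sum_i[(X_i-\mu)^2-\sigma^2]$ and $B=(\hat\mu-\mu)^2$, Hoeffding on each piece at level $\delta/2$, then a union bound. The only difference is constant bookkeeping: you use Hoeffding's sharp radius $\sqrt{\log(4/\delta)/(2n)}$ and absorb via $\log(4/\delta)\le 2\log(2/\delta)$, whereas the paper writes the $\log(2/\delta)$ radii directly, which is valid because their failure probability is $\delta^2/2\le\delta/2$, so your version makes explicit a step the paper leaves implicit.
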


\begin{proof}
We expand the empirical variance as
\begin{align*}
\hat{\sigma}^2
&= \frac{1}{n} \sum_{i=1}^n (X_i - \hat{\mu})^2 \\[4pt]
&= \frac{1}{n} \sum_{i=1}^n \Bigl[(X_i - \mu) - (\hat{\mu} - \mu)\Bigr]^2 \\[4pt]
&= \frac{1}{n} \sum_{i=1}^n \Bigl[(X_i - \mu)^2
        - 2(X_i - \mu)(\hat{\mu} - \mu)
        + (\hat{\mu} - \mu)^2 \Bigr] \\[4pt]
&= \frac{1}{n} \sum_{i=1}^n (X_i - \mu)^2
    - 2(\hat{\mu} - \mu)\frac{1}{n}\sum_{i=1}^n (X_i - \mu)
    + (\hat{\mu} - \mu)^2. \\[6pt]
&= \frac{1}{n} \sum_{i=1}^n (X_i - \mu)^2
    - 2(\hat{\mu} - \mu)^2
    + (\hat{\mu} - \mu)^2 \\[4pt]
&= \frac{1}{n} \sum_{i=1}^n (X_i - \mu)^2
    - (\hat{\mu} - \mu)^2. \\[6pt]
\intertext{Thus, we can write the error as}
\hat{\sigma}^2 - \sigma^2
&= \left( \frac{1}{n} \sum_{i=1}^n (X_i - \mu)^2 - \sigma^2 \right)
    - (\hat{\mu} - \mu)^2 \\[4pt]
&= \underbrace{\frac{1}{n} \sum_{i=1}^n \bigl[(X_i - \mu)^2 - \sigma^2\bigr]}_{A}
    \;-\;
    \underbrace{(\hat{\mu} - \mu)^2}_{B}.
\end{align*}

Since $X_i \in [a,b]$, it follows that $(X_i-\mu)^2 \in [\,0,\,(b-a)^2\,]$. 
Applying Hoeffding's inequality to $(X_i-\mu)^2$ gives
\[
\Pr\!\left(
|A|
\le (b-a)^2 \sqrt{\frac{\log(2/\delta)}{n}}
\right)
\ge 1 - \frac{\delta}{2}.
\]

Next apply Hoeffding to $\hat\mu$:
\[
\Pr\!\left(
|\hat \mu - \mu|
\le (b-a)\sqrt{\frac{\log(2/\delta)}{n}}
\right)
\ge 1 - \frac{\delta}{2}.
\]
Hence
\[
B = (\hat \mu - \mu)^2
\le
(b-a)^2\frac{\log(2/\delta)}{n}.
\]
By a union bound, with probability at least $1-\delta$, both bounds on $A$ and $B$ hold. Therefore,
\[
\bigl| \hat \sigma^2 - \sigma^2 \bigr|
\le
|A| + |B|
\le
(b-a)^2\!
\left(
\sqrt{\frac{\log(2/\delta)}{n}}
+\frac{\log(2/\delta)}{n}
\right).
\]
\end{proof}

\begin{lemma}[Lower bound on each arm's number of pulls]
\label{lem:VarDE-min-pulls}
Assume rewards are supported on $[0,1]$. Run \textsc{VarDE--BAI} (Alg.~\ref{alg:VarDE-bai}) with temperature $\tau>0$ and warm start $\eta$.  
Then, after $T> K$ total pulls, each arm $i$ has been pulled at least
\[
N_i(T)\;\ge\; 2e^{-1/\tau}\bar\sigma\,\left(\frac{T}{K}-\frac{1}{2}\right) - \frac{1}{2}.
\]
\end{lemma}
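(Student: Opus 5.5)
The plan is to specialise the argument of Lemma~\ref{lem:generic-min-pulls-app} to the BAI setting, keeping explicit constants. I will use the LSE weight bounds of Lemma~\ref{lem:LSE-nonvanishing} with $[a,b]=[0,1]$, and the explicit variance bounds $\bar\sigma^2\le\tilde\sigma_i^2\le 1/4$, which hold for $[0,1]$-valued rewards.

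\textbf{Step 1 (a heavily pulled arm).} Fix $T>K$ and let $\mathcal M$ be an arm with maximal count at time $T$, so $N_{\mathcal M}(T)\ge T/K$. Let $t'\le T$ be the last round at which $\mathcal M$ was pulled, and write $N'_j:=N_j(t'-1)$ for the counts just before that pull. Then $N'_{\mathcal M}=N_{\mathcal M}(T)-1\ge T/K-1$, and $N_i(T)\ge N'_i$ for every $i$.

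\textbf{Step 2 (selection inequality).} At round $t'$ the greedy rule chose $\mathcal M$ over $i$, so
\[
\frac{N'_i(N'_i+1)}{N'_{\mathcal M}(N'_{\mathcal M}+1)}\ge \frac{w_i^2\,\tilde\sigma_i^2}{w_{\mathcal M}^2\,\tilde\sigma_{\mathcal M}^2}.
\]
The weight ratio is $w_i/w_{\mathcal M}=e^{(\hat\mu_i-\hat\mu_{\mathcal M})/\tau}\ge e^{-1/\tau}$, since both empirical means lie in $[0,1]$. This is sharper than dividing the bound $\rho=e^{-1/\tau}/K$ by $1$, because the normalising sum cancels. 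For the variances, $\tilde\sigma_i^2\ge\bar\sigma^2$ and $\tilde\sigma_{\mathcal M}^2\le 1/4$ by Popoviciu's inequality. Taking square roots gives
\[
\sqrt{N'_i(N'_i+1)}\ \ge\ 2e^{-1/\tau}\bar\sigma\,\sqrt{N'_{\mathcal M}(N'_{\mathcal M}+1)}.
\]

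\textbf{Step 3 (convert to linear counts).} Use $x\le\sqrt{x(x+1)}\le x+\tfrac12$ for $x\ge0$. This yields $N'_i+\tfrac12\ge 2e^{-1/\tau}\bar\sigma\,N'_{\mathcal M}$. Substituting $N'_{\mathcal M}\ge T/K-1$ gives
\[
N_i(T)\ge 2e^{-1/\tau}\bar\sigma\,(T/K-1)-\tfrac12 .
\]

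\textbf{Main obstacle.} The stated bound has $T/K-\tfrac12$ rather than $T/K-1$, so the off-by-one bookkeeping needs extra care. I would first try the lower bound $\sqrt{y(y+1)}\ge y+\tfrac12-\tfrac{1}{8y}$. If that does not close the gap, I would use $N'_{\mathcal M}+\tfrac12$ in place of $N'_{\mathcal M}$, which requires handling $\sqrt{y(y+1)}$ more tightly on the $\mathcal M$ side. Failing that, I would use the warm start $\eta$ (initial pseudo-counts or pulls) to absorb the half-unit discrepancy.

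A side case is when $\mathcal M$ was never selected by the greedy rule, i.e.\ it was pulled only during the warm start. Then $N_{\mathcal M}(T)$ is small, and the bound is checked directly, since the right-hand side is at most $T/K$ up to the warm-start count.
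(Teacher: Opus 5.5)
Your Steps 1 and 2 coincide with the paper's argument: the last pull of the most-pulled arm, the weight ratio $e^{(\hat\mu_i-\hat\mu_{\mathcal M})/\tau}\ge e^{-1/\tau}$, and $\bar\sigma^2\le\tilde\sigma^2\le\tfrac14$. Step 3, however, only yields $N_i(T)\ge c(T/K-1)-\tfrac12$ with $c:=2e^{-1/\tau}\bar\sigma$. This falls short of the stated bound by $c/2$, and none of your three remedies closes the gap.

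Any approach that bounds the two square roots \emph{separately} cannot work. On the $\mathcal M$ side you would need a lower bound of at least $N'_{\mathcal M}+\tfrac12$, yet $\sqrt{y(y+1)}<y+\tfrac12$ for every $y\ge0$. A sharper one-sided estimate such as $y+\tfrac12-\tfrac{1}{8y}$ still leaves a residual $-c/(8N'_{\mathcal M})$ in the final inequality.

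The warm start does not help either. In Algorithm~\ref{alg:VarDE-bai} it consists of genuine round-robin pulls, not pseudo-counts added inside the score. It therefore only gives $N_i(T)\ge\eta$, which is irrelevant once $T$ is large.

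The missing idea is to complete the square \emph{jointly}, before taking roots. You have $N'_i(N'_i+1)\ge c^2N'_{\mathcal M}(N'_{\mathcal M}+1)$ and $c^2\le1$. The latter holds because $\bar\sigma^2\le\tfrac14$, which is already implicit in your use of $\tilde\sigma_{\mathcal M}^2\le\tfrac14$. Add $\tfrac14$ to the left side and $c^2/4\le\tfrac14$ to the right side to obtain $(N'_i+\tfrac12)^2\ge c^2(N'_{\mathcal M}+\tfrac12)^2$. Hence $N'_i+\tfrac12\ge c(N'_{\mathcal M}+\tfrac12)\ge c(T/K-\tfrac12)$, which is exactly the claim.

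The paper does the same by adding $\tfrac14$ to the numerator and denominator of the count ratio. That step is legitimate when the ratio is at most one, and the other case is trivial. Your weaker bound would still give $N_i(T)=\Omega(T)$, which suffices for Theorem~\ref{thm:VarDE-bai-error-prob} up to constants. It does not, however, prove the lemma with its stated constant.
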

\begin{proof}
At each round $t$, we choose the arm $i$ with the maximum value
\[
\arg\max_i
\frac{w_i(\hat\mu(t))^2\,\tilde\sigma_i^2(t)}{N_i(t)\big(N_i(t)+1\big)},
\]
where
\[
w_i(\hat\mu(t))
= \frac{\exp\!\big(\hat\mu_i(t)/\tau\big)}
       {\sum_{j}\exp\!\big(\hat\mu_j(t)/\tau\big)}, 
\qquad
\tilde\sigma_i(t) := \max\big\{\hat\sigma_i(t),\,\bar\sigma\big\}.
\]
Assume that after all $T$ rounds, the arm $M$ has been chosen with maximum times. At the final time $t$ where arm $M$ is selected, the selection rule implies that
\[
\frac{w_M(\hat\mu(t))^2\,\tilde\sigma_M^2(t)}{N_M(t)\big(N_M(t)+1\big)}
\;\ge\;
\frac{w_i(\hat\mu(t))^2\,\tilde\sigma_i^2(t)}{N_i(t)\big(N_i(t)+1\big)},
\qquad \forall i\neq M.
\]
Rearranging the above inequality yields
\[
\frac{N_i(t)\,\big(N_i(t)+1\big)}{N_M(t)\,\big(N_M(t)+1\big)}
\;\ge\;
\exp\!\left(\frac{2}{\tau}\big(\hat\mu_i(t)-\hat\mu_M(t)\big)\right)
\cdot
\frac{\tilde\sigma_i^2(t)}{\tilde\sigma_M^2(t)}.
\]
Since rewards lie in $[0,1]$, empirical means satisfy $\hat\mu_k(t)\in[0,1]$. Consequently, $\hat\mu_i(t)-\hat\mu_M(t)\;\ge\;-1,\, \forall\, i\neq M$. Thus,
\[
\frac{N_i(t)\,\big(N_i(t)+1\big)}{N_M(t)\,\big(N_M(t)+1\big)}
\;\ge\;
e^{-2/\tau}
\cdot
\frac{\tilde\sigma_i^2(t)}{\tilde\sigma_M^2(t)}.
\]
Since we have $1/4\ge \tilde\sigma_i^2(t)\ge \bar\sigma^2$, it follows that $\tilde\sigma_i^2(t) / \tilde\sigma_M^2(t) \ge 4 \bar\sigma^2$. Therefore,
\[
\frac{N_i(t)\,\big(N_i(t)+1\big)}{N_M(t)\,\big(N_M(t)+1\big)}
\;\ge\;
4 e^{-2/\tau} \cdot \bar\sigma^2.
\]
Since $N_i(t), N_M(t) \ge 0$, we also have
\[
\frac{N_i^2(t) + N_i(t) + 1/4}{N_M^2(t) + N_M(t) + 1/4}
\;\ge\;
\frac{N_i^2(t) + N_i(t)}{N_M^2(t) + N_M(t)}
\;\ge\;
4 e^{-2/\tau} \cdot \bar\sigma^2.
\]
Which is equivalent to
\[
\left(\frac{N_i(t) + 1/2}{N_M(t) + 1/2}\right)^2
\;\ge\;
4 e^{-2/\tau} \cdot \bar\sigma^2.
\]
So that
\[
N_i(t)
\;\ge\;
2 e^{-1/\tau} \bar\sigma\,\left(N_M(t) + \frac{1}{2}\right) - \frac{1}{2}.
\]
Since $M$ is pulled most frequently after $T$ rounds and $t$ is the last time it was pulled, $N_M(t) = N_M(T) - 1 \ge T/K - 1$. Thus,
\[
N_i(t)
\;\ge\;
2 e^{-1/\tau} \bar\sigma\,\left(\frac{T}{K} - \frac{1}{2}\right) - \frac{1}{2}.
\]
This inequality holds for all arms $i$, including $M$, which is the trivial case.
\end{proof}

\begin{proof}[\textbf{Proof of Theorem~\ref{thm:VarDE-bai-error-prob}}]
We write the empirical gap between the best arm $i^*$ and arm $i \neq i^*$ after $T$ total pulls as
\[
\hat\mu_{i^*}(T) - \hat\mu_i(T) 
= \frac{1}{N_{i^*}(T)}\sum_{n=1}^{N_{i^*}(T)} X_{i^*,n}
  - \frac{1}{N_i(T)}\sum_{n=1}^{N_i(T)} X_{i,n}.
\]
Since $\E[\hat\mu_{i^*}(T) - \hat\mu_i(T)] = \mu_{i^*} - \mu_i = \Delta_i$ and the samples are independent, we can apply Hoeffding's inequality to get
\[
\Pr\{\hat\mu_i(T) - \hat\mu_{i^*}(T) + \Delta_i \ge \Delta_i\}
\le \exp\!\left(-\frac{2\Delta_i^2}{\dfrac{1}{N_i(T)} + \dfrac{1}{N_{i^*}(T)}}\right).
\]
Using Lemma~\ref{lem:VarDE-min-pulls} and rearranging the terms gives
\[
\Pr\{\hat\mu_i(T) \ge \hat\mu_{i^*}(T)\}
\le \exp\!\left[-\Delta_i^2\left(2 e^{-1/\tau} \bar\sigma\,\left(\frac{T}{K} - \frac{1}{2}\right) - \frac{1}{2}\right)\right].
\]
The error probability of \textsc{VarDE--BAI} is obtained by a union bound over all suboptimal arms:
\[
\Pr\{\hat{i}_T \neq i^*\}
\le \sum_{i \neq i^*} \Pr\{\hat\mu_i(T) \ge \hat\mu_{i^*}(T)\}
\le \sum_{i \neq i^*} \exp\!\left[-\Delta_i^2\left(2 e^{-1/\tau} \bar\sigma\,\left(\frac{T}{K} - \frac{1}{2}\right) - \frac{1}{2}\right)\right].
\]
\end{proof}

\newpage
\section{Proofs for Subsection~\ref{ssec:mcts} (\textsc{VarDE--MCTS})}
\label{appendix:MCTS}
Throughout this section, rewards are in $[0,1]$, the horizon is $H$, hence returns are therefore bounded in $[0,H]$. 
For each node $s$ and action $a \in \mathcal{A}(s)$, we denote that $Q^*(s,a)$ is the true action-value, $\hat Q(s,a)$ is its empirical estimate by backpropagation, $N(s)$ is the number of visits to node $s$, $N(s,a)$ is the number of times action $a$ has been selected at node $s$, and $\mathcal{S}(s,a) = \{s' : P(s' \mid s,a) > 0\}$ is the (finite) set of possible successor states.

\subsection{Reduction to \textsc{VarDE--BAI} at each node}

Recall the selection rule of \textsc{VarDE--MCTS} (Algorithm~\ref{alg:VarDE-mcts}): at a node $s_h$ with empirical estimates $\hat Q(s_h,\cdot)$, empirical variances $\hat\sigma^2(s_h,\cdot)$ and counts $N(s_h,\cdot)$, we first compute influence weights
\[
    w_\tau(s_h,a)
    :=
    \frac{\exp(\hat Q(s_h,a)/\tau)}
         {\sum_{b\in A(s_h)} \exp(\hat Q(s_h,b)/\tau)} ,
\]
and then select
\[
    a_h
    \in
    \arg\max_{a\in A(s_h)}
        \frac{w_\tau(s_h,a)^2 \,\tilde\sigma^2(s_h,a)}
             {N(s_h,a)\,\big(N(s_h,a)+1\big)} \, .
\]
This is exactly the \textsc{VarDE--BAI} selection rule applied to a bandit whose arms correspond to actions $a\in A(s_h)$, with empirical means $\hat\mu_i \equiv \hat Q(s_h,a)$, empirical variances $\hat\sigma_i^2 \equiv \hat\sigma^2(s_h,a)$ and counts $N_i \equiv N(s_h,a)$.

Therefore, we also have the following lower bound on the number of times each action selected at node $s$:
\begin{lemma}[Min-pulls per action at each node]
\label{lem:VarDE-mcts-minpulls}
Run \textsc{VarDE--MCTS} (Alg.~\ref{alg:VarDE-mcts}) with temperature $\tau$. Then, after $N(s) > |\mathcal{A}(s)|$ visits to node $s$, each action $a \in \mathcal{A}(s)$ has been selected at least
\[
    N(s,a)
    \;\ge\;
    2e^{-1/\tau}\bar\sigma\,
    \left(
        \frac{N(s)}{|\mathcal{A}(s)|} - \frac{1}{2}
    \right)
    - \frac{1}{2}.
\]
\end{lemma}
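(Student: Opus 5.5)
The plan is to reduce the statement to Lemma~\ref{lem:VarDE-min-pulls} by restricting attention to the subsequence of simulations that pass through the node $s$. Index the visits to $s$ by $k=1,\dots,N(s)$. At the $k$-th visit, the selection step of Algorithm~\ref{alg:VarDE-mcts} picks the action maximizing $w_\tau(s,a)^2\tilde\sigma^2(s,a)/\big(N(s,a)(N(s,a)+1)\big)$. The counts $N(s,a)$, the estimates $\hat Q(s,a)$ and the variances $\hat\sigma^2(s,a)$ only change at visits to $s$. So along this subsequence the node behaves exactly like \textsc{VarDE--BAI}, with arms $\mathcal A(s)$, time index $k$ and budget $N(s)$. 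Backpropagated returns below $s$ are neither independent nor identically distributed, but that does not matter here: the min-pulls argument of Lemma~\ref{lem:VarDE-min-pulls} is purely deterministic and uses no stochastic property of the samples.

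I would then re-run that argument verbatim in this notation.
\begin{enumerate}
\item Let $M$ be an action with maximal count after $N(s)$ visits, so $N(s,M)\ge N(s)/|\mathcal A(s)|$.
\item Consider the last visit at which $M$ was chosen. There $N(s,M)=N(s,M)_{\text{final}}-1$, and the argmax inequality against every other action $a$ holds.
\item Rearranging gives that $N(s,a)(N(s,a)+1)/\big(N(s,M)(N(s,M)+1)\big)$ is at least $\exp\big(2(\hat Q(s,a)-\hat Q(s,M))/\tau\big)$ times the ratio of floored variances.
\item Bound the exponential factor below by $e^{-2/\tau}$, using that $\hat Q$ values lie in a unit-length interval.
\item Bound the variance ratio below by $4\bar\sigma^2$, using $\bar\sigma^2\le\tilde\sigma^2\le 1/4$.
\item Complete the square, $x(x+1)+1/4=(x+1/2)^2$, take square roots and substitute the bound on $N(s,M)$.
\end{enumerate}
Counts are monotone, so the bound transfers to the final $N(s,a)$. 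The case $a=M$ is trivial.

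The main obstacle is step 4 (and the companion bound in step 5). In \textsc{VarDE--BAI}, Lemma~\ref{lem:VarDE-min-pulls} relies on rewards in $[0,1]$, which gives $|\hat\mu_i-\hat\mu_M|\le1$ and $\tilde\sigma^2\le 1/4$. In the tree, raw returns lie in $[0,H]$. To obtain exactly the stated constants, I would take $\hat Q$ and the return samples used in the selection rule to be normalized to $[0,1]$, i.e., divided by $\sum_{t<H}\gamma^t$. This is what the claimed reduction to \textsc{VarDE--BAI} implicitly uses, and I would check that Algorithm~\ref{alg:VarDE-mcts} normalizes this way.

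If the algorithm instead works on the unnormalized scale, the same proof goes through with $e^{-1/\tau}$ replaced by $e^{-H/\tau}$ and $\bar\sigma$ replaced by $\bar\sigma/H$. The bound keeps the same linear-in-$N(s)$ form, which is all that later results such as Theorem~\ref{thm:VarDE-mcts-value-estimates} need. I would state this rescaling explicitly rather than leave it implicit.
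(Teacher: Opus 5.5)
Your proposal takes the same route as the paper: the paper observes that \textsc{SelectChild} is the \textsc{VarDE--BAI} rule on the arms $\mathcal A(s)$ and invokes Lemma~\ref{lem:VarDE-min-pulls}, whose argument is deterministic, as you note. Your scale caveat is correct and is something the paper glosses over: Algorithm~\ref{alg:VarDE-mcts} does not normalize $\hat Q$ or the backed-up returns, which lie in $[0,H]$, so the constants as stated need rescaling to $e^{-H/\tau}$ and $\bar\sigma/H$ (assuming, as the bandit lemma implicitly does, $\bar\sigma^2 \le H^2/4$), and this rescaled bound is still linear in $N(s)$, which is all that later results use.
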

\begin{corollary}
For any state $s$ with $N(s)>|\mathcal{A}(s)|$ visits and any action $a \in \mathcal{A}(s)$, exist an $\alpha>0$ such that
\[N(s,a) \ge \alpha N(s)\]
\end{corollary}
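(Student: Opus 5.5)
The plan is to derive the corollary directly from Lemma~\ref{lem:VarDE-mcts-minpulls} by absorbing the additive constants into a multiplicative one. Write $A:=|\mathcal{A}(s)|$, $n:=N(s)$ and $\beta:=2e^{-1/\tau}\bar\sigma$. The lemma gives
\[
N(s,a)\ \ge\ \beta\Big(\frac{n}{A}-\frac12\Big)-\frac12
\ =\ \frac{\beta}{A}\,n-\frac{\beta+1}{2}.
\]
The right-hand side is linear in $n$ with positive slope $\beta/A$. As soon as $n\ge n_0:=A(\beta+1)/\beta$, the negative constant is at most half the linear term. For such $n$ we get $N(s,a)\ge \frac{\beta}{2A}n$.

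The only obstacle is the finite range $A<n<n_0$. There the lemma's bound may be nonpositive, so it gives no multiplicative control. For this range I would use integrality and initialization. By the warm start (each action is tried at least once, as in Algorithm~\ref{alg:VarDE-mcts}, whose score $N(N+1)$ in the denominator would otherwise be undefined), every action satisfies $N(s,a)\ge 1$ once $n>A$. Hence on this range $N(s,a)\ge 1\ge n/n_0$.

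Combining the two regimes, we take
\[
\alpha:=\min\Big\{\frac{\beta}{2A},\ \frac{1}{n_0}\Big\}
=\min\Big\{\frac{e^{-1/\tau}\bar\sigma}{|\mathcal{A}(s)|},\ \frac{2e^{-1/\tau}\bar\sigma}{|\mathcal{A}(s)|\,(2e^{-1/\tau}\bar\sigma+1)}\Big\}>0,
\]
which depends only on $\tau$, $\bar\sigma$ and $|\mathcal{A}(s)|$, not on $n$. Then $N(s,a)\ge\alpha N(s)$ for all $N(s)>|\mathcal{A}(s)|$.

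If one prefers not to rely on the warm-start guarantee, there is an alternative. One can weaken the claim to "for all $N(s)$ larger than some $n_0$", which is all that is used later for exponential concentration. Either way, the proof reduces to the arithmetic above, together with the single remark that the bound is positive and linear with a strictly positive slope.
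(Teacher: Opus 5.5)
Your proof is correct and takes the paper's route. The paper states this corollary without proof, as an immediate consequence of Lemma~\ref{lem:VarDE-mcts-minpulls}, whose bound $\frac{\beta}{|\mathcal{A}(s)|}N(s)-\frac{\beta+1}{2}$ (with $\beta=2e^{-1/\tau}\bar\sigma$) is linear in $N(s)$ with strictly positive slope. You split into the regime $N(s)\ge n_0$, where the additive constant is absorbed, and the finite range $|\mathcal{A}(s)|<N(s)<n_0$, where $N(s,a)\ge 1$ holds once the node is fully expanded; this makes explicit the small-$N(s)$ case the paper glosses over, and your choice of $\alpha$ checks out.
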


\subsection{Concentration of empirical reward mean}
\begin{lemma}[Concentration of empirical reward mean]
\label{lem:VarDE-mcts-rewards}
Fix any state $s$ and action $a\in \mathcal{A}(s)$ with $N(s,a)$ visits, the $i^{th}$ visit receives reward $\hat r_i(s,a)$. Let $\bar r(s,a) = \frac{1}{N(s,a)} \sum_{i=1}^{N(s,a)} \hat r_i(s,a)$ be the empirical mean of the rewards, and $r(s,a)$ the true expected reward. Then, for any $\varepsilon>0$, conditional on $N(s,a)$ we have
\[
\Pr \left( |\bar r(s,a) - r(s,a)| > \varepsilon \right)
\;\le\;
2\exp\!\left( -2\,\varepsilon^2 N(s,a) \right)
\]
\end{lemma}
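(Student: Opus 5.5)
The plan is a direct application of Hoeffding's inequality to the rewards observed at the pair $(s,a)$. The only point needing care is that the number of visits $N(s,a)$ is chosen adaptively by the \textsc{VarDE--MCTS} selection rule, so it is a random variable. I will first decouple the reward samples from the search dynamics, then apply Hoeffding for each fixed count $n$.

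\emph{Step 1: reward-tape construction.} I would use the standard ``stack of rewards'' representation of the forward model. Before the search starts, for every $(s,a)$ draw an infinite i.i.d.\ sequence $(Z_i(s,a))_{i\ge1}$ with $Z_i(s,a)\sim\nu(\cdot\mid s,a)$, independent across pairs and independent of the transition randomness. The $i$-th time the algorithm queries the model at $(s,a)$, it receives $\hat r_i(s,a)=Z_i(s,a)$. This coupling produces exactly the same joint law of the search as the original model. With it, $\bar r(s,a)=\frac1n\sum_{i=1}^n Z_i(s,a)$ on the event $\{N(s,a)=n\}$.

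\emph{Step 2: Hoeffding for fixed $n$.} For each deterministic $n\ge1$, the variables $Z_1(s,a),\dots,Z_n(s,a)$ are i.i.d., supported on $[0,1]$, with mean $r(s,a)$. The two-sided Hoeffding inequality therefore gives
\[
\Pr\Big(\Big|\tfrac1n\textstyle\sum_{i=1}^n Z_i(s,a)-r(s,a)\Big|>\varepsilon\Big)\le 2\exp(-2\varepsilon^2 n).
\]
This is exactly the claimed bound with $N(s,a)$ replaced by $n$.

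\emph{Step 3: passing to the random count (main obstacle).} The delicate step is justifying ``conditional on $N(s,a)$''. The event $\{N(s,a)=n\}$ may depend on the tape values $Z_1,\dots,Z_n$ through the empirical estimates $\hat Q$ and $\hat\sigma^2$ that drive selection, so naive conditioning can bias the sample. I would read the lemma in the sense it is used downstream: the bound holds for the empirical mean of the first $n$ tape entries, for every fixed $n$, uniformly in the sampling rule.

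Where a statement genuinely about the random count is required, I would bound
\[
\Pr\big(|\bar r(s,a)-r(s,a)|>\varepsilon,\ N(s,a)\ge n_0\big)\le\sum_{n\ge n_0}2e^{-2\varepsilon^2 n}.
\]
This sum is $O(e^{-2\varepsilon^2 n_0})$ with a constant depending only on $\varepsilon$. Combined with the deterministic lower bound $N(s,a)\ge\alpha N(s)$ from Lemma~\ref{lem:VarDE-mcts-minpulls} and its corollary, this yields the exponential-in-$N(s)$ form needed later in Theorem~\ref{thm:VarDE-mcts-value-estimates}, at the cost of an adjusted constant $C_{s,\varepsilon}$.
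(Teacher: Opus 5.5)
Your core argument matches the paper's. Its entire proof is the remark that the rewards at $(s,a)$ are i.i.d.\ draws from $\nu(\cdot\mid s,a)$ on $[0,1]$, followed by Hoeffding's inequality. That is exactly your Step~2, with $N(s,a)$ treated as if it were a fixed sample size.

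Step~3 goes beyond the paper, and your concern there is well founded. The weight $w(s,a)$ and the variance $\tilde\sigma^2(s,a)$ depend on the rewards already observed at $(s,a)$. So the event $\{N(s,a)=n\}$ is correlated with $\hat r_1(s,a),\dots,\hat r_n(s,a)$: unusually low early rewards shrink $w(s,a)$ and tend to produce few visits. Conditionally on $N(s,a)$, the rewards are therefore not i.i.d., and the literal conditional bound does not follow from Hoeffding and need not hold. The paper does not address this, and its transition lemma makes the same unproved ``given $n$, i.i.d.''\ assertion.

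Your reward-tape coupling yields the statement that is actually true: $\Pr\bigl(|\bar r(s,a)-r(s,a)|>\varepsilon,\ N(s,a)=n\bigr)\le 2e^{-2\varepsilon^2 n}$ for each fixed $n$. Your union bound over $n\ge n_0$ then handles the random count, at the cost of a factor $(1-e^{-2\varepsilon^2})^{-1}$.

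Combined with the pathwise min-pulls bound and the fact that $N(s_0)=T$ is deterministic at the root, this is what the downstream argument genuinely needs. That holds only if the later induction lemmas are likewise rephrased with joint events such as $\{\text{deviation}\}\cap\{N(s_t)\ge m\}$ rather than conditioning on $N(s_t)=n$.
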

\begin{proof}
Since the rewards are independently sampled from identical distribution $\nu(\cdot|s,a)$, this lemma can be obtained using Hoeffding's inequality.
\end{proof}

\subsection{Concentration of empirical transitions}
\begin{lemma}[Concentration of empirical transitions]
\label{lem:VarDE-mcts-transitions}
Fix any state $s$ and action $a\in \mathcal{A}(s)$. For each $s'\in\mathcal{S}(s,a)$ let $N(s')$ be the number of observed transitions from $(s,a)$ to $s'$ and recall that $\hat P(s'\mid s,a) := N(s')/N(s,a)$. Then, for any $\varepsilon>0$, conditional on $N(s,a)$ we have
\[
    \Pr\!\left(
        \max_{s'\in\mathcal{S}(s,a)}
            \bigl|
                \hat P(s'\mid s,a) - P(s'\mid s,a)
            \bigr|
        > \varepsilon
    \right)
    \;\le\;
    2\exp\!\left(
        -\frac12\,\varepsilon^2 N(s,a)
    \right).
\]
\end{lemma}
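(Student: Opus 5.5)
The plan is to reduce the simultaneous deviation over all successor states to a single deviation of an empirical distribution function, and then apply the Dvoretzky--Kiefer--Wolfowitz (DKW) inequality \citep{DvoretzkyKieferWolfowitz1956} in Massart's form, with constant $2$. The constant $\tfrac12\varepsilon^2$ in the exponent is exactly what DKW gives at tolerance $\varepsilon/2$, which indicates this is the intended route.

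\emph{Step 1 (i.i.d.\ structure given the count).} We use the standard ``stack of samples'' construction. Before the search begins, draw for the pair $(s,a)$ an infinite i.i.d.\ sequence $S'_1,S'_2,\dots\sim P(\cdot\mid s,a)$, independent of all other randomness. The $k$-th time $(s,a)$ is selected, the simulator returns $S'_k$. This produces the same joint law for the \textsc{VarDE--MCTS} process. The empirical frequencies after $N(s,a)=n$ visits are then functions of $S'_1,\dots,S'_n$ only. Any event concerning the first $n$ entries of the stack, for a fixed $n$, can therefore be treated as an event about $n$ i.i.d.\ draws. This is how we read ``conditional on $N(s,a)$'', as in Lemma~\ref{lem:VarDE-mcts-rewards}.

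\emph{Step 2 (reduction to a CDF deviation).} Enumerate the finite support $\mathcal S(s,a)=\{s'_1,\dots,s'_m\}$ in a fixed order and identify $s'_j$ with the integer $j$. Let $F(j)=\sum_{\ell\le j}P(s'_\ell\mid s,a)$ and let $\hat F_n(j)$ be its empirical counterpart, with $F(0)=\hat F_n(0)=0$. For each $j$,
\[
\hat P(s'_j\mid s,a)-P(s'_j\mid s,a)=\bigl(\hat F_n(j)-F(j)\bigr)-\bigl(\hat F_n(j-1)-F(j-1)\bigr),
\]
so
\[
\max_j\bigl|\hat P(s'_j\mid s,a)-P(s'_j\mid s,a)\bigr|\le 2\sup_x\bigl|\hat F_n(x)-F(x)\bigr|.
\]
Hence the event in the lemma is contained in the event $\{\sup_x|\hat F_n(x)-F(x)|>\varepsilon/2\}$.

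\emph{Step 3 (DKW).} DKW with Massart's constant holds for arbitrary distributions on $\mathbb R$, including discrete ones. It gives
\[
\Pr\bigl(\sup_x|\hat F_n(x)-F(x)|>\varepsilon/2\bigr)\le 2\exp\bigl(-2n(\varepsilon/2)^2\bigr)=2\exp\bigl(-\tfrac12\varepsilon^2 n\bigr),
\]
which is the stated bound with $n=N(s,a)$.

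The main obstacle is Step 1, because $N(s,a)$ is a random, algorithm-dependent count. Conditioning on it could in principle bias the observed successors, since the selection rule depends on past outcomes. The skeleton construction avoids this. The bound is proved for each fixed $n$ on the first $n$ stack entries, and that is the sense in which it is applied downstream. If a bound uniform over the random count were needed, it would follow from a union bound over $n$, at the price of a summable factor. Steps 2 and 3 are routine.
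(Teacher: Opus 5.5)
Your proposal is correct and follows the paper's proof essentially step for step: the paper also enumerates $\mathcal S(s,a)$, bounds each mass-function error by $2\sup_x|F_n(x)-F(x)|$ via $\hat P(s_k\mid s,a)=F_n(k)-F_n(k-1)$, and applies DKW with constant $2$ at tolerance $\varepsilon/2$. The one difference is Step 1: the paper simply asserts that the successors are i.i.d.\ given $N(s,a)=n$, whereas your stack-of-samples argument bounds the deviation of the first $n$ stack entries for each fixed $n$, which is the more careful justification of what is actually used downstream.
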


\begin{proof}
Fix $(s,a)$ and condition on $N(s,a)=n>0$.
Each time \textsc{VarDE--MCTS} visits $(s,a)$, the next state is drawn from the
true kernel $P(\cdot\mid s,a)$.
The visit times are chosen adaptively by the tree policy, but given $n$
the $n$ successor states of $(s,a)$ are i.i.d.\ with distribution
$P(\cdot\mid s,a)$.
Hence we may work with an i.i.d.\ sample of size $n$.

Enumerate the (finite) successor set as
\[
    \mathcal{S}(s,a)=\{s_1,\dots,s_m\}.
\]
For $j=1,\dots,n$, let $Y_j\in\{1,\dots,m\}$ be the index of the
$j$-th observed successor of $(s,a)$:
\[
    Y_j = k
    \quad\Longleftrightarrow\quad
    \text{the $j$-th successor is } s_k.
\]
Then $Y_1,\dots,Y_n$ are i.i.d.\ and
\[
    \Pr(Y_1 = k) = P(s_k\mid s,a), \qquad k=1,\dots,m.
\]
By construction,
\[
    N(s_k) = \sum_{j=1}^n \mathbf{1}\{Y_j = k\},
    \qquad
    \hat P(s_k\mid s,a)
    = \frac{N(s_k)}{n}
    = \frac{1}{n}\sum_{j=1}^n \mathbf{1}\{Y_j = k\}.
\]

\paragraph{Step 1: Empirical and true CDFs.}
Define the true CDF
\[
    F(x) := \Pr(Y_1 \le x), \qquad 1\le x\le m,
\]
and the empirical CDF
\[
    F_n(x) := \frac{1}{n}\sum_{j=1}^n \mathbf{1}\{Y_j \le x\},
    \qquad 1\le x\le m.
\]
For integers $k=1,\dots,m$,
\[
    F(k)   = \sum_{j=1}^k P(s_j\mid s,a),
    \qquad
    F_n(k) = \sum_{j=1}^k \hat P(s_j\mid s,a).
\]

\paragraph{Step 2: DKW inequality.}
By the Dvoretzky-Kiefer-Wolfowitz inequality \cite{DvoretzkyKieferWolfowitz1956}, for any $\delta>0$,
\begin{equation}
    \Pr\!\left(
        \sup_{1\le x\le m} \bigl|F_n(x)-F(x)\bigr| > \delta
    \right)
    \;\le\;
    2\exp\!\bigl(-2n\delta^2\bigr).
    \label{eq:VarDE-dkw}
\end{equation}

\paragraph{Step 3: From CDF errors to mass-function errors.}
For each $k=1,\dots,m$ we can write the probability masses using the
CDFs:
\[
    P(s_k\mid s,a)     = F(k) - F(k-1),
    \qquad
    \hat P(s_k\mid s,a) = F_n(k) - F_n(k-1),
\]
with the convention $F(0)=F_n(0)=0$.
Therefore
\begin{align*}
    \bigl|\hat P(s_k\mid s,a) - P(s_k\mid s,a)\bigr|
    &=
    \bigl|\,[F_n(k)-F_n(k-1)] - [F(k)-F(k-1)]\,\bigr| \\
    &\le
    \bigl|F_n(k)-F(k)\bigr|
    +
    \bigl|F_n(k-1)-F(k-1)\bigr| \\
    &\le
    2 \sup_{1\le x\le m} \bigl|F_n(x)-F(x)\bigr|.
\end{align*}
Taking the maximum over $k$ yields
\[
    \max_{1\le k\le m}
        \bigl|\hat P(s_k\mid s,a) - P(s_k\mid s,a)\bigr|
    \;\le\;
    2 \sup_{1\le x\le m} \bigl|F_n(x)-F(x)\bigr|.
\]

\paragraph{Step 4: Plug DKW with $\varepsilon/2$.}
Let $\varepsilon>0$.
From the previous display we have the inclusion of events
\[
    \left\{
        \max_{1\le k\le m}
            \bigl|\hat P(s_k\mid s,a) - P(s_k\mid s,a)\bigr|
        > \varepsilon
    \right\}
    \;\subseteq\;
    \left\{
        \sup_{1\le x\le m} \bigl|F_n(x)-F(x)\bigr|
        > \frac{\varepsilon}{2}
    \right\}.
\]
Taking probabilities and applying~\eqref{eq:VarDE-dkw} with
$\delta = \varepsilon/2$ gives
\[
    \Pr\!\left(
        \max_{1\le k\le m}
            \bigl|\hat P(s_k\mid s,a) - P(s_k\mid s,a)\bigr|
        > \varepsilon
    \right)
    \le
    \Pr\!\left(
        \sup_{1\le x\le m} \bigl|F_n(x)-F(x)\bigr|
        > \frac{\varepsilon}{2}
    \right)
    \le
    2\exp\!\left(
        -\frac12\,\varepsilon^2 n
    \right).
\]

Replacing $n$ by $N(s,a)$ yields the claimed inequality.
\end{proof}

\subsection{Induction through the search tree}
\begin{lemma}
\label{lem:VarDE-mcts-induction-VQ}
Consider an \textsc{VarDE--MCTS} process and a state--action pair $(s_t,a_t)\in\mathcal S\times\mathcal A$.
Assume that for every $s_{t+1} \in \cup_{a\in\mathcal{A}(s_t)} \mathcal{S}(s_t,a)$ and every fixed $\varepsilon_{t+1}>0$, there exist constants $C_{s_{t+1}}>0$ and $k_{s_{t+1}}(\varepsilon_{t+1})>0$ such that
\[
\Pr\!\Big(
\big|\hat V(s_{t+1})-V^*(s_{t+1})\big|
>\varepsilon_{t+1}
\Big)
\;\le\;
C_{s_{t+1}}
\exp\!\Big(
-k_{s_{t+1}}(\varepsilon_{t+1})\varepsilon_{t+1}^2\,N(s_{t+1})
\Big),
\]
then, for every fixed $\varepsilon>0$, there exist constants $C>0$ and $k_\varepsilon>0$ such that
\[
\Pr\!\Big(
\big|\hat Q(s_t,a_t)-Q^*(s_t,a_t)\big|
>\varepsilon
\Big)
\;\le\;
C\exp\!\Big(
-k_\varepsilon\,\varepsilon^2\,N(s_t,a_t)
\Big).
\]
\end{lemma}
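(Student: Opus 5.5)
I would start from the Bellman-type decomposition that the backup in \textsc{VarDE--MCTS} induces at a state--action node. With max back-up, the empirical action value satisfies
\[
\hat Q(s_t,a_t)=\bar r(s_t,a_t)+\gamma\sum_{s'\in\mathcal S(s_t,a_t)}\hat P(s'\mid s_t,a_t)\,\hat V(s'),
\]
while the true value satisfies $Q^*(s_t,a_t)=r(s_t,a_t)+\gamma\sum_{s'}P(s'\mid s_t,a_t)V^*(s')$. Subtracting and adding $\gamma\sum_{s'}\hat P(s'\mid s_t,a_t)V^*(s')$ splits the error into three pieces:
\[
\big|\hat Q-Q^*\big|\le |\bar r-r|+\gamma\sum_{s'}\hat P(s')\,|\hat V(s')-V^*(s')|+\gamma\sum_{s'}|\hat P(s')-P(s')|\,V^*(s').
\]
These are the reward error, the child-value error, and the transition error. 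I would allot $\varepsilon/3$ to each and union bound.

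\textbf{Reward and transition terms.} The reward term is handled directly by Lemma~\ref{lem:VarDE-mcts-rewards}, giving $2\exp(-2\varepsilon^2N(s_t,a_t)/9)$. For the transition term I would use $V^*\le H$ and $|\mathcal S(s_t,a_t)|=m$ finite. The event is then contained in $\{\max_{s'}|\hat P-P|>\varepsilon/(3\gamma mH)\}$, and Lemma~\ref{lem:VarDE-mcts-transitions} bounds its probability by $2\exp(-\varepsilon^2N(s_t,a_t)/(18\gamma^2m^2H^2))$.

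\textbf{Child-value term.} Since $\sum_{s'}\hat P(s')=1$, this term is at most $\gamma\max_{s'}|\hat V(s')-V^*(s')|$ over successors that were actually visited. It therefore suffices to control each successor with tolerance $\varepsilon_{t+1}=\varepsilon/(3\gamma)$ via the induction hypothesis. The difficulty is that the hypothesis is stated in terms of $N(s')$, not $N(s_t,a_t)$. I would split on the event $E=\{\hat P(s'\mid s_t,a_t)\ge P(s'\mid s_t,a_t)/2 \ \forall s'\}$. By Lemma~\ref{lem:VarDE-mcts-transitions} with tolerance $p_{\min}/2$, where $p_{\min}=\min_{s'}P(s'\mid s_t,a_t)>0$, the complement of $E$ has probability at most $2\exp(-p_{\min}^2N(s_t,a_t)/8)$. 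On $E$, every successor satisfies $N(s')\ge \tfrac{p_{\min}}{2}N(s_t,a_t)$, since visits to $s'$ through $(s_t,a_t)$ are counted in $N(s')$. Plugging this into the hypothesis and summing over the $m$ successors gives a bound $mC'\exp(-k'\varepsilon^2 N(s_t,a_t))$ with $k'=k_{s'}(\varepsilon/(3\gamma))\,p_{\min}/(18\gamma^2)$, minimized over $s'$.

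\textbf{Main obstacle.} The delicate step is exactly this child term. The hypothesis bound holds conditionally on $N(s')$, which is random and adaptively coupled with $N(s_t,a_t)$. I would handle this by working on $E$ and using monotonicity of the exponential bound in $N(s')$. I would also note that the i.i.d. conditioning is justified in the same way as in Lemma~\ref{lem:VarDE-mcts-transitions}.

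\textbf{Conclusion.} The $p_{\min}$ term has no $\varepsilon$ dependence. To write it in the stated form $\exp(-k_\varepsilon\varepsilon^2N)$, I would set $k_\varepsilon$ to be the minimum of all the rate constants, with the $p_{\min}$ term converted by choosing its rate as $p_{\min}^2/(8\varepsilon^2)$. This is allowed because the constant may depend on $\varepsilon$. Summing the four bounds then yields $C\exp(-k_\varepsilon\varepsilon^2N(s_t,a_t))$ with $C=4+mC'$.
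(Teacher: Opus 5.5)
Your proposal is correct and follows the paper's route. You split $\hat Q-Q^*$ into reward, transition and child-value errors, control them with Lemma~\ref{lem:VarDE-mcts-rewards}, Lemma~\ref{lem:VarDE-mcts-transitions} and the induction hypothesis, and union bound; your exact add-and-subtract decomposition only replaces the paper's interval-product bounds, with their $\varepsilon_1\varepsilon_2|\mathcal S(s_t,a_t)|$ cross term, by a cleaner identity. Your event $E$, giving $N(s')\ge \tfrac{p_{\min}}{2}N(s_t,a_t)$, is a more careful version of the paper's step converting the child rate in $N(s_{t+1})$ into a rate in $N(s_t,a_t)$: the paper cites Lemma~\ref{lem:VarDE-mcts-minpulls}, which only relates $N(s,a)$ to $N(s)$ at one node. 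The only slip is harmless bookkeeping: your four bounds with leading constants $2,2,2,mC'$ give $C=6+mC'$, not $4+mC'$.
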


\begin{proof}
By the assumption, Lemma~\ref{lem:VarDE-mcts-minpulls} and a union bound over $s_{t+1}\in\mathcal S(s_t,a_t)$, for every fixed $\varepsilon_1>0$ there exist constants $C_1>0$ and $k_1(\varepsilon_1)>0$ such that
\[
\Pr(\xi_1)
\ \ge\
1- C_1 \exp\!\big(-k_1(\varepsilon_1)\varepsilon_1^2\,N(s_t,a_t)\big),
\]
\[
\text{with } \xi_1 = \Big\{\forall s_{t+1}\in\mathcal S(s_t,a_t):\
\big|\hat V(s_{t+1})-V^*(s_{t+1})\big|\le \varepsilon_1\Big\}.
\]
Moreover, by Lemma~\ref{lem:VarDE-mcts-transitions}, there exist constants $C_2,k_2>0$ such that for any $\varepsilon_2>0$,
\[
\Pr(\xi_2)
\ \ge\
1- C_2 \exp\!\big(-k_2\varepsilon_2^2\,N(s_t,a_t)\big),
\]
\[
\text{with } \xi_2 = \Big\{\max_{s_{t+1}\in\mathcal S(s_t,a_t)}
\Big|\frac{N(s_{t+1})}{N(s_t,a_t)}-P(s_{t+1}\mid s_t,a_t)\Big|
\le \varepsilon_2\Big\}.
\]
Also, by Lemma~\ref{lem:VarDE-mcts-rewards}, there exist constants $C_3,k_3>0$ such that for any $\varepsilon_3>0$,
\[
\Pr(\xi_3)
\ \ge\
1- C_3 \exp\!\big(-k_3\varepsilon_3^2\,N(s_t,a_t)\big),
\]
\[
\text{ with } \xi_3 = \Big\{\big|\bar r(s_t,a_t)-r(s_t,a_t)\big|\le \varepsilon_3\Big\}.
\]
Under the event $\xi_1 \cap \xi_2 \cap \xi_3$, we have
\[
\begin{aligned}
\hat Q(s_t,a_t)
&=
\bar r(s_t,a_t)+\sum_{s_{t+1}\in\mathcal S(s_t,a_t)}\frac{N(s_{t+1})}{N(s_t,a_t)}\,\hat V(s_{t+1}) \\
&\le
r(s_t,a_t)+\varepsilon_3+\sum_{s_{t+1}}\Big(P(s_{t+1}\mid s_t,a_t)+\varepsilon_2\Big)\Big(V^*(s_{t+1})+\varepsilon_1\Big) \\
&=
r(s_t,a_t) + \sum_{s_{t+1}} P(s_{t+1}\mid s_t,a_t) V^*(s_{t+1})
\;+\; \varepsilon_2 \sum_{s_{t+1}} V^*(s_{t+1}) \\
&\quad +\; \varepsilon_1 \sum_{s_{t+1}} P(s_{t+1}\mid s_t,a_t)
\;+\; \varepsilon_1 \varepsilon_2\,|\mathcal S(s_t,a_t)|
\;+\; \varepsilon_3 \\
&=
Q^*(s_t,a_t)
\;+\; \varepsilon_2 \sum_{s_{t+1}} V^*(s_{t+1})
\;+\; \varepsilon_1 + \varepsilon_3
\;+\; \varepsilon_1 \varepsilon_2\,|\mathcal S(s_t,a_t)|.
\end{aligned}
\]
and similarly, by the corresponding lower bounds,
\[
\hat Q(s_t,a_t)
\ge
Q^*(s_t,a_t)
\;-\; \varepsilon_2 \sum_{s_{t+1}} V^*(s_{t+1})
\;-\; \varepsilon_1 - \varepsilon_3
\;+\; \varepsilon_1 \varepsilon_2\,|\mathcal S(s_t,a_t)|.
\]
For any $\varepsilon>0$, set
\[
\varepsilon_1 = \varepsilon_3 = \frac{\varepsilon}{4},
\qquad
\varepsilon_2 = \min\!\left\{
\frac{\varepsilon}{4\bigl(1+\sum_{s_{t+1}} V^*(s_{t+1})\bigr)},
\frac{1}{|\mathcal S(s_t,a_t)|}
\right\}.
\]
then give
\[
\big|\hat Q(s_t,a_t)-Q^*(s_t,a_t)\big|
\le \varepsilon.
\]
Since $\xi_1 \cap \xi_2 \cap \xi_3 \to \big|\hat Q(s_t,a_t)-Q^*(s_t,a_t)\big|\le\varepsilon$, we have $\big|\hat Q(s_t,a_t)-Q^*(s_t,a_t)\big|>\varepsilon \to \neg(\xi_1 \cap \xi_2 \cap \xi_3)$. Taking probabilities gives
\[
\begin{aligned}
\Pr\!\Big(\big|\hat Q(s_t,a_t)-Q^*(s_t,a_t)\big|>\varepsilon\Big)
&\le \Pr\!\Big(\neg(\xi_1 \cap \xi_2 \cap \xi_3)\Big) \\
&= \Pr(\neg\xi_1 \cup \neg\xi_2 \cup \neg\xi_3) \\
&\le \Pr(\neg\xi_1) + \Pr(\neg\xi_2) + \Pr(\neg\xi_3) \\
&\le C_1 \exp\!\big(-k_1(\varepsilon_1)\varepsilon_1^2\,N(s_t,a_t)\big) \\
    &\quad + C_2 \exp\!\big(-k_2\varepsilon_2^2\,N(s_t,a_t)\big) \\
    &\quad + C_3 \exp\!\big(-k_3\varepsilon_3^2\,N(s_t,a_t)\big) \\
&\le C \exp\!\big(-k_\varepsilon\,\varepsilon^2\,N(s_t,a_t)\big).
\end{aligned}
\]
After fixing $\varepsilon$, the auxiliary tolerances $\varepsilon_1,\varepsilon_2,\varepsilon_3$ are fixed as well. Thus one may take
\[
C = C_1 + C_2 + C_3,
\qquad
k_\varepsilon
=
\min\!\left\{
\frac{k_1(\varepsilon/4)}{16},
k_2\left(\frac{\varepsilon_2}{\varepsilon}\right)^2,
\frac{k_3}{16}
\right\}>0.
\]
This concludes the proof.
\end{proof}

\begin{lemma}
\label{lem:VarDE-mcts-induction-VV}
Consider an \textsc{VarDE--MCTS} process. Fix a depth $t\in\{1,\dots,H\}$ and a node $s_t\in\mathcal{S}$.
If for every successor state $s_{t+1} \in \cup_{a\in\mathcal{A}(s_t)} \mathcal{S}(s_t,a)$ and every fixed $\varepsilon_{s_{t+1}}>0$, there exist constants $C_{s_{t+1}}>0$ and $k_{s_{t+1}}(\varepsilon_{s_{t+1}})>0$ such that
\[
\Pr\!\Big(\big|\hat V(s_{t+1})-V^*(s_{t+1})\big|>\varepsilon_{s_{t+1}}\Big)
\;\le\;
C_{s_{t+1}}\exp\!\Big(-k_{s_{t+1}}(\varepsilon_{s_{t+1}})\,\varepsilon_{s_{t+1}}^{2}\,N(s_{t+1})\Big),
\]
then, for every fixed $\varepsilon>0$, there exist constants $C>0$ and $k_\varepsilon>0$ such that
\[
\Pr\!\Big(\big|\hat V(s_t)-V^*(s_t)\big|>\varepsilon\Big)
\;\le\;
C\exp\!\Big(-k_\varepsilon\,\varepsilon^{2}\,N(s_t)\Big).
\]
\end{lemma}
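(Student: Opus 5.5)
The plan is to pass from the children's concentration to $\hat V(s_t)$ in two steps. First we get concentration for every $\hat Q(s_t,a)$ through Lemma~\ref{lem:VarDE-mcts-induction-VQ}. Then we use the fact that the backup at $s_t$ is a max over finitely many actions, and the max is $1$-Lipschitz in $\|\cdot\|_\infty$. The algorithm uses max back-up, so
\[
\hat V(s_t)=\max_{a\in\mathcal A(s_t)}\hat Q(s_t,a),\qquad V^*(s_t)=\max_{a\in\mathcal A(s_t)}Q^*(s_t,a).
\]
This gives the deterministic bound $|\hat V(s_t)-V^*(s_t)|\le \max_a|\hat Q(s_t,a)-Q^*(s_t,a)|$. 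A union bound over $a$ then reduces the event $\{|\hat V(s_t)-V^*(s_t)|>\varepsilon\}$ to the events $\{|\hat Q(s_t,a)-Q^*(s_t,a)|>\varepsilon\}$.

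\textbf{Step 1.} For each $a\in\mathcal A(s_t)$, the hypothesis on the successors $s_{t+1}\in\mathcal S(s_t,a)$ is exactly the hypothesis of Lemma~\ref{lem:VarDE-mcts-induction-VQ}. That lemma yields constants $C_a,k_{a,\varepsilon}>0$ with
\[
\Pr\big(|\hat Q(s_t,a)-Q^*(s_t,a)|>\varepsilon\big)\le C_a\exp\big(-k_{a,\varepsilon}\varepsilon^2N(s_t,a)\big).
\]

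\textbf{Step 2.} We convert $N(s_t,a)$ into $N(s_t)$ using the corollary of Lemma~\ref{lem:VarDE-mcts-minpulls}. Once $N(s_t)>|\mathcal A(s_t)|$, it gives $N(s_t,a)\ge\alpha N(s_t)$ with $\alpha>0$ independent of the trajectory, since it depends only on $\tau,\bar\sigma,|\mathcal A(s_t)|$. Strictly, the lemma gives an affine bound $cN(s_t)-c'$, and the additive constant is absorbed into the prefactor $C$. For small $N(s_t)\le|\mathcal A(s_t)|$ (or below the threshold where the affine bound is positive), the probability is trivially at most $1$. We enlarge $C$, e.g.\ by a factor $\exp(k_\varepsilon\varepsilon^2 n_0)$, so the bound still holds there.

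\textbf{Step 3.} Sum over the finitely many actions:
\[
\Pr\big(|\hat V(s_t)-V^*(s_t)|>\varepsilon\big)\le C\exp\big(-k_\varepsilon\varepsilon^2N(s_t)\big),
\]
with $C=\sum_a C_a$ (after the Step 2 enlargement) and $k_\varepsilon=\alpha\min_a k_{a,\varepsilon}$.

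The main obstacle is the conditioning issue in Step 1. Lemma~\ref{lem:VarDE-mcts-induction-VQ} is stated conditionally on $N(s_t,a)$, but visit counts are random and adaptively determined. I would handle this as in the proof of Lemma~\ref{lem:VarDE-mcts-transitions}. Conditional on the counts, the samples entering each estimate are i.i.d., and the min-pulls bound is deterministic given $N(s_t)$. So the exponential bound holds conditionally on $N(s_t)=n$ uniformly over the adaptive split into $N(s_t,a)$, and we can take the worst case $N(s_t,a)\ge\alpha n$.
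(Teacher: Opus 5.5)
Your proposal is correct and follows the paper's proof: apply Lemma~\ref{lem:VarDE-mcts-induction-VQ} to each action at $s_t$, convert the $N(s_t,a)$-rate into an $N(s_t)$-rate via the min-pulls lemma, take a union bound over actions, and conclude with $|\max_a \hat Q(s_t,a)-\max_a Q^*(s_t,a)|\le\max_a|\hat Q(s_t,a)-Q^*(s_t,a)|$. Absorbing the additive constant of the affine min-pulls bound and the small-$N(s_t)$ cases into $C$, and your remark on the adaptive counts, are refinements of details the paper glosses over rather than a different route.
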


\begin{proof}
Apply Lemma~\ref{lem:VarDE-mcts-induction-VQ} to each action $a\in \mathcal A(s_t)$ and combine with
Lemma~\ref{lem:VarDE-mcts-minpulls} to convert the $N(s_t,a)$-rate into an $N(s_t)$-rate; then union
bound over $a\in \mathcal A(s_t)$.
Concretely, for every fixed $\varepsilon_1>0$ there exist constants $C_1>0$ and $k_1(\varepsilon_1)>0$ such that
\[
\Pr(\xi)
\;\ge\;
1 - C_1\exp\!\big(-k_1(\varepsilon_1)\varepsilon_1^2\,N(s_t)\big),
\]
\[
\text{ with } \xi = \Big\{\forall a\in \mathcal A(s_t):\; \big|\hat Q(s_t,a)-Q^*(s_t,a)\big|\le \varepsilon_1\Big\}.
\]

Fix $\varepsilon>0$ and set $\varepsilon_1=\varepsilon$.
On the event $\xi$,
\[
\big|\hat V(s_t)-V^*(s_t)\big|
=
\left|\max_{a\in\mathcal A(s_t)}\hat Q(s_t,a)-\max_{a\in\mathcal A(s_t)}Q^*(s_t,a)\right|
\le
\max_{a\in\mathcal A(s_t)}\big|\hat Q(s_t,a)-Q^*(s_t,a)\big|
\le \varepsilon.
\]
Thus $\xi \to \{|\hat V(s_t)-V^*(s_t)|\le \varepsilon\}$, and therefore
\[
\Pr\!\Big(\big|\hat V(s_t)-V^*(s_t)\big|> \varepsilon\Big)
\le \Pr(\neg\xi)
\le C_1\exp\!\big(-k_1(\varepsilon)\varepsilon^2\,N(s_t)\big).
\]
Taking $C=C_1$ and $k_\varepsilon=k_1(\varepsilon)$ concludes the proof.
\end{proof}

\begin{theorem}[\textsc{VarDE--MCTS} value estimates]
\label{thm:VarDE-mcts-root}
Consider a \textsc{VarDE--MCTS} process.
For any depth $t\in\{0,\dots,H\}$, any node $s_t$, and any fixed tolerance $\varepsilon>0$, there exist constants $C_{s_t,\varepsilon}>0$ and $k_{s_t,\varepsilon}>0$ such that, conditionally on $N(s_t)=n\ge1$,
\[
\Pr\!\Big(
\big|\hat V(s_t)-V^*(s_t)\big|>\varepsilon
\Big)
\le
C_{s_t,\varepsilon}\exp\!\big(-k_{s_t,\varepsilon}\,\varepsilon^2 n\big).
\]
Moreover, at the root node $s_0$, after $T$ simulations, there exist constants $C_\varepsilon>0$ and $k_\varepsilon>0$ such that
\[
\Pr\!\Big(
\big|\hat V(s_0)-V^*(s_0)\big|>\varepsilon
\Big)
\le
C_\varepsilon\exp\!\big(-k_\varepsilon\,\varepsilon^2 T\big).
\]
\end{theorem}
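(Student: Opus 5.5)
The plan is a backward induction on depth, from the leaves $t=H$ up to the root $t=0$, chaining Lemma~\ref{lem:VarDE-mcts-induction-VQ} and Lemma~\ref{lem:VarDE-mcts-induction-VV}.

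\emph{Base case.} At depth $H$ no further reward is collected, so $\hat V(s_H)=V^*(s_H)=0$ deterministically, and the bound holds trivially with any $C,k>0$. If one prefers to take the last decision level $t=H-1$ as the base, then $\hat Q(s_{H-1},a)=\bar r(s_{H-1},a)$. Lemma~\ref{lem:VarDE-mcts-rewards} gives a Hoeffding bound in $N(s_{H-1},a)$. Lemma~\ref{lem:VarDE-mcts-minpulls} (in its corollary form $N(s,a)\ge\alpha N(s)$, valid once $N(s)>|\mathcal A(s)|$) converts this into a rate in $N(s_{H-1})$. A union bound over actions, together with $|\max\hat Q-\max Q^*|\le\max|\hat Q-Q^*|$, then finishes this level.

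\emph{Inductive step.} Assume the statement for every node at depth $t+1$. This is exactly the hypothesis of Lemma~\ref{lem:VarDE-mcts-induction-VQ} for each pair $(s_t,a)$, which yields a bound in $N(s_t,a)$. Lemma~\ref{lem:VarDE-mcts-induction-VV} then yields the bound in $N(s_t)$. Two points need attention here.

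First, the hypothesis is stated in terms of $N(s_{t+1})$, while Lemma~\ref{lem:VarDE-mcts-induction-VQ} needs a rate in $N(s_t,a)$. I would handle this on the event $\xi_2$ of concentrated empirical transitions with $\varepsilon_2\le\frac12\min_{s'}P(s'\mid s_t,a)$. On that event $N(s_{t+1})\ge\frac12 P(s_{t+1}\mid s_t,a)N(s_t,a)$, so the constant $k$ absorbs the factor $\min_{s'}P(s'\mid s_t,a)/2$, which is positive because $\mathcal S(s_t,a)$ is finite.

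Second, for small counts $n\le|\mathcal A(s_t)|$, where the min-pulls lemma does not apply, I would enlarge $C_{s_t,\varepsilon}$ so that the right-hand side is at least $1$. This finite set of $n$ only inflates constants.

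Since the tree has finite depth $H$ and finitely many reachable nodes, the induction terminates with finite constants $C_{s_t,\varepsilon}$ and $k_{s_t,\varepsilon}>0$. Applying the result at $s_0$ with $N(s_0)=T$, since every simulation passes through the root, gives the second display with $C_\varepsilon=C_{s_0,\varepsilon}$ and $k_\varepsilon=k_{s_0,\varepsilon}$.

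\emph{Main obstacle.} The hard part is conditioning on the adaptive counts. The hypothesis ``conditionally on $N(s_t)=n$'' is delicate, because visit counts deeper in the tree are random and depend on earlier samples through the VarDE selection rule. I would argue, as in Lemma~\ref{lem:VarDE-mcts-transitions}, that the successor samples and reward samples at $(s_t,a)$ form i.i.d.\ sequences indexed by visit number; for instance, use a stack-of-rewards construction with pre-drawn samples for each node. The min-pulls bound is deterministic given $N(s_t)$, so the concentration bounds can be applied along these fixed sequences. A union bound over the random count then costs only constant factors.
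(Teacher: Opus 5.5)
Your proposal is correct and follows the paper's proof. Both use backward induction from the horizon, where $\hat V=V^*=0$ trivially, chain Lemma~\ref{lem:VarDE-mcts-induction-VQ} and Lemma~\ref{lem:VarDE-mcts-induction-VV} level by level, and read off the root bound from $N(s_0)=T$.

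Your additions fill in steps the paper glosses over. These are the conversion of the $N(s_{t+1})$-rate into an $N(s_t,a)$-rate on the transition-concentration event (the paper only cites the min-pulls lemma there), the inflation of constants for $n\le|\mathcal A(s_t)|$, and the pre-drawn sample stacks with a union bound over counts to handle adaptive sampling. The resulting ``after $n$ visits'' form of the per-node bound is exactly what the root statement needs, since $N(s_0)=T$ is deterministic.
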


\begin{proof}
Fix $\varepsilon>0$. The result holds for $t=H+1$ since $\hat V(s_{H+1})=V^*(s_{H+1})=0$.
Hence the conditional node-wise bound holds for all depths $t=0,\dots,H$ by backward induction using
Lemma~\ref{lem:VarDE-mcts-induction-VV}. At the root, after $T$ simulations we have $N(s_0)=T$, which gives the displayed root bound. Since $\varepsilon>0$ was arbitrary, the claim follows.
\end{proof}

\begin{corollary}[\textsc{VarDE--MCTS} root error probability]
\label{cor:VarDE-mcts-error-prob-appC}
Assume the conditions of Theorem~\ref{thm:VarDE-mcts-root} hold.  
Suppose the optimal root action
\[
a^* \;=\; \arg\max_{a} Q^*(s_0,a)
\]
is unique, and define the optimality gap
\[
\Delta_{\min}
\;:=\;
\min_{a \neq a^*}
\bigl( Q^*(s_0,a^*) - Q^*(s_0,a) \bigr)
\;>\; 0 .
\]
Let $\hat a_T = \arg\max_a \hat Q_T(s_0,a)$ be the action recommended by
\textsc{VarDE--MCTS} after $T$ simulations.  
Then there exist constants $c,C>0$ such that
\[
\Pr\!\left( \hat a_T \neq a^* \right)
\;\le\;
C \exp(-cT).
\]
\end{corollary}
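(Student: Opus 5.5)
The plan is to reduce the misidentification event to a deviation of some root action-value estimate by at least half the gap, and then apply the concentration bounds already established. If $\hat a_T = a \neq a^*$, then $\hat Q_T(s_0,a) \ge \hat Q_T(s_0,a^*)$. Since $Q^*(s_0,a^*) - Q^*(s_0,a) \ge \Delta_{\min}$, at least one of the following must hold:
\[
\hat Q_T(s_0,a^*) < Q^*(s_0,a^*) - \tfrac{\Delta_{\min}}{2},
\qquad
\hat Q_T(s_0,a) > Q^*(s_0,a) + \tfrac{\Delta_{\min}}{2}.
\]
Hence the error event is contained in the union over $a \in \mathcal A(s_0)$ of the events $\{|\hat Q_T(s_0,a)-Q^*(s_0,a)| > \Delta_{\min}/2\}$. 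A union bound over the finitely many root actions then reduces the claim to a per-action concentration bound.

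For each root action I would invoke Lemma~\ref{lem:VarDE-mcts-induction-VQ} at $(s_0,a)$ with $\varepsilon = \Delta_{\min}/2$. Its hypothesis, concentration of $\hat V(s_1)$ at every successor $s_1$ in terms of $N(s_1)$, is exactly what the backward induction in Theorem~\ref{thm:VarDE-mcts-root} delivers at depth $1$. The lemma gives
\[
\Pr\bigl(|\hat Q_T(s_0,a)-Q^*(s_0,a)|>\Delta_{\min}/2\bigr) \le C_a \exp\bigl(-k_a \Delta_{\min}^2 N(s_0,a)/4\bigr).
\]
This bound is in terms of $N(s_0,a)$, so it still has to be converted to $T$. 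After $T$ simulations $N(s_0)=T$, and by Lemma~\ref{lem:VarDE-mcts-minpulls} (and its corollary), for $T>|\mathcal A(s_0)|$ we have $N(s_0,a)\ge \alpha T$ with $\alpha>0$ depending on $\tau$, $\bar\sigma$ and $|\mathcal A(s_0)|$. Substituting gives $C_a\exp(-k_a\alpha\Delta_{\min}^2 T/4)$. Summing over actions yields $C=\sum_a C_a$ and $c=\alpha\Delta_{\min}^2\min_a k_a/4$. The finitely many small $T\le|\mathcal A(s_0)|$ are absorbed by enlarging $C$, since probabilities are at most $1$.

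The delicate step is the conditioning. Lemma~\ref{lem:VarDE-mcts-induction-VQ} is stated conditionally on visit counts, while $N(s_0,a)$ and the successor counts $N(s_1)$ are random. Inside that lemma, the successor bounds are also converted from $N(s_1)$-rates to $N(s_0,a)$-rates through the transition concentration (event $\xi_2$, which forces $N(s_1)\gtrsim P(s_1\mid s_0,a)N(s_0,a)$). I would take that lemma as given. The remaining point is that the min-pulls lower bound is deterministic, so it holds on every sample path and can be substituted into the exponent without any additional probability cost.

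An alternative shortcut would use only the root statement of Theorem~\ref{thm:VarDE-mcts-root}. That does not work, because concentration of $\hat V(s_0)$ alone does not control which action attains the maximum. This is why the argument has to go through the action-level $\hat Q$ bounds.
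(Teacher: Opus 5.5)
Your proposal is correct and follows the paper's proof essentially step for step: you reduce misidentification to some root action having $|\hat Q_T(s_0,a)-Q^*(s_0,a)|>\Delta_{\min}/2$, apply Lemma~\ref{lem:VarDE-mcts-induction-VQ} at $(s_0,a)$ with the depth-$1$ hypothesis supplied by Theorem~\ref{thm:VarDE-mcts-root}, convert $N(s_0,a)$ to $T$ via the pathwise min-pulls bound, and union bound over root actions. One small slip, which the paper's own write-up also makes: at exactly $\varepsilon=\Delta_{\min}/2$, the action attaining the minimal gap can tie with $a^*$ in $\hat Q_T$, so your ``at least one must hold'' claim fails on that boundary when ties are broken against $a^*$; taking any $\varepsilon<\Delta_{\min}/2$ (e.g.\ $\varepsilon=\Delta_{\min}/3$) makes the containment airtight.
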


\begin{proof}
By Theorem~\ref{thm:VarDE-mcts-root}, for every fixed $\varepsilon>0$ there exist constants $c_1(\varepsilon),C_1>0$ such that
\[
\Pr\!\left(
\bigl| \hat V(s_1) - V^*(s_1) \bigr| > \varepsilon
\right)
\le C_1 \exp\left(-c_1(\varepsilon) \varepsilon^2 N(s_1)\right).
\]

Lemma~\ref{lem:VarDE-mcts-induction-VQ} implies that exponential
concentration of the value estimates propagates to the root
action--value estimates. In particular, for each fixed $\varepsilon>0$ and each root action $a$, there
exist constants $c_2(\varepsilon),C_2>0$ such that
\[
\Pr\!\left(
\bigl| \hat Q(s_0,a) - Q^*(s_0,a) \bigr| > \varepsilon
\right)
\le C_2 \exp\left(-c_2(\varepsilon) \varepsilon^2 N(s_0,a)\right).
\]

Using Lemma~\ref{lem:VarDE-mcts-minpulls}, for each fixed $\varepsilon>0$ there exist constants $c_3(\varepsilon),C_3>0$ such that for any root action $a$,
\[
\Pr\!\left(
\bigl| \hat Q(s_0,a) - Q^*(s_0,a) \bigr| > \varepsilon
\right)
\le C_3 \exp\left(-c_3(\varepsilon) \varepsilon^2 T\right).
\]

Set $\varepsilon = \Delta_{\min}/2$, and define the event
\[
\mathcal E_T :=
\bigcap_{a}
\left\{
\bigl| \hat Q(s_0,a) - Q^*(s_0,a) \bigr|
\le \tfrac{\Delta_{\min}}{2}
\right\}.
\]
On the event $\mathcal E_T$, for any suboptimal action $a \neq a^*$,
\[
\hat Q(s_0,a^*)
\ge Q^*(s_0,a^*) - \tfrac{\Delta_{\min}}{2}
>
Q^*(s_0,a) + \tfrac{\Delta_{\min}}{2}
\ge \hat Q(s_0,a),
\]
which implies $\hat a_T = a^*$.

Therefore,
\[
\Pr(\hat a_T \neq a^*)
\le \Pr(\mathcal E_T^c)
\le \sum_a
\Pr\!\left(
\bigl| \hat Q_T(s_0,a) - Q^*(s_0,a) \bigr|
> \tfrac{\Delta_{\min}}{2}
\right).
\]
Applying the exponential concentration bound from
Lemma~\ref{lem:VarDE-mcts-induction-VQ} and absorbing the finite number of
root actions into the constants yields
\[
\Pr(\hat a_T \neq a^*)
\le C \exp(-cT),
\]
which concludes the proof.
\end{proof}
\newpage
\section{Proofs for Subsection~\ref{ssec:bpi} (\textsc{VarDE--Q-learning})}
\label{appendix:BPI}

\subsection{Proof of Lemma~\ref{lem:bpi-coverage} (Coverage)}

\begin{proof}
Fix any state $s\in\mathcal S$ and let
\[
N_T(s):=\sum_{t=1}^T \mathbf 1\{s_t=s\},
\qquad
N_T(s,a):=\sum_{t=1}^T \mathbf 1\{(s_t,a_t)=(s,a)\}.
\]

\paragraph{Step 1 (Min-pulls at each state, \textsc{VarDE--BAI} on the subsequence).}
Let $\tau_s(1)<\tau_s(2)<\cdots$ be the (random) times at which $s_t=s$.
On this subsequence, define the per-state pull count
\[
\widetilde N_k^{(s)}(a):=\sum_{j=1}^k \mathbf 1\{a_{\tau_s(j)}=a\}.
\]
\[
\Rightarrow 
\widetilde N_k^{(s)}(a)=N_{\tau_s(k)}(s,a)
\quad\text{and}\quad
k=N_{\tau_s(k)}(s).
\]

At each visit to $s$, \textsc{VarDE--Q-learning} selects an action by maximizing
\(
\frac{w(s,a)^2\tilde\sigma^2(s,a)}{N(s,a)(N(s,a)+1)}
\)
(with the convention that the score is $\infty$ when $N(s,a)=0$), which provides a warm-start
$\eta=1$ on the per-state bandit with arms $\mathcal A(s)$.
Moreover, since $r\in[0,1]$ and the TD target satisfies
$y_t=\hat r_t+\gamma\max_b \hat Q(s_{t+1},b)\in[0,H_\gamma]$ with $H_\gamma=\frac{1}{1-\gamma}$,
and the $Q$-update is a convex combination of the previous value and $y_t$,
we have $\hat Q(s,a)\in[0,H_\gamma]$ for all $(s,a)$.
Hence the weights satisfy
\vspace{-0.5em}
\[
w(s,a)=\frac{e^{\hat Q(s,a)/\tau}}{\sum_{b\in\mathcal A(s)}e^{\hat Q(s,b)/\tau}}
=
\frac{e^{(\hat Q(s,a)/H_\gamma)/(\tau/H_\gamma)}}{\sum_{b\in\mathcal A(s)}e^{(\hat Q(s,b)/H_\gamma)/(\tau/H_\gamma)}},
\]
i.e., the per-state decision rule is exactly \textsc{VarDE--BAI} applied to normalized means
$\hat Q(\cdot,\cdot)/H_\gamma\in[0,1]$ with temperature $\tau':=\tau/H_\gamma$.

Therefore, applying Lemma~\ref{lem:VarDE-min-pulls} to the per-state bandit
(with $K=|\mathcal A(s)|$, total pulls $T=k$, warm-start $\eta=1$, and temperature $\tau'$),
for any $k>|\mathcal A(s)|$ and any $a\in\mathcal A(s)$,
\vspace{-0.5em}
\[
\widetilde N_k^{(s)}(a)
\;\ge\;
2e^{-1/\tau'}\bar\sigma\left(\frac{k}{|\mathcal A(s)|}-\frac{1}{2}\right)-\frac{1}{2}
=
2e^{-H_\gamma/\tau}\bar\sigma\left(\frac{k}{|\mathcal A(s)|}-\frac{1}{2}\right)-\frac{1}{2}.
\]
Equivalently, for any time $T$ such that $N_T(s)>|\mathcal A(s)|$,
\vspace{-0.5em}
\begin{equation}
\label{eq:bpi-minpulls-per-state}
N_T(s,a)
\;\ge\;
2e^{-H_\gamma/\tau}\bar\sigma\left(\frac{N_T(s)}{|\mathcal A(s)|}-\frac{1}{2}\right)-\frac{1}{2},
\qquad \forall a\in\mathcal A(s).
\end{equation}
In particular, if $N_T(s)\to\infty$, then $N_T(s,a)\to\infty$ for every $a\in\mathcal A(s)$.

\paragraph{Step 2 (All states are visited infinitely often).}
Let $\mathcal R$ be the set of states visited infinitely often:
\vspace{-0.5em}
\[
\mathcal R := \Big\{ s\in\mathcal S : N_T(s)\to\infty \text{ as } T\to\infty \Big\}.
\]
Since $|\mathcal S|<\infty$ and the process runs forever, $\mathcal R\neq\emptyset$ almost surely.

We claim that $\mathcal R$ is closed under one-step reachability of any action:
if $s\in\mathcal R$, $a\in\mathcal A(s)$, and $P(s'\mid s,a)>0$, then $s'\in\mathcal R$ almost surely.
Indeed, $s\in\mathcal R$ implies $N_T(s)\to\infty$, and then \eqref{eq:bpi-minpulls-per-state} gives
$N_T(s,a)\to\infty$.
Let $t_1<t_2<\cdots$ be the times when $(s_{t_n},a_{t_n})=(s,a)$.
At each such time, conditionally on the past, $s_{t_n+1}$ is drawn from $P(\cdot\mid s,a)$, so
\vspace{-0.5em}
\[
\Pr\!\big(s_{t_n+1}=s' \mid \mathcal F_{t_n}\big)=P(s'\mid s,a)=:p>0.
\]
Since there are infinitely many such trials with success probability $p>0$,
$s_{t_n+1}=s'$ occurs infinitely often almost surely,
which implies $N_T(s')\to\infty$, i.e., $s'\in\mathcal R$.
Thus, for every $s\in\mathcal R$ and every $a\in\mathcal A(s)$, all states in the support of $P(\cdot\mid s,a)$
also belong to $\mathcal R$. In other words, $\mathcal R$ is closed under all actions.

Now take any $s\in\mathcal R$ and any $s'\in\mathcal S$. By Assumption~\ref{ass:communicating-mdp},
there exists a stationary policy $\pi$ such that $\E_s^\pi[T_{s'}]<\infty$, hence
$\Pr_s^\pi(T_{s'}<\infty)=1$, so there exists some finite $m$ with $\Pr_s^\pi(T_{s'}=m)>0$.
Therefore there exists a length-$m$ path
\[
s=s_0 \xrightarrow{a_0} s_1 \xrightarrow{a_1} \cdots \xrightarrow{a_{m-1}} s_m=s'
\]
such that $\pi(a_k\mid s_k)>0$ and $P(s_{k+1}\mid s_k,a_k)>0$ for all $k$.
Since $\mathcal R$ is closed under all action transitions and $s_0\in\mathcal R$,
it follows inductively that $s_1,\dots,s_m\in\mathcal R$, in particular $s'\in\mathcal R$.
Because $s'\in\mathcal S$ was arbitrary, we conclude $\mathcal R=\mathcal S$ almost surely, i.e., $N_T(s)\to\infty \quad \text{for all } s\in\mathcal S$.

\paragraph{Step 3 (Coverage over state--action pairs).}
Finally, combining $N_T(s)\to\infty$ for all $s$ with \eqref{eq:bpi-minpulls-per-state}
yields $N_T(s,a)\to\infty$ for all $(s,a)$ almost surely, proving the lemma.
\end{proof}
\newpage
\section{Full Algorithms}
\label{app:algorithms}

\subsection{\textsc{VarDE--BAI}}
\begin{algorithm}[H]
\caption{\textsc{VarDE--BAI}}
\label{alg:VarDE-bai}
\begin{algorithmic}
\STATE{\textbf{Parameters:} temperature $\tau$, warm-start $\eta$, variance floor $\bar\sigma^2$, budget $T$}
\STATE{\textbf{Initialize:} $(N_i, \hat{\mu}_i, \hat{\sigma}_i^2) \gets (0, 0, 0)$ $\forall i$}
\FOR{$t = 0$ \textbf{to} $K\eta-1$}
    \STATE{Pull arm $\tilde i_t = t\text{ mod }K$; observe $r_t$; update $N_{\tilde i_t}$, $\hat{\mu}_{\tilde i_t}$, $\hat{\sigma}_{\tilde i_t}^2$ by Welford (Alg.~\ref{alg:welford})}
\ENDFOR{}
\FOR{$t = K\eta$ \textbf{to} $T-1$}
    \STATE{$w_i(\hat\mu) \gets \dfrac{\exp(\hat{\mu}_i/\tau)}{\sum_{j} \exp(\hat{\mu}_j/\tau)}$, $\tilde\sigma_i^2 \gets \max\{\hat\sigma_i^2,\bar\sigma^2\}$ $\forall i$}
    \STATE{Select arm $\tilde i_t = \arg\max_i \dfrac{w_i(\hat\mu)^2 \tilde{\sigma}_i^2}{N_i(N_i+1)}$; observe $r_t$}
    \STATE{Update $N_{\tilde i_t}$, $\hat{\mu}_{\tilde i_t}$, $\hat{\sigma}_{\tilde i_t}^2$ by Welford (Alg.~\ref{alg:welford})}
\ENDFOR{}
\STATE{\textbf{return} $\hat{i}_T = \arg\max_i \hat{\mu}_i$}
\end{algorithmic}
\end{algorithm}

\subsection{\textsc{VarDE--MCTS}}
\begin{algorithm}[H]
\caption{\textsc{VarDE--MCTS}}
\label{alg:VarDE-mcts}
\textbf{Input:} root state $s_0$, budget $T$, temperature $\tau$, variance floor $\bar\sigma^2$ \\
\textbf{Output:} recommended action $\hat a_T$
\hrule
\vspace{-1em}
\begin{multicols}{2}
\begin{algorithmic}
\FUNCTION{\textsc{SelectChild}($s$)}
  \FOR{$a \in \mathcal{A}(s)$}
    \STATE $w(s,a)\gets\dfrac{\exp(\hat Q(s,a)/\tau)}{\sum_b \exp(\hat Q(s,b)/\tau)}$
    \STATE $\tilde\sigma(s,a)\gets\max\{\hat\sigma(s,a),\bar\sigma\}$
  \ENDFOR
  \STATE \textbf{return} $\arg\max_a \dfrac{w(s,a)^2\,\tilde\sigma^2(s,a)}{N(s,a)(N(s,a)+1)}$
\ENDFUNCTION
\STATE
\FUNCTION{\textsc{ExpandSimulate}($s$)}
  \STATE Add an unsampled action $a$ to node $s$
  \STATE Rollout from $(s,a)$ to depth $H$ to get return $\hat G$
  \STATE Update $\hat Q(s,a)\leftarrow \hat G$; $N(s,a)\leftarrow N(s,a)+1$
  \STATE $\hat V(s)\leftarrow \max_a \hat Q(s,a)$
  \STATE \textbf{return} $\hat G$
\ENDFUNCTION
\STATE
\FUNCTION{\textsc{Backpropagate}(\textit{path}, $\hat G$)}
  \WHILE{\textit{path} not empty}
    \STATE Pop last $(s,a,\hat r)$ from \textit{path}
    \STATE $\hat G \gets \hat r + \gamma \hat G$
    \STATE Update $\hat\sigma^2(s,a)$ from $\hat G$ by Welford (Alg.~\ref{alg:welford})
    \STATE $\hat Q(s,a)\gets \bar r(s,a) + \gamma\sum_{s'}\dfrac{N(s')}{N(s,a)}\hat V(s')$
    \STATE $\hat V(s)\gets \max_a \hat Q(s,a)$
  \ENDWHILE
\ENDFUNCTION
\columnbreak
\FUNCTION{\textsc{MainLoop}($s_0$)}
  \FOR{$t=1,\ldots,T$}
    \STATE $s\gets s_0$;\quad \textit{path}$\gets[\ ]$
    \WHILE{$s$ is not expandable}
      \STATE $a \gets \textsc{SelectChild}(s)$
      \STATE Observe $\hat r \sim \nu(\cdot|s,a)$
      \STATE $\bar r(s,a) \gets \dfrac{N(s,a)\bar r(s,a)+\hat r}{N(s,a)+1}$
      \STATE $N(s)\gets N(s)+1$;\quad $N(s,a)\gets N(s,a)+1$
      \STATE Transition $s' \sim P(\cdot|s,a)$
      \STATE Append $(s,a,\hat r)$ to \textit{path}; $s\gets s'$
    \ENDWHILE
    \STATE $\hat G \gets \textsc{ExpandSimulate}(s)$
    \STATE \textsc{Backpropagate}(\textit{path}, $\hat G$)
  \ENDFOR
  \STATE \textbf{return} $\hat a_T=\arg\max_a \hat Q(s_0,a)$
\ENDFUNCTION
\end{algorithmic}
\end{multicols}
\vspace{-1em}
\end{algorithm}

\subsection{\textsc{VarDE--Q-learning}}
\begin{algorithm}[H]
\caption{\textsc{VarDE--Q-learning}}
\label{alg:VarDE-bpi}
\begin{algorithmic}
\STATE \textbf{Parameters:} budget $T$, discount $\gamma$, temperature $\tau$, variance floor $\bar\sigma^2$, initial state $s_0$
\STATE \textbf{Initialize:} $Q(s,a)\leftarrow \dfrac{1}{1-\gamma}$, $\hat\sigma^2(s,a)\leftarrow0$, $N(s,a)\leftarrow 0$ $\forall (s,a)$; $H_\gamma \leftarrow \dfrac{1}{1-\gamma}$; $s\leftarrow s_0$
\FOR{$t=1,\dots,T$}
  \STATE $w(s,b)\gets\dfrac{\exp(Q(s,b)/\tau)}{\sum_{c\in\mathcal{A}(s)} \exp(Q(s,c)/\tau)}$, $\forall b\in\mathcal{A}(s)$
  \STATE $\tilde\sigma^2(s,b)\leftarrow \max\{\hat\sigma^2(s,b),\bar\sigma^2\}$, $\forall b\in\mathcal{A}(s)$
  \STATE Take action $a \leftarrow \arg\max_{b\in\mathcal{A}(s)} \dfrac{w(s,b)^2\,\tilde\sigma^2(s,b)}{N(s,b)(N(s,b)+1)}$
  \STATE Observe reward $\hat r$ and next state $s'$
  \STATE $y \leftarrow \hat r + \gamma \max_b Q(s',b)$
  \STATE Update $N(s,a)$, $\hat{\sigma}(s,a)^2$ by Welford on samples of $y$ (Alg.~\ref{alg:welford})
  \STATE $\alpha\leftarrow\dfrac{H_\gamma+1}{H_\gamma+N(s,a)}$; $Q(s,a)\leftarrow (1-\alpha)Q(s,a)+\alpha y$; $s\leftarrow s'$
\ENDFOR
\STATE \textbf{return} $\hat\pi_T(\cdot)=\arg\max_a Q(\cdot,a)$
\end{algorithmic}
\end{algorithm}

\subsection{Implementation details}

\paragraph{Division by $N(s,a)=0$.}
The VarDE score $\dfrac{w(s,a)^2\tilde\sigma^2(s,a)}{N(s,a)(N(s,a)+1)}$ is set to $+\infty$ when $N(s,a)=0$. This ensures that all actions are tried at least once before the algorithm starts exploiting the variance estimates.

\paragraph{Expandable node definition in MCTS.}
An expandable node is defined as a state node $s$ where there exists at least one action $a\in\mathcal{A}(s)$ such that $N(s,a)=0$.

\paragraph{Welford method for empirical mean and variance update.}
We use Welford method to update empirical mean and variance online. The method's pseudo code can be seen in Algorithm~\ref{alg:welford}.
\begin{algorithm}[H]
\caption{Welford update for empirical mean and variance}
\label{alg:welford}
\begin{algorithmic}
\STATE \textbf{Input:} count $N\ge0$, mean $\hat\mu$, variance $\hat\sigma^2$, new sample $x$
\STATE $M_2 \gets N\hat\sigma^2$
\STATE $N \gets N + 1$
\STATE $\delta \gets x - \hat\mu$
\STATE $\hat\mu \gets \hat\mu + \delta/N$
\STATE $\delta_2 \gets x - \hat\mu$
\STATE $M_2 \gets M_2 + \delta\,\delta_2$
\STATE $\hat\sigma^2 \gets M_2/N$
\STATE \textbf{return} $(N,\hat\mu,\hat\sigma^2)$
\end{algorithmic}
\end{algorithm}
\newpage
\section{Experimental Details}\label{app:exp}
\subsection{BAI Experiments}
\label{app:bai}
We evaluate \textsc{VarDE--BAI} on the four following standard Best-Arm Identification (BAI) benchmarks.
\begin{itemize}
    \item \textbf{Experiment BAI.1:} One group of bad arms. Bernoulli distributions. 20 arms. $\mu_0 = 0.2$, $\mu_{1:19} = 0.1$. $T = 1200$.
    \item \textbf{Experiment BAI.2:} Two groups of bad arms. Bernoulli distributions. 20 arms. $\mu_0 = 0.2$, $\mu_{1:5} = 0.12$, $\mu_{6:19} = 0.08$. $T = 1000$.
    \item \textbf{Experiment BAI.3:} Geometric progression. Gaussian distributions. 14 arms. $\mu_0 = 0.4$, $\mu_i = 0.4 - 0.9^{i+9}$ for all $i=1,\dots,13$. $\sigma_i$ is shuffled from $\mu_i$. $T = 200$.
    \item \textbf{Experiment BAI.4:} Ten arms divided in three groups. Gaussian distributions. 10 arms. $\mu_0 = 0.3$, $\mu_{1:3} = 0.22$, $\mu_{4:6} = 0.2$, $\mu_{7:9} = 0.15$. $\sigma_i$ is shuffled from $\mu_i$. $T = 150$.
\end{itemize}

Besides the error probability at the last pull presented in Table~\ref{tab:bai1}, below are additional results showing the evolution of the error probability over time.
\begin{figure}[H]
    \centering
    \includegraphics[width=0.49\linewidth]{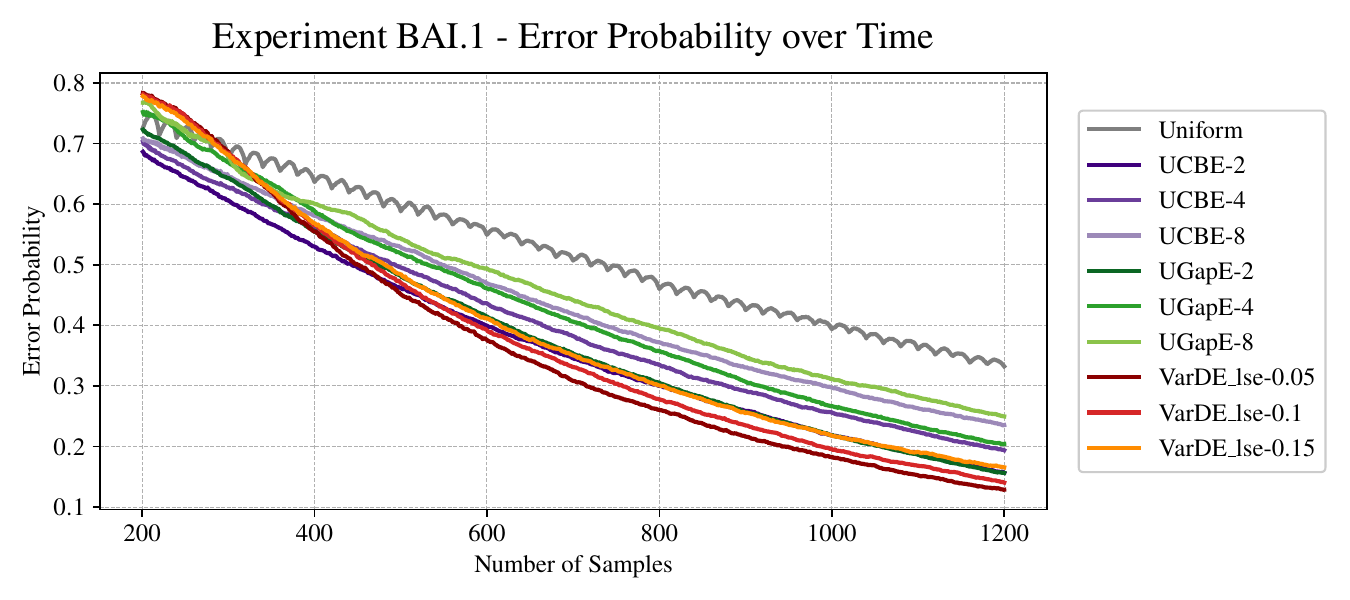}
    \includegraphics[width=0.49\linewidth]{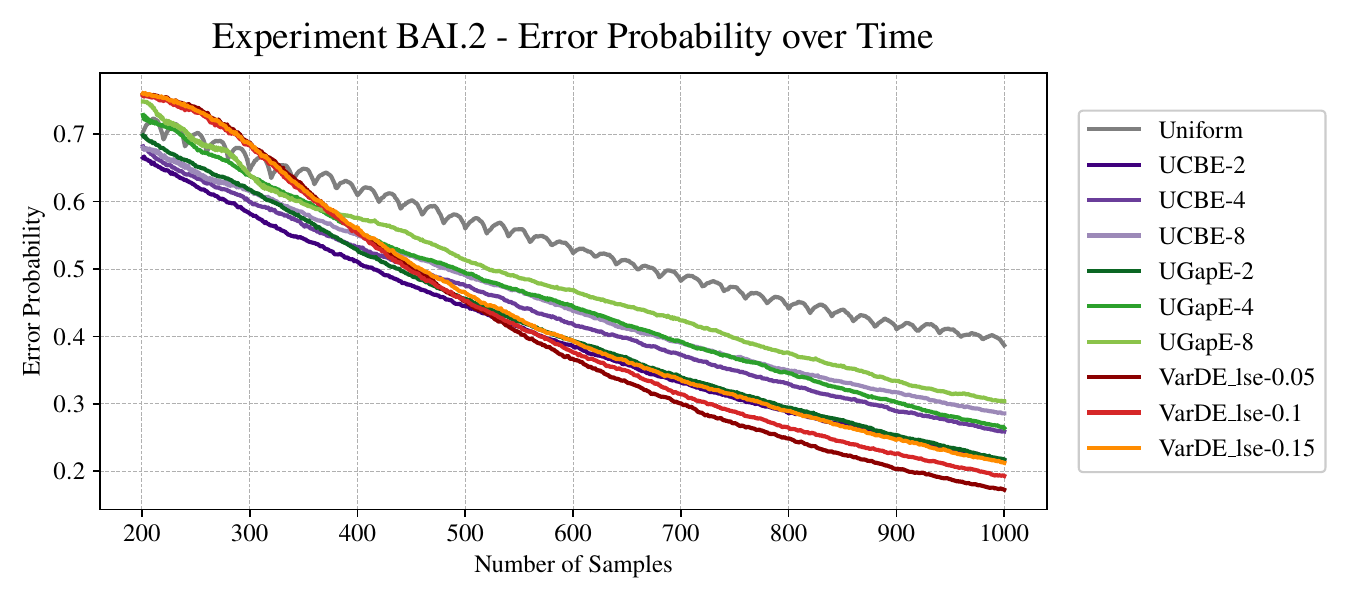}
    \includegraphics[width=0.49\linewidth]{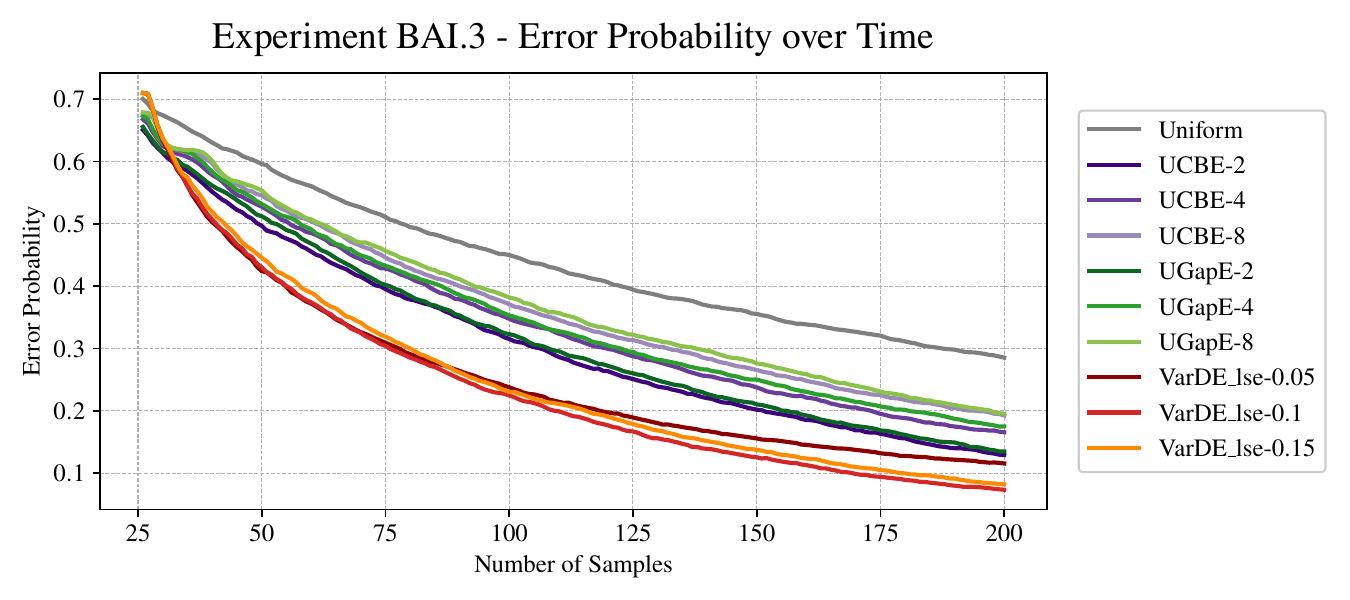}
    \includegraphics[width=0.49\linewidth]{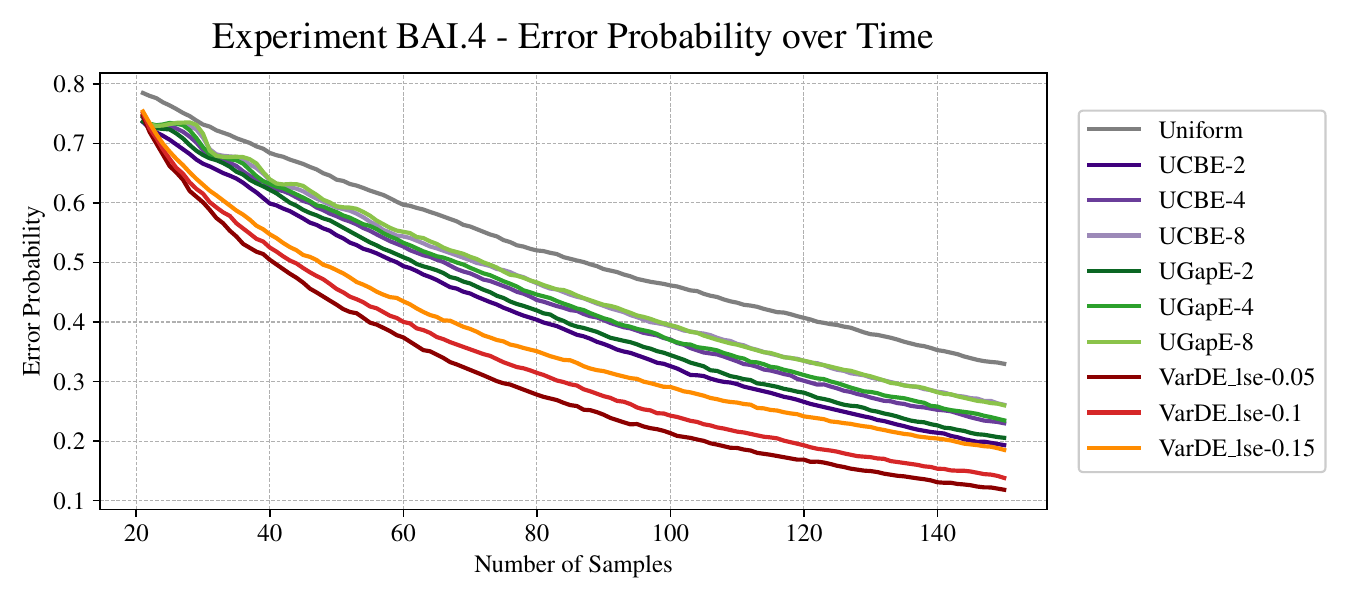}
    \caption{Error probability over time for Experiments BAI.1 to BAI.4.}
    \label{fig:bai1}
\end{figure}

We also provide empirical comparisons of \textsc{VarDE--BAI} with different decision functions on two additional BAI experiments:
\begin{itemize}
    \item \textbf{Experiment BAI.5:} Arithmetic progression. Gaussian distributions. 15 arms. $\mu_i = 0.4 - 0.025i$ for all $i=0,\dots,14$. $\sigma_i$ is shuffled from $\mu_i$. $T = 2000$.
    \item \textbf{Experiment BAI.6:} Two good arms and a large group of bad arms. Gaussian distributions. 20 arms. $\mu_0 = 0.3$, $\mu_1 = 0.28$, $\mu_{2:19} = 0.17$. $\sigma_i$ is shuffled from $\mu_i$. $T = 2000$.
\end{itemize}

Below are the implemented decision potentials $Y(x)$ and their derivatives $w(x)=\nabla Y(x)$ used by \textsc{VarDE--BAI}. In our experiments, we use $\tau=0.05$ (LogSumExp), $\mu=0.5$ (Nesterov), $(\alpha,\mu)=(2,0.1)$ (EntMax), $\delta=0.1$ (Pairwise Softplus), and $p=5$ (Power Mean).
\begin{itemize}
    \item \textbf{LogSumExp (temperature $\tau$):}
    \[
        Y_{\tau}(x) = \tau \log\Big(\sum_{i=1}^{K} \exp(x_i/\tau)\Big),
        \qquad
        w_i(x) = \frac{\exp(x_i/\tau)}{\sum_{j=1}^{K} \exp(x_j/\tau)}.
    \]

    \item \textbf{Nesterov (quadratic smoothing $\mu$):}
    \[
        Y_{\mu}(x) = \max_{p\in\Delta_K}\Big\{\langle p, x \rangle - \frac{\mu}{2}\|p\|_2^2\Big\},
        \qquad
        w(x) = \nabla Y_{\mu}(x) = \Pi_{\Delta_K}(x/\mu),
    \]
    where $\Pi_{\Delta_K}$ denotes Euclidean projection onto the probability simplex.

    \item \textbf{EntMax (Tsallis / $\alpha$-entmax smoothing):}
    \[
        Y_{\alpha,\mu}(x) = \max_{p\in\Delta_K}\Big\{\langle p, x \rangle - \frac{\mu}{\alpha(\alpha-1)}\sum_{i=1}^{K} p_i^{\alpha}\Big\},
        \qquad
        w(x) = \nabla Y_{\alpha,\mu}(x) = \arg\max_{p\in\Delta_K}(\cdot).
    \]
    Equivalently (as implemented), there exists a threshold $\theta\in\mathbb{R}$ such that
    \[
        w_i(x) \propto \Big[ c\,(x_i-\theta)\Big]_+^{\tfrac{1}{\alpha-1}},
        \qquad
        c = \Big(\tfrac{\alpha-1}{\mu\alpha}\Big)^{\tfrac{1}{\alpha-1}},
        \qquad
        \sum_{i=1}^{K} w_i(x) = 1.
    \]

    \item \textbf{Pairwise Softplus (pairwise smooth-max, parameter $\delta$):}
    Define, for $a,b\in\mathbb{R}$,
    \[
        m_{\delta}(a,b) = \frac{a+b}{2} + \sqrt{\Big(\frac{a-b}{2}\Big)^2 + \delta^2}.
    \]
    Its partial derivatives are
    \[
        \frac{\partial m_{\delta}}{\partial a}(a,b)
        = \frac{1}{2} + \frac{a-b}{2\sqrt{\big(\tfrac{a-b}{2}\big)^2 + \delta^2}},
        \qquad
        \frac{\partial m_{\delta}}{\partial b}(a,b) = 1 - \frac{\partial m_{\delta}}{\partial a}(a,b).
    \]
    For $x\in\mathbb{R}^K$, $Y_{\delta}(x)$ is obtained by repeatedly merging pairs using $m_{\delta}$ (in a balanced binary-tree order), and $w(x)=\nabla Y_{\delta}(x)$ is obtained by backpropagating these pairwise derivatives (yielding $w(x)\in\Delta_K$).

    \item \textbf{Power Mean (order $p>1$):}
    Let $\tilde x_i=x_i$ if $\min_j x_j>0$, and otherwise $\tilde x_i = x_i - \min_j x_j + \varepsilon$ (with $\varepsilon=10^{-8}$ in code). Define
    \[
        Y_p(x) = \Big(\frac{1}{K}\sum_{i=1}^{K} \tilde x_i^{p}\Big)^{1/p}.
    \]
    Its gradient (ignoring the constant shift used to ensure positivity) satisfies
    \[
        \frac{\partial Y_p}{\partial x_i}(x) \propto \tilde x_i^{p-1},
        \qquad
        w_i(x) = \frac{\tilde x_i^{p-1}}{\sum_{j=1}^{K} \tilde x_j^{p-1}} \ \ \text{(simplex-normalized, as implemented).}
    \]
\end{itemize}

\subsubsection{Component ablations and parameter sensitivity}
\label{app:bai-ablation}
We include additional analyses for \textsc{VarDE--BAI} to separate the two factors in the sampling score and to evaluate the sensitivity to the temperature and variance floor. In the component ablation, \textsc{Var-Only} removes the influence weights from the score and uses only the empirical variance term, while \textsc{Weight-Only} removes the empirical variance term and uses only the influence weights. Table~\ref{tab:bai-ablation} shows that both components are useful, and the full influence-weighted variance rule performs best at every checkpoint.

\begin{table}[H]
\caption{Component ablation for \textsc{VarDE--BAI}: error probability (\%) over time.}
\label{tab:bai-ablation}
\centering
\small
\begin{tabular}{@{}lcccc@{}}
\toprule
Method & 500 & 1000 & 1500 & 2000 \\
\midrule
Var-Only    & 36.40 & 24.83 & 18.22 & 13.09 \\
Weight-Only & 19.29 &  9.37 &  6.40 &  5.07 \\
VarDE       & \textbf{13.58} & \textbf{5.21} & \textbf{2.69} & \textbf{1.84} \\
\bottomrule
\end{tabular}
\end{table}

Table~\ref{tab:bai-sensitivity} reports error probability over a grid of LSE temperatures $\tau$ and variance floors $\bar\sigma^2$. The method is more sensitive to $\tau$ than to the variance floor. The variance floor mainly stabilizes early variance estimates, and performance near the best temperature is stable over a broad range of floors.

\begin{table}[H]
\caption{Sensitivity of \textsc{VarDE--BAI}: error probability (\%) across temperature $\tau$ and variance floor $\bar\sigma^2$.}
\label{tab:bai-sensitivity}
\centering
\small
\setlength{\tabcolsep}{4pt}
\begin{tabular}{@{}lccccc@{}}
\toprule
$\tau$ / $\bar\sigma^2$ & $10^{-4}$ & $10^{-3}$ & $10^{-2}$ & $10^{-1}$ & $1$ \\
\midrule
0.01 & 43.75 & 43.88 & 44.86 & 49.08 & 49.43 \\
0.03 & 13.55 & 13.22 & 13.57 & 14.78 & 14.94 \\
0.05 &  6.22 &  5.71 &  \textbf{5.33} &  5.68 &  5.69 \\
0.10 &  6.27 &  6.18 &  6.07 &  6.14 &  6.32 \\
0.20 & 13.15 & 12.88 & 12.87 & 11.28 & 10.93 \\
0.50 & 19.72 & 19.00 & 19.32 & 17.08 & 16.86 \\
1.00 & 22.19 & 22.00 & 21.92 & 19.69 & 18.86 \\
2.00 & 24.16 & 23.76 & 23.47 & 21.11 & 20.16 \\
\bottomrule
\end{tabular}
\end{table}

\subsubsection{First-order approximation study}
\label{app:approximation-study}
To quantify the effect of neglecting higher-order terms, we compare the first-order variance surrogate
\[
V_{\mathrm{1st}}=\sum_i w_i(\hat\mu)^2\frac{\hat\sigma_i^2}{N_i}
\]
with the full nonlinear LSE variance
\[
V_{\mathrm{full}}=\Var\!\left[\tau\log\sum_i e^{\tilde\mu_i/\tau}\right],
\qquad
\tilde\mu_i\sim\mathcal N\!\left(\hat\mu_i,\frac{\hat\sigma_i^2}{N_i}\right).
\]
Tables~\ref{tab:approx-tau005}--\ref{tab:approx-tau015} show that for very small $\tau=0.05$, the first-order surrogate significantly underestimates the full variance throughout learning. For $\tau=0.10$, the approximation becomes much tighter as learning progresses. For $\tau=0.15$, the discrepancy is already moderate early on and quickly becomes very small.

\begin{table}[H]
\caption{First-order approximation study for $\tau=0.05$.}
\label{tab:approx-tau005}
\centering
\small
\begin{tabular}{@{}rcccc@{}}
\toprule
Step & $V_{\mathrm{full}}$ & $V_{\mathrm{1st}}$ & $V_{\mathrm{full}}-V_{\mathrm{1st}}$ & $\frac{V_{\mathrm{full}}-V_{\mathrm{1st}}}{V_{\mathrm{full}}}$ \\
\midrule
20  & $1.227\times 10^{-2}$ & $3.838\times 10^{-3}$ & $8.510\times 10^{-3}$ & 66.4\% \\
400 & $3.659\times 10^{-3}$ & $2.530\times 10^{-4}$ & $3.406\times 10^{-3}$ & 88.7\% \\
600 & $2.790\times 10^{-3}$ & $1.687\times 10^{-4}$ & $2.622\times 10^{-3}$ & 88.0\% \\
900 & $1.916\times 10^{-3}$ & $1.027\times 10^{-4}$ & $1.813\times 10^{-3}$ & 84.9\% \\
\bottomrule
\end{tabular}
\end{table}

\begin{table}[H]
\caption{First-order approximation study for $\tau=0.10$.}
\label{tab:approx-tau010}
\centering
\small
\begin{tabular}{@{}rcccc@{}}
\toprule
Step & $V_{\mathrm{full}}$ & $V_{\mathrm{1st}}$ & $V_{\mathrm{full}}-V_{\mathrm{1st}}$ & $\frac{V_{\mathrm{full}}-V_{\mathrm{1st}}}{V_{\mathrm{full}}}$ \\
\midrule
20  & $7.112\times 10^{-3}$ & $2.911\times 10^{-3}$ & $4.201\times 10^{-3}$ & 56.1\% \\
400 & $3.609\times 10^{-4}$ & $2.398\times 10^{-4}$ & $1.211\times 10^{-4}$ & 21.6\% \\
600 & $2.157\times 10^{-4}$ & $1.621\times 10^{-4}$ & $5.369\times 10^{-5}$ & 14.4\% \\
900 & $1.198\times 10^{-4}$ & $1.003\times 10^{-4}$ & $1.963\times 10^{-5}$ &  8.4\% \\
\bottomrule
\end{tabular}
\end{table}

\begin{table}[H]
\caption{First-order approximation study for $\tau=0.15$.}
\label{tab:approx-tau015}
\centering
\small
\begin{tabular}{@{}rcccc@{}}
\toprule
Step & $V_{\mathrm{full}}$ & $V_{\mathrm{1st}}$ & $V_{\mathrm{full}}-V_{\mathrm{1st}}$ & $\frac{V_{\mathrm{full}}-V_{\mathrm{1st}}}{V_{\mathrm{full}}}$ \\
\midrule
20  & $4.644\times 10^{-3}$ & $2.712\times 10^{-3}$ & $1.933\times 10^{-3}$ & 39.5\% \\
400 & $2.540\times 10^{-4}$ & $2.390\times 10^{-4}$ & $1.534\times 10^{-5}$ &  5.3\% \\
600 & $1.682\times 10^{-4}$ & $1.617\times 10^{-4}$ & $7.276\times 10^{-6}$ &  3.8\% \\
900 & $1.026\times 10^{-4}$ & $1.003\times 10^{-4}$ & $3.339\times 10^{-6}$ &  3.0\% \\
\bottomrule
\end{tabular}
\end{table}

\subsection{MCTS Experiments}
\label{app:mcts}
\subsubsection{Experiment MCTS.1 - Sailing environments}
\paragraph{Environment.}
The state is $(x,y,d)$, where $(x,y)$ is the agent's position on a $6\times 6$ grid and $d\in\{0,\dots,7\}$ is the current wind direction (8 compass directions).
From each state, the agent selects one of the 8 compass moves that stays within the grid and is not directly into the wind.
After moving, the wind transitions stochastically: it stays the same with probability $0.6$, and turns left or right with probability $0.2$ each.
The reward is dense and negative, corresponding to a movement cost that increases with the angle between the action direction and the wind direction.
The agent starts at position $(0,0)$ and the episode ends when it reaches the goal at position $(5,5)$ or when the horizon $H=50$ is reached.
The 8 available wind directions and the applicable actions with their rewards under the blue-direction wind are illustrated in Figure~\ref{fig:sailing_env}.
\begin{figure}[H]
    \centering
    \includegraphics[width=0.35\linewidth]{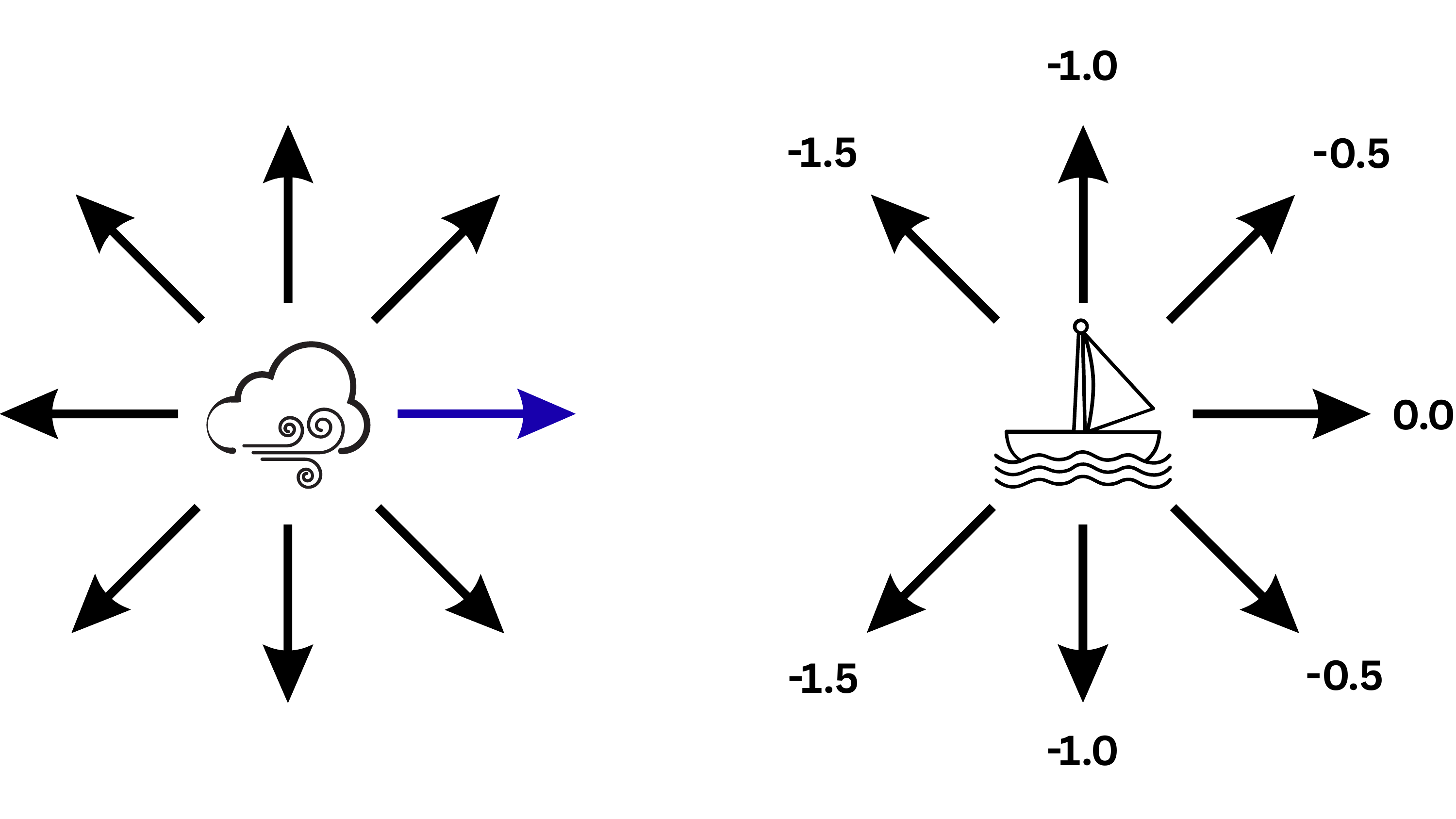}
    \caption{Sailing environment wind directions, applicable actions, and rewards under blue-direction wind.}
    \label{fig:sailing_env}
\end{figure}

\paragraph{Evaluation protocol.}
For each algorithm, we run $100$ independent runs.
Within each run, we perform MCTS simulations and evaluate the policy induced by the current search tree at fixed checkpoints using Monte Carlo rollouts.
If a rollout reaches a state that is not represented in the current tree, the evaluation policy defaults to uniform random actions.
We evaluate after every $1000$ simulations, from $1000$ to $30000$, and estimate performance using $200$ rollouts per checkpoint.

\paragraph{Hyperparameter selection.}
We select hyperparameters by grid search, using $50$ independent runs, $10000$ MCTS simulations per run, and $200$ evaluation rollouts.
For each algorithm, we choose the configuration with the best average Monte Carlo return.
The tuned configurations used are:
\textsc{UCT} (bias $=10$, $\varepsilon=0.1$),
\textsc{MENTS}/\textsc{RENTS}/\textsc{TENTS} (temperature $=0.1$, $\varepsilon=0.25$),
\textsc{DENTS} (temperature $=2.0$, $\varepsilon=0.75$),
\textsc{BTS} (temperature $=3.0$, $\varepsilon=0.25$),
and \textsc{VarDE} (temperature $=1.5$, variance floor $=100$).

\subsubsection{Experiment MCTS.2 - Taxi}
\paragraph{Environment.}
Actions are \emph{south}, \emph{north}, \emph{east}, \emph{west}, \emph{pickup}, and \emph{dropoff}.
Movement respects the internal wall layout, and in the ``raining'' setting each move action becomes stochastic: the intended move occurs with probability $0.7$, and the two orthogonal moves occur with probability $0.15$ each.
Rewards follow the standard Taxi design: $-1$ for movement, $-1$ for a successful pickup, $+20$ for a correct dropoff, and $-10$ for illegal pickup/dropoff attempts.
We run with horizon $H=20$, raining enabled.

\paragraph{Evaluation protocol.}
We evaluate after every $1000$ simulations, from $1000$ to $30000$, using $200$ evaluation rollouts per checkpoint, averaged over $100$ independent runs.

\paragraph{Hyperparameter selection.}
We select hyperparameters by grid search, using $50$ independent runs, $10000$ MCTS simulations per run, and $200$ evaluation rollouts.
The tuned configurations used are:
\textsc{UCT} (bias = auto, $\varepsilon=0.1$),
\textsc{MENTS} (temperature $=1.0$, $\varepsilon=0.25$),
\textsc{RENTS} (temperature $=0.1$, $\varepsilon=0.5$),
\textsc{TENTS} (temperature $=2.0$, $\varepsilon=0.75$),
\textsc{DENTS} (temperature $=3.0$, $\varepsilon=0.75$),
\textsc{BTS} (temperature $=5.0$, $\varepsilon=0.5$),
and \textsc{VarDE} (temperature $=5.0$, variance floor $=10$).

\subsubsection{Experiment MCTS.3 - FrozenLake}
\paragraph{Environment.}
The state is $(x,y)$, where $(x,y)$ is the agent's position on a $4\times 4$ map.
Each action (\texttt{left}, \texttt{right}, \texttt{up}, \texttt{down}) succeeds with probability $p=0.8$; the remaining probability mass is split equally among the other three directions (slippery environment), with wall/boundary collisions leaving the agent in place.
An episode terminates upon reaching the goal, falling into a hole, or when the horizon $H=10$ is reached.
Rewards are sparse: reaching the goal at time $t$ yields reward $\gamma^t$ (with $\gamma=0.95$), and all other rewards are $0$.

\paragraph{Evaluation protocol.}
We follow the same evaluation procedure as in Experiment MCTS.1. We evaluate after every $1000$ simulations, from $1000$ to $30000$, using $200$ evaluation rollouts per checkpoint, averaged over $100$ independent runs.

\paragraph{Hyperparameter selection.}
We select hyperparameters by grid search, using $50$ independent runs, $10000$ MCTS simulations per run, and $200$ evaluation rollouts.
The tuned configurations used are:
\textsc{UCT} (bias = auto, $\varepsilon=0.2$),
\textsc{MENTS}/\textsc{RENTS} (temperature $=0.01$, $\varepsilon=0.85$),
\textsc{TENTS} (temperature $=0.05$, $\varepsilon=0.85$),
\textsc{DENTS} (temperature $=0.1$, $\varepsilon=0.85$),
\textsc{BTS} (temperature $=0.05$, $\varepsilon=0.5$),
and \textsc{VarDE} (temperature $=0.015$, variance floor $=0.01$).

\subsubsection{Experiment MCTS.4 - Synthetic Tree}
\paragraph{Environment.}
We consider a synthetic stochastic tree planning problem.
The environment is a balanced $k$-ary tree of depth $d$.
At each internal node, the agent selects one of the $k$ outgoing actions, but the transition is noisy: the intended child is taken with probability $0.5$ and each other child with probability $0.5/(k-1)$.
Episodes terminate upon reaching a leaf, where a terminal reward is sampled as $r\sim\mathcal{N}(\mu_\ell, 0.5^2)$.
Leaf means $\mu_\ell$ are generated by sampling i.i.d.\ edge weights and setting each leaf mean to the sum of weights along its root-to-leaf path, then rescaling all leaf means to $[0,1]$ within each tree.
The optimal root value $V^\star(s_0)$ is computed exactly by dynamic programming under the true transition model.
We evaluate across four tree sizes with $(k,d)\in\{(16,1),(14,3),(16,4),(200,2)\}$.

\paragraph{Evaluation protocol.}
For each $(k,d)$ setting, we generate $5$ random trees and run each method for $1000$ simulations per tree.
We report the value estimation error $|V_t(s_0)-V^\star(s_0)|$ after each simulation $t$, averaged over $5$ independent runs per tree (mean $\pm$ 95\% CI).

\paragraph{Hyperparameter selection.}
We tune hyperparameters by grid search on a separate validation split (same simulation budget), selecting the configuration that minimizes the average value estimation error.
The tuned configurations used are:
\textsc{UCT} (exploration coefficient $=0.1$),
\textsc{MENTS} (exploration coefficient $=0.5$, $\tau=0.1$),
\textsc{RENTS} (exploration coefficient $=0.5$, $\tau=0.2$),
\textsc{TENTS} (exploration coefficient $=0.5$, $\tau=0.5$),
\textsc{DENTS} (exploration coefficient $=0.5$, $\tau=0.1$),
\textsc{BTS} (exploration coefficient $=0.25$, $\tau=0.1$),
and \textsc{VarDE} (temperature $=0.1$, variance floor $=0.001$).

\subsection{BPI Experiments}
\label{app:bpi}

\paragraph{Environments.}
We consider tabular continuing MDPs with Bernoulli rewards $r_t\in\{0,1\}$.
In both environments, episodes start at $s_0=0$ and we use discount $\gamma=0.99$.

\textsc{RiverSwim}$(L)$ has state space $\mathcal S=\{0,1,\dots,L-1\}$ (so $|\mathcal S|=L$) and two actions $\mathcal A=\{\texttt{left},\texttt{right}\}$.
The \texttt{left} action is deterministic: it moves one step left (and self-loops at $s=0$).
The \texttt{right} action is stochastic: for interior states $s\in\{1,\dots,L-2\}$ it transitions to $s+1$ with prob.\ $0.3$, stays in $s$ with prob.\ $0.6$, and moves to $s-1$ with prob.\ $0.1$; boundary cases follow the standard RiverSwim definition (at $s=0$, \texttt{right} goes to $1$ with prob.\ $0.3$ and otherwise stays; at $s=L-1$, \texttt{right} goes to $L-2$ with prob.\ $0.7$ and otherwise stays, while \texttt{left} goes to $L-2$ deterministically).
Rewards are sparse: $\Pr(r_t=1\mid s_t=0,a_t=\texttt{left})=0.05$, $\Pr(r_t=1\mid s_t=L-1,a_t=\texttt{right})=1$, and $0$ otherwise.

\textsc{ForkedRiverSwim}$(L)$ shares a common start state and then splits into two RiverSwim-like branches of length $L$.
It has $|\mathcal S|=2L-1$ states and three actions $\mathcal A=\{\texttt{left},\texttt{right},\texttt{switch}\}$.
Within each branch, \texttt{left}/\texttt{right} behave as in \textsc{RiverSwim}; \texttt{switch} deterministically moves to the state at the same depth on the other branch (and self-loops at the start and at the two terminal states).
Rewards are again Bernoulli and sparse: $\Pr(r_t=1\mid s_t=0,a_t=\texttt{left})=0.05$,
the terminal \texttt{right} reward is $1$ on one branch and $0.95$ on the other, and all other rewards are $0$.

We run \textsc{RiverSwim} with $L\in\{5, 10, 20, 30, 50\}$ and horizons $T\in\{10{,}000; 20{,}000; 40{,}000; 50{,}000; 80{,}000\}$ respectively, and \textsc{ForkedRiverSwim} with $L\in\{3, 5, 10, 15, 25\}$ and horizons $T\in\{20{,}000; 20{,}000; 100{,}000; 200{,}000; 300{,}000\}$ respectively.

\paragraph{Evaluation protocol.}
For each environment size and each method, we run $10$ independent runs (seeds $0,\dots,9$) for a fixed interaction budget of $T$ steps.
During training, we checkpoint every $200$ steps and evaluate the greedy policy $\hat\pi_t$ induced by the agent.
Since the environments are tabular, we evaluate $\hat\pi_t$ by exact iterative policy evaluation on the true transition kernel and expected rewards (tolerance $10^{-6}$), i.e., we do not use Monte Carlo rollouts.
We compute $V^*$ by policy iteration on the same model and report the normalized value proximity $1-\|V^*-V^{\hat\pi_T}\|_\infty/\|V^*\|_\infty$ , aggregated as mean $\pm$ 95\% CI over the $10$ runs.

\paragraph{Hyperparameter selection.}
We use a fixed (singleton) hyperparameter configuration across all environments and sizes.
In particular, \textsc{VarDE--Q-learning} uses $(\tau,\texttt{var\_floor})=(0.1,0.1)$.
\textsc{Q-UCB} uses confidence parameter $\delta=10^{-3}$.
\textsc{MF-BPI} uses $\bar k=1$ and ensemble size $50$.
\textsc{MDP-NaS}/\textsc{PS-MDP-NaS} use computation frequencies $200$ for both the greedy policy and allocation updates, with posterior sampling disabled/enabled respectively.
\textsc{O-BPI} uses $\bar k=1$ and update frequency $200$.

\end{document}